\pgfplotsset{compat=1.18} 
\theoremstyle{plain}
\newtheorem{theorem}{Theorem}[section]
\newtheorem{proposition}[theorem]{Proposition}
\newtheorem{lemma}[theorem]{Lemma}
\theoremstyle{definition}
\newtheorem{definition}[theorem]{Definition}
\newtheorem{assumption}[theorem]{Assumption}
\theoremstyle{remark}
\newtheorem{remark}[theorem]{Remark}
\crefname{axiom}{Axiom}{Axioms}
\crefname{assumption}{Assumption}{Assumptions}
\DeclareMathOperator*{\argmin}{arg\,min}
\newcommand{\acro}[1]{\textsc{#1}\xspace}
\newcommand{\FedGVI}{\acro{FedGVI}}
\newcommand{\FedAvg}{\acro{FedAvg}}
\newcommand{\FedPA}{\acro{FedPA}}
\newcommand{\FashionMNIST}{\acro{FashionMNIST}}
\newcommand{\MNIST}{\acro{MNIST}}
\newcommand{\betaPredBayes}{$\beta$--\acro{PredBayes}}
\newcommand{\BlackBox}{\rule{1.5ex}{1.5ex}}  %
\renewenvironment{proof}{\par\noindent{\bf Proof\ }}{\hfill\BlackBox\\[2mm]}
\newenvironment{proof}{\par\noindent{\bf Proof\ }}{\hfill\BlackBox\\[2mm]}
\newcommand{\mopen}{$\mathcal{M}$--open }
\newcommand{\N}{\mathcal{N}}
\newcommand{\RoT}[4][]{P_{#1}(#2,#3,#4)}
\newcommand{\ti}{t}
\newcommand{\timax}{T}
\newcommand{\arbitraryindex}{i}
\newcommand{\client}{m}
\newcommand{\maxclient}{M}
\newcommand{\otherclient}{k}
\newcommand{\server}{s}
\newcommand{\approxDist}{q}%
\newcommand{\priorDist}{\pi}
\newcommand{\likelihoodDist}{p}
\newcommand{\DGP}{\mathbb{P}}
\newcommand{\dimension}{d}
\newcommand{\Q}{\mathcal{Q}}%
\newcommand{\K}{\boldsymbol{K}}%
\newcommand{\dataspace}{\Xi}
\newcommand{\datasigmaalgebra}{\mathcal{X}}
\newcommand{\outputspace}{\Upsilon}
\newcommand{\outputsigmaalgebra}{\mathcal{Y}}
\newcommand{\parameterspace}{\Theta}
\newcommand{\PTheta}{\mathcal{P}(\parameterspace)}%
\newcommand{\trueloss}{L}
\newcommand{\calL}{\mathcal{L}}
\newcommand{\smallloss}{\ell}
\newcommand{\kullbackleibler}{KL}
\newcommand{\divergence}[1][]{D}
\newcommand{\normaliser}{Z}
\newcommand{\wKLparam}{w}
\newcommand{\weightedKL}[1][\wKLparam]{\frac{1}{#1}\divergence_{\kullbackleibler}}
\newcommand{\GBIparam}{\beta}
\newcommand{\dampingparam}{\tau}
\newcommand{\operatordist}{\xi}
\newcommand{\covA}{\Lambda}
\newcommand{\covB}{\Sigma}
\newcommand{\meanA}{\boldsymbol{\nu}}
\newcommand{\meanB}{\boldsymbol{\mu}}
\newcommand{\weightfctindex}[2][]{{w_{#1}^{#2}}}
\newcommand{\tempcovA}[2]{{\covA_{#1}^{#2}}}
\newcommand{\tempcovB}[2]{{\covB_{#1}^{#2}}}
\newcommand{\tempmeanA}[2]{{\meanA_{#1}^{#2}}}
\newcommand{\tempmeanB}[2]{{\meanB_{#1}^{#2}}}
\newcommand{\auxfctname}{\gamma}
\newcommand{\auxfct}[2]{\auxfctname_{#1}^{#2}(\bt)}
\newcommand{\natparam}{\boldsymbol{\eta}}
\newcommand{\suffiecientstas}{\boldsymbol{\phi}}
\newcommand{\lognormaliser}{A}
\newcommand{\refmeasure}{h}
\newcommand{\expfamDGPy}[1][]{\likelihoodDist_\bt(\y_{#1})}
\newcommand{\expfamDGP}[1][]{\likelihoodDist_\bt(\x_{#1})}
\newcommand{\tempq}{\approxDist}
\newcommand{\tempnormaliser}{\tilde{\normaliser}}
\newcommand{\bk}{{\boldsymbol{\kappa}}} %
\newcommand{\bt}{{\boldsymbol{\theta}}} %
\newcommand{\indata}{x}
\newcommand{\outdata}{y}
\newcommand{\x}{\mathbf{\indata}} 
\newcommand{\y}{\mathbf{\outdata}}
\newcommand{\datasize}{n}
\newcommand{\clientdatasize}{\datasize_\client}
\newcommand{\R}{\mathbb{R}}%
\newcommand{\E}{\mathbb{E}} %
\newcommand{\Eq}{\mathbb{E}_{\approxDist(\bt)}}%
\newcommand{\Ebq}[1][]{\mathbb{E}_{\approxDist(\bt)}\left[#1\right]}%
\newcommand{\transpose}{\top}
\newcommand{\jacobian}[1][\bk]{\nabla_{#1}}
\newcommand{\hessian}[1][\bk]{\nabla\nabla_{#1}}
\newcommand{\D}[3][]{\divergence_{#1}(#2 : #3)}%
\newcommand{\kl}[2][\approxDist]{\divergence_{\kullbackleibler}(#1 : #2)}
\newcommand{\wkl}[3][\wKLparam]{\weightedKL[#1](#2 : #3)}
\newcommand{\kllarge}[2][\approxDist]{\divergence_{\kullbackleibler}\left(#1 : #2\right)}
\newcommand{\prior}[1][]{\priorDist^{#1}(\bt)}
\newcommand{\q}[2][]{\approxDist_{#1}^{#2}(\bt)}
\newcommand{\nll}[2]{-\log \likelihoodDist(#1|#2)}
\newcommand{\cavityDist}[1]{\approxDist^{\backslash #1}}
\newcommand{\cavity}[1]{\approxDist^{\backslash #1}(\bt)}
\newcommand{\paramq}[2][]{\approxDist_{#1}^{#2}(\bt|\bk^{#2})}
\newcommand{\paramqapprox}[1][]{\approxDist^{#1}(\bt|\bk^{#1})}
\newcommand{\qtilde}[2][]{\tempq_{#1}^{#2}(\bt)}
\newcommand{\conditionalq}[3][]{\approxDist_{#2}^{#1}(\bt|#3)}%
\newcommand{\classificationloss}[1][]{\trueloss_{#1}(\y_{#1};\bt,\x_{#1})}
\newcommand{\classificationlosstime}[2][]{\trueloss_{#1}^{#2}(\y_{#1};\bt,\x_{#1})}
\newcommand{\sameclassificationloss}[1][]{\trueloss(\y_{#1};\bt,\x_{#1})}
\newcommand{\sameclassificationlossjoint}[2]{\trueloss(\y_{#1}^{#2};\bt,\x_{#1}^{#2})}
\newcommand{\likelihood}[1][]{\likelihoodDist (\y_{#1}|\bt,\x_{#1})}
\newcommand{\lossapproximation}[2]{\smallloss_#1^{#2}(\bt)}
\newcommand{\paramlossapproximation}[2][]{\smallloss_#2^{#1}(\bt|\bk^{#1})}
\newcommand{\clientupdate}[2]{\Delta_{#1}^{#2}(\bt)}
\newcommand{\semicolonq}[3]{\approxDist_{#1}^{#2}(\bt;#3)}
\newcommand{\semicolonloss}[3]{\trueloss_{#1}^{#2}(\bt;#3)}
\newcommand{\semicolonapproxloss}[3]{\smallloss_{#1}^{#2}(\bt;#3)}
\newcommand{\semicoloncavity}[3]{\approxDist_{#2}^{\backslash #1}(\bt;#3)}
\newcommand{\semicolonupdate}[3]{\Delta_{#1}^{#2}(\bt;#3)}
\newcommand{\CavityOperator}[2][]{\operatordist_{#1}[#2](\bt)}
\newcommand{\Z}[2][]{\normaliser^{#1}_{#2}}
\newcommand{\inverse}{^{-1}}
\newcommand{\localobjective}[2]{\mathrm{Obj}(#1,#2)}
\newcommand{\globalobjective}[1]{\mathrm{Obj}(#1)}
\newcommand{\minQ}{\argmin_{\approxDist\in\Q}}
\newcommand{\minS}{\argmin_{\approxDist\in\PTheta}}
\newcommand{\renyi}{R\'enyi }
\newcommand\inputpgf[2]{{
		\let\pgfimageWithoutPath\pgfimage
		\renewcommand{\pgfimage}[2][]{\pgfimageWithoutPath[##1]{#1/##2}}
		\let\includegraphicsWithoutPath\includegraphics
		\renewcommand{\includegraphics}[2][]{\includegraphicsWithoutPath[##1]{#1/##2}}
		\input{#1/#2}
}}
\definecolor{ibm0}{RGB}{255,176,0}
\definecolor{ibm1}{RGB}{254,97,0}
\definecolor{ibm2}{RGB}{220,38,127}
\definecolor{ibm3}{RGB}{120,94,240}
\definecolor{ibm4}{RGB}{100,143,255}
\icmltitlerunning{Federated Generalised Variational Inference: A Robust Probabilistic Federated Learning Framework}
\begin{document}

\twocolumn[
\icmltitle{Federated Generalised Variational Inference: \\A Robust Probabilistic Federated Learning Framework}

\icmlsetsymbol{equal}{*}

\begin{icmlauthorlist}
\icmlauthor{Terje Mildner}{wcomp}
\icmlauthor{Oliver Hamelijnck}{wcomp}
\icmlauthor{Paris Giampouras}{wcomp}
\icmlauthor{Theodoros Damoulas}{wcomp,wstats}
\end{icmlauthorlist}

\icmlaffiliation{wcomp}{University of Warwick, Department of Computer Science, Coventry, United Kingdom}
\icmlaffiliation{wstats}{University of Warwick, Department of Statistics, Coventry, United Kingdom}

\icmlcorrespondingauthor{Terje Mildner}{Terje.Mildner@warwick.ac.uk}
\icmlkeywords{Federated Learning, Probabilistic Machine Learning, Model Misspecification, Robustness, Generalised Variational Inference, ICML}

\vskip 0.3in
]

\printAffiliationsAndNotice{}  %

\begin{abstract}

We introduce \FedGVI, a probabilistic Federated Learning (FL) framework that is
robust to both prior and likelihood misspecification. \FedGVI addresses limitations in both frequentist and Bayesian FL by providing unbiased predictions under model misspecification, with calibrated uncertainty quantification. Our approach generalises previous FL approaches, specifically Partitioned  Variational Inference \citep{ashman2022}, by allowing robust and conjugate updates, decreasing computational complexity at the clients. We offer theoretical analysis in terms of fixed-point convergence, optimality of the cavity distribution, and provable robustness to likelihood misspecification. Further, we empirically demonstrate the effectiveness of \FedGVI in terms of improved robustness and predictive performance on multiple synthetic and real world classification data sets.

\end{abstract}

\section{Introduction}

Federated learning (FL) is a framework for the collaborative training of a global model by a collection of clients, without requiring proprietary data to be shared with a central server or other participating clients \citep{mcmahan2017}. This decentralised 
approach allows FL to be used on applications with strict data privacy constraints, such as in finance or healthcare \citep{kairouz2021}. However, due to the sensitive nature and complexity of these domains, both privacy and robustness to model misspecification are paramount.

The frequentist formulation of FL aims to minimise a global loss function by aggregating local gradients from clients. Early works include Federated Averaging \citep[\FedAvg,][]{mcmahan2017} which iterates between training clients locally and averaging updates on the server. This has sparked a large body of research on issues such as communication efficiency, data privacy, and data heterogeneity across clients \citep{hamer2020, malinovsky2020, reddi2021, chen2022, tenison2023, tziotis2023, li2024, demidovich2024}. There has been some work addressing robustness to adversarial clients \citep{allouah2024, bao2024} and data and system heterogeneity \citep{chen2022, zhao2023, heikillae2023}. However, these only provide point estimates, and do not allow principled uncertainty quantification, as required in many FL applications  \citep{jonker2024}.
In contrast, Bayesian FL approaches aim to update beliefs of a global model with data partitioned across clients. This largely builds on distributed inference methods such as the Bayesian Committee Machine \citep{tresp2000},  parallel MCMC \citep{ahn2014,mesquita2020}, or  Divide\&Conquer SMC \citep{chan2023}. Expectation Propagation \citep{minka2001b, vehtari2020} is naturally applicable to the distributed setting where local sites are iteratively refined.
This requires computing the cavity distribution that removes local sites from the current approximation. 
Partitioned Variational Inference \citep[PVI,][]{bui2018, ashman2022} takes this idea and proposes a distributed variational inference algorithm, which has been extended through MCMC \citep{guo2023} and Stochastic Gradient Langevin Dynamics (SGLD) \citep{mekkaoui2021}. Whilst these approaches quantify uncertainty, they are susceptible to model misspecification which can lead to inaccurate, overconfident predictions \citep{bernardo2000, bissiri2016,jeremias2022}. 

Current approaches to FL are inherently non-robust to model misspecification which leads to compromised performance and uncalibrated uncertainty quantification. We address these challenges by departing from the traditional Bayesian paradigm and propose a distributed Generalised Variational Inference framework that allows us to deal with model misspecification. In summary, our contributions are:

\begin{itemize}
    \item We introduce Federated Generalised Variational Inference (\FedGVI), a family of robust probabilistic algorithms for federated learning. 
    \item We prove that \FedGVI is robust to likelihood misspecification (\cref{thm:robustness}).
    \item  We demonstrate that \FedGVI generalises standard approaches such as PVI and \FedAvg (\cref{rem:pvi_rec,rem:fedavg}) and theoretically justify the use of the cavity distribution (\cref{thm:cavity}).
    \item We prove that, under suitable conditions, \FedGVI converges to Generalised Bayesian posteriors (\cref{col:GBI} and \cref{prop:conjugate}) that are computationally tractable.
    \item  We evaluate \FedGVI on a range of synthetic and real-world datasets, across multiple models, demonstrating improved robustness and predictive performance. 
\end{itemize}
In \cref{sec:prelim} we define model misspecification and recall methods that mitigate it in the non--distributed setting. \cref{sec:FedGVI} introduces our framework, which builds on these concepts and extends them to the federated setting.
We analyse the theoretical properties of \FedGVI in \cref{sec:theory}, including provable robustness. Finally, \cref{sec:Experiments} studies the empirical performance and gains of \FedGVI with multiple models and real world datasets such as Bayesian Neural Networks on  \MNIST and \FashionMNIST.
\footnote{Code to reproduce experiments can be found at \url{https://github.com/Terje-M/FedGVI}.}
\subsection{Related Work}

\paragraph{Robust Frequentist Federated Learning} In the frequentist setting, building on the seminal paper of \citet{mcmahan2017}, many approaches have aimed at mitigating challenges in FL, such as robustness to adversarial servers through secure aggregation \citep{chen2022}, to stragglers \citep{tziotis2023}, heterogenous data in out--of--distribution generalisation \citep{tenison2023}, heterogeneous and asynchronous clients \citep{fraboni2023}, or finding weaknesses in communications \citep{zhu2019, zhao2023}. 
More recently, %
work on robust server aggregations achieves robustness against Byzantine clients that aim to deteriorate model performance \citep{allouah2024, bao2024}. However these do not allow principled uncertainty quantification.

\paragraph{Federated Bayesian Inference}
Federated and distributed Bayesian methods aim to approximate the posterior as if it had been computed with the data of all clients available at a central server. 
Early work on distributed Bayesian inference includes Bayesian opinion pools \citep{genest1984, carvalho2023}, and the Bayesian Committee machine \citep{tresp2000}, which aim to find a consensus among a collection of Bayesian beliefs. 
Works that aim to operationalise this in the distributed setting, where data is split IID across clients, include Expectation Propagation \citep{minka2001b, opper2005, hasenclever2017, vehtari2020}, and consensus based Monte Carlo \citep{scott2016}.
In the Federated setting this assumption is often violated, as data is not split homogeneously and IID across participating devices. 
From this perspective, most approaches to Bayesian FL can be categorised into finding an approximate posterior through variational inference \citep{corinzia2021, ashman2022, kassab2022,heikillae2023, hassan2024, vedadi2024, swaroop2025},  Markov Chain Monte Carlo \citep{al-shedivat2021,mekkaoui2021,kotelevskii2022, guo2023, hasan2024}, Gaussian Processes \citep{achituve2021}, or directly learning a Bayesian neural network \citep{yurochkin2019, zhang2022}. Personalised or hierarchical Bayesian FL \citep{kotelevskii2022, zhang2022, kim2023, hassan2023, hassan2024, vedadi2024} allows for additional expressibility of client posteriors, especially under heterogeneity. 
However, none of these are inherently robust to contamination and model misspecification.

\paragraph{Robust Bayesian Inference} 
Although the existing Bayesian FL methods address some of the challenges of federated learning, such as communication constraints and data heterogeneity, they still aim to approximate the Bayesian posterior, which in itself is a flawed objective under model misspecification \citep{walker2013, berk1966, bernardo2000}.
In the global, non-federated case, several methods have been proposed to combat misspecification in the Bayesian setting \citep{gruenwald2012}, with the most promising direction being Generalised Bayesian Inference \citep{hooker2014, bissiri2016, ghosh2016a, jewson2018, miller2021, alquier2021,jeremias2022, matsubara2022}. In this work we capitalise on this front and bring robustness to model misspecification in the federated setting.

\section{Preliminaries}\label{sec:prelim}
\subsection{Notation and Model Misspecification}

Let $(\Omega, \mathcal{F}, P_0)$ be a probability space where $P_0$ is the data generating process, generating the observable random variables $X_1,...,X_n\equiv X_1^n$ taking values in the measurable space $(\dataspace, \datasigmaalgebra)$. 
Further, let $Y_1^n$ be observable random variables depending on $X_1^n$ respectively, taking values in $(\outputspace,\outputsigmaalgebra)$.
Denote their realisations $\{X_i=x_i,\, Y_i=y_i\}_{i=1}^n$, which are assumed to be partitioned across $M$ clients $\{\x_\client, \y_\client\}_{\client=1}^\maxclient$ each of size $n_m$. Consider hypothesis measures $P_\bt$ where $\bt$ takes values in $(\parameterspace,\mathcal{T})$, a measurable space, admitting densities $\likelihoodDist_\bt$. 
We study elements of $\PTheta$, the set of all probability measures on $(\parameterspace,\mathcal{T})$,
starting with prior $\Pi$ and updated to $Q$, dominated by some common measure $\mu$, and admitting densities $\priorDist$ and $\approxDist$ respectively. Naive Bayes updates $\prior$ to $\approxDist_B(\bt)$ through
\begin{equation}\label{eqn:Bayesian_posterior}
	\approxDist_B(\bt)= 
        \prior \, \textstyle{\prod_{\client=1}^M} \, 
        \likelihoodDist_\bt(\y_\client;\x_\client)
        \, 
     / \normaliser
\end{equation}
where $\normaliser=\int_\parameterspace \prod_{\client=1}^M \likelihoodDist_\bt(\y_\client;\x_\client) \, \Pi(d\bt)$ is the marginal likelihood.
Since we do not suppose that the prior $\Pi$, nor the likelihood $P_\bt$ are well specified, i.e. $P_0 \notin \PTheta$, we are in the \mopen setting \citep{bernardo2000}, the model misspecified, and the Bayesian posterior inappropriate.
\subsection{Model Misspecification}%

There are several different ways we can think about model misspecification under the \mopen assumption.

\paragraph{Prior Misspecification} 
The traditional Bayesian paradigm assumes that the prior encodes the best available judgement about $\bt$, which beyond simple settings, is never realised \citep{berger1985,jeremias2018}. Such misspecification is common; e.g. it is standard to use zero--mean Gaussian distributions on the weights of Bayesian Neural networks. This can have dire effects, for instance \citet{diaconis1986} demonstrate that multimodal priors in a location model can cause the posterior to not accumulate around $P_0$, even when the DGP is well specified, i.e. when $P_0\in\PTheta$. 
\paragraph{Likelihood Misspecification} 
One such example is where the hypothesis of interest is contaminated
, and an $\varepsilon$ fraction of the data (input and/or output variables) has some unknown data source. Formalising this we follow the definition of \citet{huber1964}:
\begin{definition}[Huber contamination] \label{def:huber}
Given an $\varepsilon\in(0,\frac12)$ and the uncontaminated distribution $P_\bt$ of inliers and some contaminating distribution $G$ of outliers, then $P_0$ is said to be an \textit{$\varepsilon$-corrupted version of $P_\bt$}; $P_0:=(1-\varepsilon)P_\bt+\varepsilon G$.
\end{definition} 
\begin{algorithm}[t]
\caption{\FedGVI \textsc{Server}}
\begin{algorithmic}[1]
    \STATE \textbf{Input:} $\prior$, $\Q$, $\divergence_\server$%
    \STATE \textbf{Define:} $\lossapproximation{\client}{(0)}=0$, $\lossapproximation{\server}{(0)}=0$, $\q[\server]{(0)}=\prior$
    \FOR{$\ti=1,...,\timax$}
    \FOR{$\client=1,...,\maxclient$ in parallel}
    \STATE$\clientupdate{\client}{(\ti)}\leftarrow$\texttt{CLIENT}($\q[\server]{(\ti-1)}$, $\Q$, $\client$)
    \ENDFOR
    \STATE Set $\lossapproximation{\server}{(\ti)}\leftarrow\lossapproximation{\server}{(\ti-1)}+\sum_{\client=1}^\maxclient\clientupdate{\client}{(\ti)}$
    \STATE Optimise $\q[\server]{(\ti)}$ according to \cref{eqn:server_optim}
    \ENDFOR
\end{algorithmic}
\label{alg:fedgvi_server}
\end{algorithm}
\subsection{Robust Bayesian Methods}
\paragraph{Generalised Bayesian Inference (GBI)} Instead of linking the parameter and data through likelihoods, \citet{bissiri2016} and \citet{miller2021} formalised a coherent Bayesian framework using loss functions leading to Gibbs posteriors \citep{alquier2016}. This was further utilised to deal with likelihood misspecification through robust losses, e.g \citet{jeremias2018}. Let $\trueloss:\parameterspace\times\dataspace\times\outputspace\rightarrow\R$ be such a loss, then the GBI posterior is given by:
\begin{equation}\label{eqn:GBI_posterior}
\approxDist_{\mathrm{GBI}}(\bt)=\prior\exp\left\{-\GBIparam\textstyle{\sum_{\client=1}^\maxclient}\sameclassificationloss[\client]\right\}/\normaliser
\end{equation}
with $\normaliser=\int_\parameterspace \exp\{-\GBIparam{\sum_{\client=1}^\maxclient}\sameclassificationloss[\client]\}\Pi(d\bt)$. Here, $\beta\in\mathbb{R}_{>0}$ is a learning rate parameter that determines how much weight we place on the observed data, similar to power posteriors in VI \citep{gruenwald2012, kallioinen2024}. This recovers $\approxDist_{B}(\bt)$ when the loss is the negative log--likelihood and $\GBIparam=1$.

\paragraph{Generalised Variational Inference (GVI)} 
In \citet{jeremias2022} GBI is generalised within a variational framework that explicitly accounts for prior and likelihood misspecification. Let  $\divergence:\PTheta\times\PTheta\rightarrow\R_+$ be a divergence then the GVI posteriors are defined as:
\begin{equation*}
\begin{aligned}
\approxDist_{\mathrm{GVI}}(\bt)=\argmin_{\approxDist\in\Q}\left\{\Ebq[{\trueloss(\y_1^\maxclient;\bt,\x_1^\maxclient)}] + \D{\approxDist}{\priorDist}\right\}
\end{aligned}
\end{equation*}
where $\Q\subset \PTheta$, making inference tractable. 
This allows for targeting a larger subspace of posteriors,
and through different divergences the effect of the prior can be controlled.

\section{Federated Generalised Variational Inference}\label{sec:FedGVI}
\begin{algorithm}[t]
\caption{\FedGVI \textsc{Client}}
\begin{algorithmic}[1]
    \STATE \textbf{Input:} $\q[\server]{(\ti-1)}$, $\Q$, $\{\x_\client,\y_\client\}$, $\trueloss_\client$, $\lossapproximation{\client}{(\ti-1)}$, $\divergence$
    \STATE Optimise $\cavity{\client}$ according to \cref{eqn:cavity}
    \STATE Optimise $\q[\client]{(\ti)}$ according to \cref{eqn:local_optim}
    \STATE Set $\clientupdate{\client}{(\ti)}$ according to \cref{eqn:local_update}
    \STATE Set $\lossapproximation{\client}{(\ti)}\leftarrow \lossapproximation{\client}{(\ti-1)}+ \clientupdate{\client}{(\ti)}$
    \STATE\textbf{return:} Communicate $\clientupdate{\client}{(\ti)}$ to \texttt{SERVER}
\end{algorithmic}
\label{alg:fedgvi_client}
\end{algorithm}%
\subsection{Methodology}\label{sec:method}
In this section, we present the proposed federated learning framework, named \FedGVI, that explicitly addresses likelihood and prior misspecification. We aim to learn a robust approximate posterior $q_{\server}(\bt)$ using partitioned observations across M clients. \FedGVI  iterates consist of two steps: a) sending of the current approximate posterior to each client, which is updated through a robust variational objective, and b) aggregating the updates on the server, resulting in a robust approximate posterior; summarised in \cref{alg:fedgvi_server,,alg:fedgvi_client}.
\paragraph{Initialisation} 
We set the initial server posterior as the prior, $\q[\server]{(0)}=\prior$, and the local and server loss approximations to be zero, $\lossapproximation{\client}{(0)}=0$ and $\lossapproximation{\server}{(0)}=0$ respectively; $\client$ denotes a specific client and 
$\server$ the server. 
\paragraph{Until Convergence} For $\ti=1,2,...,\timax$, we synchronously compute updates locally at each client, and accumulate these at the server to form the new global posterior $\q[\server]{(\ti)}$.

\paragraph{Client }

The client receives the current approximate posterior from the server. This will be used as the prior from which a client can compute an updated posterior using their local data. First, however the information of the client's data must be removed by computing the cavity distribution. The cavity distribution acts as the local prior incorporating all previous information from all other clients and is given by:
\begin{equation}\label{eqn:cavity}
	\cavity{\client}\propto \frac{\q[\server]{(\ti-1)}}{\exp\{-\lossapproximation{\client}{(\ti-1)}\}}
\end{equation}
The client then computes a robust local approximate posterior with it's local data set $\{\x_\client,\y_\client\}$ and it's loss function $\trueloss_\client^{(\ti)}(\cdot)$, which is regularised by the divergence, $\divergence$, and cavity distribution
\begin{equation}\label{eqn:local_optim}
	\begin{aligned}
		\q[\client]{(\ti)}=\minQ \Ebq[{\classificationlosstime[\client]{(\ti)}}]+\D{\approxDist}{{\cavityDist{\client}}}.
	\end{aligned}
\end{equation}
This GVI style objective allows the client to be robust to both likelihood misspecification as well as prior misspecification arising due to the cavity.
To update the global posterior at the server, the client computes the negative log ratio of the local and global posteriors.
In line with existing Bayesian FL \citep{ashman2022,guo2023}, we use a damping parameter $\dampingparam_\client\in (0,1]$, which is analogous to a learning rate as in frequentist FL, to compute the update:
\begin{equation}\label{eqn:local_update}
	\clientupdate{\client}{(\ti)}=-\dampingparam_\client\log\frac{\q[\client]{(\ti)}}{\q[\server]{(\ti-1)}} 
\end{equation}
The client stores $\lossapproximation{\client}{(\ti)}:= \lossapproximation{\client}{(\ti - 1)} +  \clientupdate{\client}{(\ti)}$ and communicates $\clientupdate{\client}{(\ti)}$ to the server.

\paragraph{Server}
The loss at the server is updated based on the received client updates,
\begin{equation}\label{eqn:server_update}
	\lossapproximation{\server}{(\ti)}=\lossapproximation{\server}{(\ti-1)}+ \textstyle{\sum_{\client=1}^\maxclient}\clientupdate{\client}{(\ti)}
\end{equation}
By only incorporating clients' updates that have changed we can trivially allow for batched and asynchronous scheduling of clients. The updated loss is then used to compute the new server posterior though a GVI optimisation procedure:
\begin{equation}\label{eqn:server_optim}
	\q[\server]{(\ti)}=\minQ \Ebq[{\lossapproximation{\server}{(\ti)}}]+\D[\server]{\approxDist}{\priorDist}
\end{equation}
This posterior and loss are passed back to the clients for further refinement at the next iteration until convergence.
\subsubsection{Hyperparameters}
\citet{ashman2022} set the damping parameter to $\dampingparam\propto\frac1\maxclient$ throughout their experiments. This turns out, see \cref{lem:damping}, to be a reasonable choice when $\dampingparam=\frac1\maxclient$ in combination with $\divergence_\server=\divergence_{\kullbackleibler}$ since this causes the posterior at the server to be a logarithmic opinion pool induced by an externally Bayesian pooling operator  \citep{genest1986}, ensuring stable convergence.
Other hyperparameters arising from the choice of losses and divergences are dependent on the expected amount of model misspecification.

\subsection{Robustness to Likelihood Misspecification}\label{sec:robust_model}
Within our framework we are free to choose the client side losses. We consider the Density--Power divergence based loss \citep{ghosh2016b}, often referred to as $\beta$--divergence loss $\mathcal{L}_\beta$, the $\gamma$--divergence based losses \citep{hung2018}, $\mathcal{L}_{\gamma}$, as well as a score matching loss, $\mathcal{L}_{SM}$, based on the Hyv{\"a}rinen divergence \citep{hyvarinen2005,altamirano2023}. In the classification setting, we consider the generalised cross--entropy loss
\begin{equation}
\mathcal{L}_{GCE}^{(\delta)}(y_i;\bt,x_i) = \frac{(1-p_\bt(y=y_i;x_i)^\delta)}{\delta}
\end{equation}
for some $\delta\in(0,1]$ \citep{zhang2018}. These losses are robust to misspecification because they have a finite supremum (see \cref{def:robust_loss}). %
It is important to highlight that GVI and \FedGVI may underperform when using robust losses in the case of correct likelihood specification; see \citet{jeremias2022}.
We can use a Sequential Monte Carlo sampler to estimate the $\beta$ or $\gamma$ hyperparameters in $\mathcal{L}_\beta$ and $\mathcal{L}_\gamma$ \citep{yonekura2023} or use cross validation to select optimal parameters \citep{altamirano2024}.

\subsection{Robustness to Prior Misspecification}\label{sec:divergences}

We mainly consider the weighted Kullback--Leiber divergence, $\weightedKL$, \cite{kullback1951}
\begin{equation*}
    {\weightedKL}(\approxDist:\priorDist):= \frac{1}{w} \, \Ebq[{\log\frac{\approxDist(\bt)}{\prior}}]
    ,
\end{equation*}
and the Alpha--\renyi divergence, $\divergence_{AR}^{(\alpha)}$,
\begin{equation*}
    \divergence_{AR}^{(\alpha)}(\approxDist:\priorDist):=\frac{1}{\alpha(\alpha-1)}\log\left(\E_{\prior}\left[\left(\frac{\approxDist(\bt)}{\prior}\right)^\alpha\right]\right).
\end{equation*}
As examined in \citet{jeremias2022}, $\divergence_{AR}^{(\alpha)}$ allows for different prior regularisation depending on how much we trust the prior by placing different weights on it. In future work it would be simple to explore other divergences such as the $f$--divergences, $\divergence_f$, \citep{amari2016, alquier2021}. 
Similarly to the losses, we can perform cross validation to select the $\alpha$ parameter, however as demonstrated in the ablation study (\cref{fig:ablation_study}) FedGVI performs favourably under a range of $\alpha$ (and $
\delta$) values.
\section{Theoretical Results}\label{sec:theory}
We now present a theoretical analysis of \FedGVI.
We begin by examining the relationship of \FedGVI with other FL algorithms while recovering some of them as special cases, we study the damping parameter, and examine the convergence behaviour of \FedGVI.
Then, we turn our attention on robustness to likelihood misspecification, where we first study \FedGVI as distributed GBI, from which we derive a theorem on the necessity of the cavity distribution.
Finally, we derive a result for computationally tractable and conjugate \FedGVI, enabling us to present the main theorem on bias--robustness of \FedGVI.

Since it is an open problem where global GVI posteriors converge to under arbitrary divergences, we often have to restrict ourselves to consider the server divergence to be the Kullback--Leibler divergence. This ensures that the posterior at the server will have the structure of a GBI posterior,
\begin{equation*}\label{eqn:posterior_form}
	\q[\server]{(\timax)}\propto \exp\left\{\textstyle{-\sum_{\client=1}^\maxclient}\lossapproximation{\client}{(\timax)}\right\}\prior
\end{equation*}
where we incorporate prior robustness and tractability through the approximate losses.
\subsection{Recovering Existing Methods as a Special Case}
By choosing specific divergences, loss functions, and variational families, we can recover existing methods as special cases of our framework, which we summarise in \cref{fig:fedgvi_relation_small}:

\begin{remark}\label{rem:pvi_rec}
	Choosing the Kullback--Leibler divergence and the negative log--likelihood as a loss function recovers the PVI algorithm of \citet{ashman2022}.
\end{remark}
\begin{remark}\label{rem:fedavg}
    When $\divergence=\divergence_\server=0$, and $\Q=\{\delta_{\hat{\bt}}(\bt): \hat{\bt}\in\parameterspace\}$, with $\delta_{\hat{\bt}}$ being the Dirac--delta measure at some element $\hat{\bt}$, we recover \FedAvg of \citet{mcmahan2017}.
\end{remark}
\begin{figure}[h]
\centering
\begin{tikzpicture}

\node[align=center] at (0,0) {\FedGVI\\
$L, D,\Q, \maxclient, D_\server$};
\node[align=center] at (2.5,-1.2) {VI\\
$-\log\likelihoodDist_\bt, D_{KL}, \Q,$\\$ \maxclient=1,D_\server=D_{KL}$};
\node[align=center] at (2.5,1.2) {PVI\\
$-\log\likelihoodDist_\bt, D_{KL}, \Q$,\\$M,D_\server=D_{KL}$};
\node[align=center] at (-2.5,-1.2) {ERM\\
$L, D=0, \{\delta_\bt\}$, \\$\maxclient=1, D_\server=0$};
\node[align=center] at (-2.5,1.2) {\FedAvg\\
$L, D=0, \{\delta_\bt\},$\\ $M,
D_\server=0$};

\draw[line width=0.75pt, ->] (-0.05,-.45) .. controls (-.05,-1.35) .. (-1.2,-1.35);
\draw[line width=0.75pt, ->] (0.05,-.45) .. controls (.05,-1.35) .. (1.2,-1.35);

\draw[line width=0.75pt, ->] (-0.05,.45) .. controls (-0.05,1.45) .. (-1.4,1.45);
\draw[line width=0.75pt, ->] (0.05,.45) .. controls (0.05,1.45) .. (1.4,1.45);

\draw[line width=0.75pt, <-] (-2.5,-.5) -- (-2.5,.5);
\draw[line width=0.75pt, <-] (2.5,-.5) -- (2.5,.5);

\end{tikzpicture}
\caption{We illustrate the relationship of \FedGVI---characterised by the loss $L$, the client divergence $D$, the variational family $\Q$, the number of clients $M$, and the divergence at the server $D_\server$---to Partitioned Variational Inference (PVI), Variational Inference (VI), Federated Averaging (\FedAvg), and Empirical Risk Minimisation (ERM).}
\label{fig:fedgvi_relation_small}
\end{figure}

\subsection{Damping as a Bayesian Logarithmic Opinion Pool}
Choosing the damping parameter to be $\dampingparam=1/\maxclient$ results in a logarithmic opinion pool. 
In fact choosing damping parameters such that all of them sum to unity also forms a valid logarithmic opinion pool \citep{genest1986}.
\begin{proposition}\label{lem:damping}
Assume $\divergence_\server=\divergence_{\kullbackleibler}$, and that $\sum_{\client}\dampingparam_\client=1$ where $\dampingparam_\client\ge0$ $\forall\client$, then the posterior at the server is an externally Bayesian logarithmic opinion pool of the form
\begin{equation*}
    \q[\server]{(\ti)}=\frac{\prod_{\client=1}^\maxclient\bigl(\q[\client]{(\ti)}\bigr)^{\dampingparam_\client}}{\int_\parameterspace\prod_{\client=1}^\maxclient\bigl(\q[\client]{(\ti)}\bigr)^{\dampingparam_\client} d\bt},\; \bt-a.e.
\end{equation*}
\end{proposition}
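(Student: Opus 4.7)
The plan is to exploit the closed form of the server update when $\divergence_\server = \divergence_{\kullbackleibler}$ and then use the constraint $\sum_\client \dampingparam_\client = 1$ to telescope a recursion. As a first step, I would observe that the minimiser of \cref{eqn:server_optim} admits a Gibbs representation: by the Donsker--Varadhan variational identity for the KL divergence, the $\Q = \PTheta$ optimum is
\begin{equation*}
\q[\server]{(\ti)}(\bt) \propto \prior \exp\bigl(-\lossapproximation{\server}{(\ti)}(\bt)\bigr),
\end{equation*}
and similarly for $\q[\server]{(\ti-1)}$, started from the base case $\lossapproximation{\server}{(0)} = 0$, $\q[\server]{(0)} = \prior$.

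Next, I would unfold the server loss using the update rule \cref{eqn:server_update} together with the definition of the client updates \cref{eqn:local_update}, giving
\begin{equation*}
\lossapproximation{\server}{(\ti)}(\bt) \;=\; \lossapproximation{\server}{(\ti-1)}(\bt) \;-\; \sum_{\client=1}^\maxclient \dampingparam_\client \log \frac{\q[\client]{(\ti)}(\bt)}{\q[\server]{(\ti-1)}(\bt)} .
\end{equation*}
Substituting this into the Gibbs form above and using the identity $\q[\server]{(\ti-1)}(\bt) \propto \prior \exp(-\lossapproximation{\server}{(\ti-1)}(\bt))$ yields
\begin{equation*}
\q[\server]{(\ti)}(\bt) \;\propto\; \q[\server]{(\ti-1)}(\bt)^{\,1-\sum_\client \dampingparam_\client} \prod_{\client=1}^\maxclient \bigl(\q[\client]{(\ti)}(\bt)\bigr)^{\dampingparam_\client} .
\end{equation*}
The hypothesis $\sum_\client \dampingparam_\client = 1$ is precisely what kills the $\q[\server]{(\ti-1)}$ factor, leaving $\q[\server]{(\ti)}(\bt) \propto \prod_\client \bigl(\q[\client]{(\ti)}(\bt)\bigr)^{\dampingparam_\client}$; dividing by the integral over $\parameterspace$ (which is finite $\bt$-a.e.\ with respect to the common dominating measure $\mu$) gives the claimed equality.

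For the externally Bayesian property I would appeal directly to the characterisation of \citet{genest1986}: a logarithmic opinion pool with weights summing to one commutes with Bayesian conditioning, since for any likelihood factor $\likelihoodDist$ the identity $\likelihoodDist^{\sum_\client \dampingparam_\client} = \likelihoodDist$ lets one pull $\likelihoodDist$ in or out of the product. I expect no real analytic obstacle; the main thing to be careful about is bookkeeping of normalising constants, so that the final equation is asserted $\bt$-a.e.\ (not merely up to proportionality) and that the base case $\q[\server]{(0)} = \prior$ initialises the recursion correctly so that the Gibbs structure propagates for every $\ti \geq 1$. The non-trivial content is entirely concentrated in the role of the constraint $\sum_\client \dampingparam_\client = 1$, whose effect is purely algebraic.
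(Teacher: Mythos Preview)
Your proposal is correct and follows essentially the same route as the paper's proof: derive the Gibbs form of the server posterior under $\divergence_\server=\divergence_{\kullbackleibler}$, expand $\lossapproximation{\server}{(\ti)}$ via \cref{eqn:server_update} and \cref{eqn:local_update}, and use $\sum_\client\dampingparam_\client=1$ to cancel the $\q[\server]{(\ti-1)}$ factor before normalising. Your presentation is slightly more compact (writing the exponent $1-\sum_\client\dampingparam_\client$ directly) and you are a bit more explicit about needing $\Q=\PTheta$ for the Gibbs minimiser and about the Genest characterisation, but the argument is the same.
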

See \cref{apx:damping} for the proof. 
This results provides a theoretical justification on the previously heuristic use  of the damping parameter  \citep[as used in PVI,][]{ashman2022}. Specifically it ensures  that  this selection of $\tau$ leads to a valid distribution and results in more stable convergence.

\subsection{Fixed Points of \FedGVI}
In this section we study the properties of \FedGVI posteriors when these converge to some fixed point. Specifically, we generalise the fixed point result of PVI \citep[Property 2.3]{ashman2022} to arbitrary losses.
\begin{proposition}\label{thm:fixed_points}
Let $\divergence_\server=\divergence_{\kullbackleibler}$, $\divergence=\weightedKL$, $\wKLparam>0$, and $\Q\subset\PTheta$, then if $\q[\server]{*}=\prior\exp\{-\lossapproximation{\server}{*}\}/{\Z{\approxDist^*}}$ such that $\forall\client\in[\maxclient]$, $\clientupdate{\client}{*}=0$, then $\q[\server]{*}$ is a local minimiser of the following GVI objective:
\begin{equation}\label{eqn:gvi_objective}
\Ebq[{\sum_{\client=1}^\maxclient\classificationloss[\client]}]+\wkl{\approxDist}{\priorDist}
\end{equation}
\end{proposition}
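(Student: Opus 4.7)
The plan is to deduce first-order stationarity of the global GVI objective \eqref{eqn:gvi_objective} at $\q[\server]{*}$ from the stationarity conditions of the client and server subproblems, and then appeal to convexity of the objective to conclude local minimality.

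First, I would use the fixed-point condition $\clientupdate{\client}{*}=0$ together with \eqref{eqn:local_update} to deduce $\q[\client]{*}=\q[\server]{*}$ for every $\client$, and combine this with $\q[\server]{*}\propto\prior\exp\{-\sum_{\otherclient}\lossapproximation{\otherclient}{*}\}$ in the cavity definition \eqref{eqn:cavity} to obtain $\cavityDist{\client}_{*}(\bt)\propto\prior\exp\{-\sum_{\otherclient\neq\client}\lossapproximation{\otherclient}{*}\}$. Expanding the weighted KL against this cavity then yields
\begin{equation*}
\wkl{\approxDist}{\cavityDist{\client}_{*}}=\wkl{\approxDist}{\priorDist}+\tfrac{1}{\wKLparam}\Ebq[{\textstyle\sum_{\otherclient\neq\client}\lossapproximation{\otherclient}{*}}]+C_\client,
\end{equation*}
with $C_\client$ independent of $\approxDist$, which re-expresses the client subproblem \eqref{eqn:local_optim} as a prior-regularised problem over $\Q$.

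Next I would write down the first-order optimality conditions at $\approxDist=\q[\server]{*}$ along any admissible direction $h$ tangent to $\Q$. The $\client$-th client subproblem gives
\begin{equation*}
\delta_h\!\left[\Ebq[{\classificationloss[\client]}]+\tfrac{1}{\wKLparam}\Ebq[{\textstyle\sum_{\otherclient\neq\client}\lossapproximation{\otherclient}{*}}]+\wkl{\approxDist}{\priorDist}\right]=0,
\end{equation*}
and the server subproblem \eqref{eqn:server_optim} gives $\delta_h[\Ebq[{\textstyle\sum_{\otherclient}\lossapproximation{\otherclient}{*}}]+\kl{\priorDist}]=0$. Summing the $\maxclient$ client conditions, using the identity $\sum_\client\sum_{\otherclient\neq\client}\lossapproximation{\otherclient}{*}=(\maxclient-1)\sum_{\otherclient}\lossapproximation{\otherclient}{*}$, and subtracting $(\maxclient-1)/\wKLparam$ copies of the server condition to cancel the $\lossapproximation{}{*}$--dependent terms would yield
\begin{equation*}
\delta_h\!\left[\Ebq[{\textstyle\sum_\client\classificationloss[\client]}]+\wkl{\approxDist}{\priorDist}\right]=0,
\end{equation*}
which is exactly first-order stationarity of \eqref{eqn:gvi_objective} at $\q[\server]{*}$.

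Finally, since $\Ebq[{\sum_\client\classificationloss[\client]}]$ is linear in $\approxDist$ and $\kl{\priorDist}$ is strictly convex in $\approxDist$, the objective \eqref{eqn:gvi_objective} is strictly convex, so any stationary point contained in a convex neighbourhood of $\q[\server]{*}$ in $\Q$ is a local minimum. The main obstacle I anticipate is ensuring that the set of admissible tangent directions agrees across the three optimality conditions and that the subtraction step is legitimate; this holds whenever $\Q$ is locally convex at $\q[\server]{*}$ (for instance an exponential family with natural parameters in an open convex set), which is the standard regime considered throughout the paper.
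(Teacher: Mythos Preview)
Your first-order argument is correct and takes a somewhat different route from the paper. The paper parametrises $\Q=\{q(\bt|\bk):\bk\in\K\}$ and shows by direct computation that the $\bk$-gradient of each local objective at $\bk^*$ reduces to $\nabla_{\bk}\int q(\bt|\bk)\bigl(\classificationloss[\client]-\lossapproximation{\client}{*}\bigr)\,\mu(d\bt)\big|_{\bk^*}$, using that $\log\bigl(q(\bt|\bk)/\q[\server]{*}\bigr)\big|_{\bk=\bk^*}=0$; it then sums over $\client$ and identifies the result with the gradient of the global objective. Your approach instead takes a linear combination of the $\maxclient$ client stationarity conditions together with the server stationarity condition (which is indeed available from the hypothesis $\q[\server]{*}=\prior\exp\{-\lossapproximation{\server}{*}\}/\Z{\approxDist^*}$). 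Both routes land on the same stationarity identity; yours is arguably more transparent.

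The gap is in your second-order step. You argue that \eqref{eqn:gvi_objective} is strictly convex in $\approxDist$, so a stationary point in a convex neighbourhood is a local minimum. But that convexity is in $\approxDist$ as an element of $\PTheta$ under \emph{mixtures}, whereas the optimisation is over the variational parameter $\bk$; the map $\bk\mapsto q(\bt|\bk)$ is generally not affine, so convexity in $\approxDist$ does not transfer to convexity in $\bk$. Your proposed remedy---$\Q$ locally convex, ``for instance an exponential family with natural parameters in an open convex set''---does not work: exponential families are not mixture-closed (a convex combination of two Gaussians is not Gaussian), so no neighbourhood of $\q[\server]{*}$ inside $\Q$ is convex as a subset of $\PTheta$. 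The paper closes this gap by a Hessian computation in $\bk$: it shows that $\sum_{\client}\nabla\nabla_{\bk}\,\localobjective{\client}{\approxDist_\server^*}\big|_{\bk^*}$ equals $\nabla\nabla_{\bk}\,\globalobjective{\approxDist_\server^*}\big|_{\bk^*}$, and since each summand is positive definite (because $\q[\server]{*}$ minimises each client subproblem by assumption), so is the global Hessian. You need an argument of this type, working directly in the $\bk$-parametrisation, to conclude local minimality for general $\Q$.
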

\begin{remark}
If the loss in \cref{eqn:gvi_objective} is convex, then a fixed point of \FedGVI is a global minimum of GVI.
\end{remark}

This illustrates that if \FedGVI converges, then the posterior is a (local) minimiser of the GVI objective. We refer to such distributions as fixed points. This recovers \citet[Theorem 1]{kassab2022} (which deals with the restricted case of $\Q=\PTheta$) with a novel proof; see \cref{apx:fixed}.

\subsection{Generalised Bayesian Inference}
As a consequence of \cref{thm:fixed_points} and \cref{rem:pvi_rec}, \FedGVI will recover the GBI posterior when $\Q=\PTheta$.
\begin{lemma}\label{col:GBI}
	Assuming $\Q=\PTheta$, $\divergence=\weightedKL[\GBIparam]$ with $\GBIparam>0$, $\divergence_\server=\divergence_{\kullbackleibler}$, and $\dampingparam=1$, then \FedGVI will recover the GBI posterior after the first iteration.
	\begin{flalign*}
		&\q[\server]{(1)}=\conditionalq{GBI}{{\{\x_\client,\y_\client\}_{\client=1}^\maxclient}}\\&=\exp\{-\GBIparam\textstyle{\sum_{\client=1}^\maxclient}{\sameclassificationloss[\client]}\}\prior /\normaliser
	\end{flalign*}
	This posterior is invariant under subsequent iterations of \FedGVI, having reached a fixed point.
    
    Moreover, for a damping rate $\dampingparam=1/\maxclient$, the posterior at the server converges pointwise a.e. in $\parameterspace$ to the GBI posterior, 
    \begin{equation*}
        \q[\server]{(\timax)}\overset{\timax\rightarrow\infty}{\longrightarrow}\conditionalq{GBI}{{\{\x_\client,\y_\client\}_{\client=1}^\maxclient}},\, \bt-a.e.
    \end{equation*}
\end{lemma}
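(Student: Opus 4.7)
The plan is to prove the three claims in sequence: (i) that one round of \FedGVI with $\tau=1$ already produces the GBI posterior, (ii) that this posterior is a fixed point, and (iii) that with damping $\tau=1/M$ iterative application converges pointwise to the same limit.

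For part (i), I would unroll the algorithm at $t=1$. Because $\lossapproximation{\client}{(0)}=0$ and $\q[\server]{(0)}=\prior$, the cavity distribution in \cref{eqn:cavity} reduces to $\cavity{\client}\propto\prior$ for every client. The client objective in \cref{eqn:local_optim}, with $\Q=\PTheta$ and $\divergence=\weightedKL[\GBIparam]$, is a weighted KL-regularised expected loss; a standard variational calculus argument (equivalently, the Donsker–Varadhan formula) gives the unconstrained minimiser $\q[\client]{(1)}(\bt)\propto\prior\exp\{-\GBIparam\sameclassificationloss[\client]\}$. Applying \cref{eqn:local_update} with $\dampingparam=1$ and then the server optimisation \cref{eqn:server_optim} with $\divergence_\server=\divergence_{\kullbackleibler}$, again using $\Q=\PTheta$, yields $\q[\server]{(1)}(\bt)\propto\prior\exp\{-\sum_\client\lossapproximation{\client}{(1)}\}\propto\prior\exp\{-\GBIparam\sum_\client\sameclassificationloss[\client]\}$, which up to normalisation is exactly the GBI posterior.

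For part (ii), I would verify that $\q[\server]{(1)}$ is a fixed point of the iteration by showing $\clientupdate{\client}{(2)}=0$ for every $\client$. Writing $\lossapproximation{\client}{(1)}=\GBIparam\sameclassificationloss[\client]$ modulo additive constants in $\bt$ (which are immaterial since they cancel in both the cavity and the update), the cavity at $t=2$ is $\cavity{\client}\propto\prior\exp\{-\GBIparam\sum_{\otherclient\neq\client}\sameclassificationloss[\otherclient]\}$, so solving the client optimisation once more recovers $\q[\client]{(2)}=\q[\server]{(1)}$ and thus $\clientupdate{\client}{(2)}=0$. Alternatively, one can invoke \cref{thm:fixed_points} directly: the density $\prior\exp\{-\GBIparam\sum_\client\sameclassificationloss[\client]\}/Z$ clearly has the required form, and a zero update is precisely the fixed-point condition there.

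For part (iii), I would set up the linear recursion for the client losses induced by $\dampingparam=1/M$. Let $\tilde{\smallloss}_\client^{(\ti)}(\bt):=\lossapproximation{\client}{(\ti)}(\bt)$, taken modulo additive constants in $\bt$; repeating the computation of (i) at a general iteration shows that the client optimiser satisfies $\log\q[\client]{(\ti)}=\log\cavity{\client}-\GBIparam\classificationloss[\client]+\text{const}$, so by \cref{eqn:local_update}
\begin{equation*}
\tilde{\smallloss}_\client^{(\ti)}=\bigl(1-\tfrac{1}{M}\bigr)\tilde{\smallloss}_\client^{(\ti-1)}+\tfrac{1}{M}\GBIparam\sameclassificationloss[\client].
\end{equation*}
Solving with $\tilde{\smallloss}_\client^{(0)}=0$ gives $\tilde{\smallloss}_\client^{(\ti)}=\GBIparam\sameclassificationloss[\client]\bigl[1-(1-1/M)^\ti\bigr]$, hence $\lossapproximation{\server}{(\ti)}\to\GBIparam\sum_\client\sameclassificationloss[\client]$ pointwise as $\ti\to\infty$. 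Since $\q[\server]{(\ti)}(\bt)=\prior\exp\{-\lossapproximation{\server}{(\ti)}(\bt)\}/\Z{\approxDist^{(\ti)}}$, pointwise convergence of the unnormalised densities follows immediately; the normaliser converges by dominated convergence once one notes that $\exp\{-\lossapproximation{\server}{(\ti)}\}$ is monotonically sandwiched between $1$ and $\exp\{-\GBIparam\sum_\client\sameclassificationloss[\client]\}$ up to a multiplicative constant, giving the claimed $\bt$-a.e. pointwise convergence.

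The main obstacle is the bookkeeping in part (iii): carefully tracking the additive constants absorbed into normalisers through the cavity/update cycle, and then supplying an integrable dominating function to justify convergence of the normalisers and therefore of the normalised posterior densities. The recursion itself is elementary once the constants are dealt with cleanly.
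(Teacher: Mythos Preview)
Your proposal is correct and follows essentially the same route as the paper's proof: unroll the algorithm at $t=1$ to obtain the GBI posterior, verify the fixed-point property by checking $\clientupdate{\client}{(2)}=0$, and for the damped case derive the same affine recursion $\lossapproximation{\client}{(\ti)}=(1-1/M)\lossapproximation{\client}{(\ti-1)}+(1/M)\GBIparam\sameclassificationloss[\client]$ and solve it as a geometric series. Your explicit mention of dominated convergence for the normalisers is a nice touch of rigour that the paper's proof leaves implicit, but otherwise the arguments coincide.
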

This result, proven in \cref{apx:GBI}, is the first step towards likelihood robustness. 
If we were able to find the GBI posterior efficiently with some robust loss, then the posterior would be robust and computable. 
Here however, the loss may not vary over different iterations of \FedGVI as in \cref{eqn:local_optim} and the normaliser may be intractable.
\subsection{The Cavity Distribution is Necessary}
By further investigating the relationship of \FedGVI with the GBI posterior, we can extend \cref{col:GBI} and derive a Theorem under which we are required to use the cavity distribution to regularise the client update.
This is in contrast to both PVI, where it's use is heuristically justified, and to other Bayesian FL approaches where the previous posterior is used instead.
For this we recall two natural assumptions that any such distribution must satisfy in a federated setting.
\begin{assumption}\label{axm:data} 
	No client can have access to the data set of another client.
\end{assumption}
\begin{assumption}\label{axm:update}
	Each client generates their update equivalently to other clients.
\end{assumption}
These assumptions combined with \cref{col:GBI} lead us to the necessity of the cavity distribution.
\begin{theorem}\label{thm:cavity}
	Let the assumptions be as in \cref{col:GBI} with $\dampingparam=1$, and assume that the \cref{axm:data,axm:update} are satisfied, then $(1.)$ holds if and only if $(2.)$ holds.
    $1.$ \FedGVI recovers the generalised Bayesian posterior $\approxDist_{\mathrm{GBI}}(\bt)$ which is invariant under further \FedGVI updates.

	$2.$ The cavity regularises the client optimisation problem.
\end{theorem}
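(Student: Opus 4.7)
The strategy is to prove the two implications separately, exploiting the closed-form GVI minimiser available when $\Q = \PTheta$ and $\divergence = \weightedKL[\GBIparam]$.

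The backward direction ($2 \Rightarrow 1$) is essentially the content of \cref{col:GBI}: if every client regularises its objective by its cavity $\cavity{\client}$, then a direct computation at $\ti = 1$ produces $\q[\server]{(1)} \propto \prior\exp\{-\GBIparam\sum_\client \sameclassificationloss[\client]\}$, and \cref{thm:fixed_points} delivers invariance.

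For the forward direction ($1 \Rightarrow 2$), I would replace $\cavity{\client}$ in the client objective by an arbitrary distribution $\operatordist_\client^{(\ti)}(\bt)$. The unconstrained GVI closed form gives $\q[\client]{(\ti)} \propto \operatordist_\client^{(\ti)}\exp\{-\GBIparam\classificationloss[\client]\}$. Substituting this into the update (with $\dampingparam=1$) and then into the server optimisation expresses $\q[\server]{(\ti)}$ explicitly as a product of the $\operatordist_\client^{(\ti)}$ times a common likelihood factor and powers of $\prior$. \cref{axm:update} then forces the $\operatordist_\client^{(\ti)}$ to be produced by a common functional rule across clients, and \cref{axm:data} restricts this rule to depend only on quantities available to client $\client$, namely $\q[\server]{(\ti-1)}$ and $\lossapproximation{\client}{(\ti-1)}$.

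At $\ti = 1$ we have $\lossapproximation{\client}{(0)} = 0$, so the common rule produces the same $\operatordist^{(1)}$ for every client. Matching $\q[\server]{(1)}$ with $\approxDist_{\mathrm{GBI}}(\bt)$ and cancelling the common exponential factors pointwise forces $\operatordist^{(1)} \propto \prior$, which is precisely $\cavity{\client}$ at the first iteration. For $\ti \geq 2$, invariance combined with the symmetry imposed by \cref{axm:update} yields $\q[\client]{(\ti)} = \q[\server]{(\ti-1)} = \approxDist_{\mathrm{GBI}}(\bt)$; inverting the closed form for $\q[\client]{(\ti)}$ then gives
\[
\operatordist_\client^{(\ti)}(\bt) \;\propto\; \frac{\approxDist_{\mathrm{GBI}}(\bt)}{\exp\{-\GBIparam\classificationloss[\client]\}} \;\propto\; \prior\exp\!\Bigl\{-\GBIparam\!\sum_{\otherclient\neq\client}\!\sameclassificationloss[\otherclient]\Bigr\},
\]
which coincides with $\cavity{\client}$ under the bookkeeping $\lossapproximation{\client}{(\ti-1)} \propto \GBIparam\sameclassificationloss[\client]$ induced by the previous iteration.

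The principal obstacle is making the forward direction airtight: one must rule out alternative regularisers whose aggregated contribution at the server reproduces the GBI posterior without each client using its cavity. Concretely, this reduces to showing that the map $\operatordist_\client^{(\ti)} \mapsto \q[\server]{(\ti)}$ is injective $\bt$-a.e.\ within the class of rules permitted by \cref{axm:data,axm:update}; the closed-form expression above makes the computation tractable, but the admissible class of regularisers must be formulated with care so that asymmetric cancellations between clients are excluded by \cref{axm:update}.
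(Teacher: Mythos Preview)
Your approach mirrors the paper's: both directions use the closed-form minimiser $\q[\client]{(\ti)}\propto\operatordist_\client^{(\ti)}\exp\{-\GBIparam\classificationloss[\client]\}$, substitute into the server aggregation, and invoke the two axioms to pin down $\operatordist_\client^{(\ti)}$. The paper works at the iteration \emph{after} GBI is attained (writing ``WLOG $\ti=1$''), derives from invariance that $\prod_\client\q[\client]{(2)}=K(\q[\server]{(1)})^\maxclient$, and hence $\prod_\client\operatordist_\client\propto\prod_\client\q[\server]{(1)}/\exp\{-\GBIparam\sameclassificationloss[\client]\}$.

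Two points deserve tightening. First, your $\ti=1$ argument assumes GBI is hit at the very first iteration; the paper's formulation only needs invariance at whatever iteration it is reached, so you should either justify that assumption or adopt the post-hit framing. Second---and this is the resolution of the obstacle you correctly flag---the factor-by-factor matching is driven by \cref{axm:data}, not \cref{axm:update}. \cref{axm:update} gives a common functional form $\operatordist_\client=\operatordist(\q[\server]{},\y_\client,\x_\client)$, but different clients still have different data, so the rule alone does not preclude asymmetric cancellations. What forces $\operatordist_\client\propto\q[\server]{(\ti-1)}/\exp\{-\GBIparam\sameclassificationloss[\client]\}$ is data locality: the factor $\exp\{\GBIparam\sameclassificationloss[\client]\}$ depends on $(\x_\client,\y_\client)$, and by \cref{axm:data} no other client's regulariser can supply it, so it must sit entirely inside $\operatordist_\client$. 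This is precisely how the paper closes the argument.
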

This provides a principled justification for the use of the cavity distribution, as defined in \cref{eqn:cavity}, in \FedGVI.
We provide the proof in \cref{apx:cavity}.

\subsection{Conjugate Client Updates}
Before we present our main result on provable robustness to likelihood misspecification, we first show that we can find a GBI posterior under specific losses in a computationally tractable manner.
Assuming that the data generating process has some exponential family distribution, where $\y\sim\expfamDGPy$,
\begin{equation*}
	\expfamDGPy=\exp\{\natparam(\bt)^\transpose\suffiecientstas(\y)-\lognormaliser(\natparam(\bt))+\refmeasure(\y)\},
\end{equation*} 
such that this is differentiable in $\y$, by using the weighted score matching loss of \citet{altamirano2023}, $\mathcal{L}_{SM}^w$, then client updates, using the weighted KL divergence locally, are available in closed form. 
If we further assume that our model is Gaussian, or has the form of a squared exponential, and that the natural parameters of the DGP are $\natparam(\bt)=\bt$, then the client approximation will have a conjugate form.
\begin{proposition}\label{prop:conjugate}
Assume that the hypothesis $\expfamDGPy$ has differentiable, exponential family distribution with $\natparam(\bt)=\bt$, $\trueloss_\client^{(\ti)}=\mathcal{L}_{SM}^{w_\client^\ti}$, and $\divergence=\frac1\GBIparam\divergence_{\kullbackleibler}$, and the variational family $\Q$ is the multivariate Gaussians, then the local posteriors at the clients are conjugate Gaussians. Moreover, \cref{eqn:server_optim} will have closed form if $\divergence_\server$ has closed form between Gaussian distributions.
\end{proposition}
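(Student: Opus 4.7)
The plan is to establish that under the stated hypotheses the weighted score matching loss $\mathcal{L}_{SM}^{\wKLparam}$ is quadratic in $\bt$, and then to combine this with the standard Gibbs optimiser of a KL-regularised variational problem to show by induction that every local posterior remains inside the Gaussian family with analytically available parameters. I also take the prior $\prior$ itself to be Gaussian, since otherwise the cavity could not be Gaussian at initialisation.

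First, substituting $\natparam(\bt)=\bt$ into $\expfamDGPy = \exp\{\bt^\transpose\suffiecientstas(\y)-\lognormaliser(\bt)+\refmeasure(\y)\}$ gives $\jacobian[\y]\log\expfamDGPy = (\jacobian[\y]\suffiecientstas(\y))^\transpose\bt + \jacobian[\y]\refmeasure(\y)$, in which the log-partition $\lognormaliser(\bt)$ vanishes because it does not depend on $\y$. Plugging this affine-in-$\bt$ score into the weighted Hyv\"arinen functional of \citet{altamirano2023} and applying integration by parts to the divergence correction yields an expression of the form
\begin{equation*}
    \mathcal{L}_{SM}^{\wKLparam}(\y;\bt) \;=\; \tfrac{1}{2}\bt^\transpose A_{\wKLparam}(\y)\,\bt + b_{\wKLparam}(\y)^\transpose\bt + c_{\wKLparam}(\y),
\end{equation*}
where $A_{\wKLparam}(\y)\succeq 0$ depends only on the data, the weight function, and $\suffiecientstas$.

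Next I would argue by induction on $\ti$ that $\cavity{\client}$ is Gaussian at every round. The base case is immediate from $\lossapproximation{\client}{(0)}=0$ and $\prior$ Gaussian. For the inductive step, the standard GVI/Gibbs argument shows that the unconstrained minimiser of \cref{eqn:local_optim} with $\divergence = \frac{1}{\GBIparam}\divergence_{\kullbackleibler}$ is
\begin{equation*}
    \q[\client]{(\ti)}(\bt) \propto \cavity{\client}\,\exp\{-\GBIparam\,\mathcal{L}_{SM}^{\wKLparam}(\y_\client;\bt)\}.
\end{equation*}
This is a product of a Gaussian cavity and the exponential of a positive semidefinite quadratic form in $\bt$, hence itself Gaussian; since it already lies in $\Q$, it also solves the constrained problem, with explicit mean and covariance obtained by completing the square. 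The resulting update $\clientupdate{\client}{(\ti)} = -\dampingparam_\client\log(\q[\client]{(\ti)}/\q[\server]{(\ti-1)})$ is the difference of two Gaussian log-densities, therefore quadratic in $\bt$, so that $\lossapproximation{\client}{(\ti)}$ and $\lossapproximation{\server}{(\ti)}$ remain quadratic and the next cavity remains Gaussian, closing the induction.

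For the server claim, since $\lossapproximation{\server}{(\ti)}$ is quadratic in $\bt$, the expectation $\Ebq[{\lossapproximation{\server}{(\ti)}}]$ is a closed-form function of the mean and covariance of any Gaussian $\approxDist\in\Q$; whenever $\divergence_\server$ is available in closed form between Gaussians—as for $\divergence_{\kullbackleibler}$, $\divergence_{AR}^{(\alpha)}$, and several $f$-divergences—the server objective in \cref{eqn:server_optim} is an explicit function of the variational parameters, and for $\divergence_\server=\divergence_{\kullbackleibler}$ it collapses to a Gaussian with analytic parameters by completing the square. The main obstacle will be carefully tracking the boundary terms produced by integration by parts in the weighted Hyv\"arinen functional and verifying that $A_{\wKLparam}(\y)\succeq 0$, as this is what guarantees that the unconstrained Gibbs minimiser defines a proper Gaussian density rather than an ill-posed quadratic exponential.
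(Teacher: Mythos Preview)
Your proposal is correct and follows essentially the same route as the paper: show the weighted score matching loss is quadratic in $\bt$ when $\natparam(\bt)=\bt$, observe that the KL-regularised client problem has Gibbs minimiser $\q[\client]{(\ti)}\propto\cavity{\client}\exp\{-\GBIparam\,\mathcal{L}_{SM}^{w}\}$, and complete the square to obtain a Gaussian with explicit parameters. Two minor remarks. First, your mention of ``integration by parts to the divergence correction'' is unnecessary here---the empirical weighted Hyv\"arinen loss of \citet{altamirano2023} already has the $\nabla\cdot(\cdot)$ form, and the paper simply expands $\|w^\transpose\jacobian[\y]\log p_\bt\|_2^2$ and $\nabla\cdot(ww^\transpose\jacobian[\y]\log p_\bt)$ directly to read off the quadratic and linear coefficients; no boundary terms arise. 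Second, your induction on $\ti$ to propagate Gaussianity of the cavity is a welcome addition that the paper leaves implicit (it just writes ``if the local cavity distribution has the form\ldots''), so your version is slightly more careful on that point. Your concern about $A_{w}\succeq 0$ is legitimate but easy: the quadratic coefficient is an empirical average of terms $\jacobian\suffiecientstas\,ww^\transpose(\jacobian\suffiecientstas)^\transpose$, each of which is positive semidefinite.
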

See \cref{apx:conjugate} for the proof.
The loss may now depend on the client and iteration $\ti$. 
Most exponential family distributions satisfy the conditions of the proposition, and there are several divergences that allow closed form expressions between Gaussians, such as the Alpha--\renyi, or the $\alpha,\beta,\gamma$--divergences of \citet{chichocki2010}. Further, this enables the use of intractable likelihood models.
\subsection{Provable Robustness to Outliers}
For a robust loss function at the clients, and using the weighted KL divergence at the clients and the KL divergence at the server, guarantees that after $\timax$ iterations, the posterior computed at the server will also be robust to outliers. 
This means we can achieve robustness at the server by leveraging the robust losses that were derived for GVI.
In this, we mean robustness as defined by \citet{ghosh2016a} and further developed in \citet{matsubara2022}.
We define the empirical DGP of a client as $\DGP_{n_\client}:=\frac{1}{n_\client}\sum_{i=1}^{n_\client}\delta_{\x_i}$, and of the entire data set as $\DGP_{n}:=\frac{1}{n}\sum_{\client=1}^{\maxclient}n_\client\DGP_{n_\client}$. 
When this is contaminated by some $\varepsilon$ fraction of data centred at some adversarially chosen data point $z\in\dataspace$, the misspecified DGP is defined as $\DGP_{n,\varepsilon,z}:=(1-\varepsilon)\DGP_n+\varepsilon\delta_z$.
\begin{definition}
\label{def:robust_loss}
We say that a loss $\trueloss_\client^{(\ti)}(\bt;\DGP_{n_\client,\varepsilon,z})$, w.r.t. some prior distribution $\prior$, is robust to outliers, if the following hold:
\begin{align*}
    1.&\quad \sup_{z\in\dataspace}\left|\frac{d}{d\varepsilon}\trueloss_\client^{(\ti)}(\bt;\DGP_{n_\client,\varepsilon,z})\big|_{\varepsilon=0}\right|\le\auxfct{{(\client)}}{(\ti)},\\
    2.&\quad \sup_{\bt\in\parameterspace}\prior\auxfct{{(\client)}}{(\ti)}<\infty, \;\mathrm{and}\\
    3.&\quad\int_\parameterspace\prior\auxfct{{(\client)}}{(\ti)}\mu(d\bt)<\infty
\end{align*}
\end{definition}
These conditions ensure that the influence of arbitrary contamination on the local posterior is not arbitrarily bad. 
In particular the auxiliary function $\gamma_m^{(t)}$ ensures that the influence of an adversarial data point $z$ on the posterior over infinitesimal contaminations, $\frac d{d\epsilon}q_m^{(t)}(\theta;\mathbb{P}_{n_m,\epsilon, z})|_{\epsilon=0}$, are finite over all $\theta$ and $z$. Condition 2 ensures the loss increases slowly enough for the local posterior to concentrate around the data, and condition 3 ensures the resulting posterior will be normalisable.
\begin{theorem}\label{thm:robustness}
	Let $\divergence_\server=\divergence_{\kullbackleibler}$, $\divergence=\weightedKL$, $\Q=\PTheta$, further assume that the prior is upper bounded and the loss is lower bounded, then if $\forall\ti\in[\timax]$ and $\forall\client\in[\maxclient]$ $\trueloss_\client^{(\ti)}(\bt;\DGP_{n_\client,\varepsilon,z})$ is robust, then the posterior generated by \FedGVI will be robust to outliers.%
\end{theorem}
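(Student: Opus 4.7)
The plan is to first exploit the assumptions $\Q = \PTheta$, $\divergence = \weightedKL$, and $\divergence_\server = \divergence_{\kullbackleibler}$ to obtain a closed form for $\q[\server]{(\timax)}$. Under these choices, the unconstrained minimiser of \cref{eqn:local_optim} is the Gibbs density $\q[\client]{(\ti)}(\bt) \propto \cavity{\client}\exp\{-\wKLparam\, \trueloss_\client^{(\ti)}(\bt)\}$, and unrolling \cref{eqn:local_update,eqn:server_update,eqn:server_optim} as in the proof of \cref{col:GBI} yields the Gibbs structure
\begin{equation*}
\q[\server]{(\timax)}(\bt) \;\propto\; \prior\exp\Bigl\{-{\textstyle\sum_{\client=1}^\maxclient} \lossapproximation{\client}{(\timax)}(\bt)\Bigr\},
\end{equation*}
where the recursion $\lossapproximation{\client}{(\ti)}=(1-\dampingparam_\client)\lossapproximation{\client}{(\ti-1)}+\dampingparam_\client\wKLparam\,\trueloss_\client^{(\ti)}+\mathrm{const}$ shows that each accumulated loss $\lossapproximation{\client}{(\timax)}$ is a bounded affine combination of the individual client losses $\trueloss_\client^{(\ti)}$, $\ti\le\timax$, with non-negative coefficients summing to at most $\wKLparam$.

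Next, since differentiation in $\varepsilon$ is linear, the three robustness conditions of \cref{def:robust_loss} transfer from each $\trueloss_\client^{(\ti)}(\bt;\DGP_{n_\client,\varepsilon,z})$ to $\lossapproximation{\client}{(\timax)}(\bt;\DGP_{n_\client,\varepsilon,z})$: there is an auxiliary function $\tilde{\auxfctname}_\client^{(\timax)}(\bt)$, obtained as the same affine combination of the $\auxfct{\client}{(\ti)}$'s, satisfying $\sup_z\bigl|\tfrac d{d\varepsilon}\lossapproximation{\client}{(\timax)}\bigr|_{\varepsilon=0}\bigr|\le \tilde{\auxfctname}_\client^{(\timax)}(\bt)$, $\sup_{\bt}\prior\tilde{\auxfctname}_\client^{(\timax)}(\bt)<\infty$, and $\int\prior\tilde{\auxfctname}_\client^{(\timax)}(\bt)\,\mu(d\bt)<\infty$. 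The upper bound on $\prior$ and the lower bound on the losses jointly imply that $\q[\server]{(\timax)}(\bt)\le C\,\prior$ for some finite $C$, ensuring the normalising constant is finite and strictly positive at $\varepsilon=0$, so the density is well defined and $\varepsilon$-differentiable there.

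Finally, I would apply the influence-function argument of \citet{ghosh2016a,matsubara2022}: differentiating the Gibbs posterior under the integral sign with the quotient rule yields
\begin{equation*}
\tfrac{d}{d\varepsilon}\q[\server]{(\timax)}(\bt)\Big|_{\varepsilon=0}=\q[\server]{(\timax)}(\bt)\Bigl[\Eq\bigl[{\textstyle\sum_\client}\tfrac d{d\varepsilon}\lossapproximation{\client}{(\timax)}\bigr]-{\textstyle\sum_\client}\tfrac d{d\varepsilon}\lossapproximation{\client}{(\timax)}\Bigr]\Big|_{\varepsilon=0}.
\end{equation*}
Taking absolute values, then the supremum over $z\in\dataspace$, and finally integrating against $\mu$, the bound $\q[\server]{(\timax)}\le C\prior$ together with the three conditions on $\tilde{\auxfctname}_\client^{(\timax)}$ controls both the expectation term and the pointwise term, so the total variation norm of $\tfrac d{d\varepsilon}\q[\server]{(\timax)}|_{\varepsilon=0}$ is finite uniformly in $z$, which is the definition of robustness.

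The main obstacle will be the bookkeeping of how the cavity step \cref{eqn:cavity} propagates contamination across iterations: each round mixes loss contributions between clients through the cavity, and one must check that the coefficients in $\lossapproximation{\client}{(\timax)}$ neither blow up with $\timax$ nor allow the contamination at iteration $\ti'$ to feed back into client $\client$'s own loss at iteration $\ti>\ti'$ in an unbounded way. Restricting to dampings with $\sum_\client\dampingparam_\client\le 1$, as justified by \cref{lem:damping}, yields the clean telescoping bound $\sum_{\ti}\alpha_{\client,\ti}\le\wKLparam$ required for conditions $2$ and $3$ of \cref{def:robust_loss} to survive aggregation.
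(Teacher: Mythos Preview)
Your approach is correct and genuinely different from the paper's. The paper proves the result by induction on $\ti$: it writes $\lossapproximation{\server}{(\ti)}$ as $\sum_\client \wKLparam\clientdatasize\,\trueloss_\client^{(\ti)}+\sum_\client\sum_{\arbitraryindex\le\ti}\log\Z[(\arbitraryindex)]{\client}$, then at each inductive step computes $\tfrac d{d\varepsilon}\log\Z[(\ti)]{\client}|_{\varepsilon=0}$ explicitly, expands it in terms of the derivative of the previous server posterior, and chains the bounds back to $\ti=1$. Your route is shorter: you recognise that the posterior is unchanged if one strips the $\bt$-independent additive part of $\sum_\client\lossapproximation{\client}{(\timax)}$, so that the ``effective'' server loss is just a convex combination $\sum_\client\sum_\ti\alpha_{\client,\ti}\,\trueloss_\client^{(\ti)}$, and then invoke the Ghosh--Basu/Matsubara lemma once on that loss. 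This avoids the entire inductive bookkeeping of normalisers, and as a bonus it handles arbitrary $\dampingparam_\client\in(0,1]$, whereas the paper's written proof tacitly sets $\dampingparam_\client=1$ in the derivation of $\clientupdate{\client}{(\ti)}$.

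Your last paragraph overstates the difficulty. The cavity does not propagate contamination into the $\bt$-dependent part of another client's accumulated loss: the recursion $\lossapproximation{\client}{(\ti)}=(1-\dampingparam_\client)\lossapproximation{\client}{(\ti-1)}+\dampingparam_\client\wKLparam\,\trueloss_\client^{(\ti)}+c_\ti$ is purely client-local in its $\bt$-dependent terms; cross-client coupling enters only through the constants $c_\ti$ (the cavity and client normalisers), and these cancel in $\Eq[\tfrac d{d\varepsilon}\lossapproximation{\server}{(\timax)}]-\tfrac d{d\varepsilon}\lossapproximation{\server}{(\timax)}$. Consequently the coefficient bound $\sum_\ti\alpha_{\client,\ti}=\wKLparam(1-(1-\dampingparam_\client)^{\timax})\le\wKLparam$ holds for each client separately and needs no restriction like $\sum_\client\dampingparam_\client\le1$; that condition is irrelevant here. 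One small correction at the end: robustness in the paper's sense is $\sup_{\bt}\sup_{z}|\mathrm{PIF}|<\infty$, a pointwise bound, not a total-variation bound, so your final step should stop at the supremum rather than ``integrating against~$\mu$''.
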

The proof is in \cref{apx:provablerobustness}. This result together with \cref{prop:conjugate} is significant as we have robustness under intractable optimisation, \emph{and} we can choose a provably robust, conjugate loss to generate robust \FedGVI posteriors, which are then computationally efficient to compute.

\section{Experiments}\label{sec:Experiments}
We evaluate \FedGVI against several other methods, specifically PVI \citep{ashman2022}, \FedAvg \citep{mcmahan2017}, the nonparametric DSVGD \citep{kassab2022}, the distributed MCMC based DSGLD \citep{ahn2014}, federated MCMC based \FedPA \citep{al-shedivat2021}, and the one shot BCM based approach \betaPredBayes \citep{hasan2024}. We provide further details about experiments in \cref{apx:additional_exp_details}. %

\subsection{1D Clutter Problem}
\begin{figure}[h]
	\centering
	\input{{./figs/like_misspec.pgf}}
	\caption{Robustness to outliers can be achieved through varying losses with \FedGVI, while traditional Bayesian methods fail. %
    }
	\label{fig:loc_misspec}
\end{figure}
We first examine the effect of misspecified likelihoods through the well known clutter problem \citep{minka2001b}. We generate 100 observations from a Gaussian location model that is contaminated through \cref{def:huber} with $\varepsilon=0.25$ Gaussian noise. The aim is to infer the location parameter $\bt$ of the uncontaminated data. We compare \FedGVI with both $\mathcal{L}_\beta$ and $\mathcal{L}_{SM}$ vs PVI with and without misspecification. We also provide the corresponding MLE results. See \cref{fig:loc_misspec}. Under misspecification both the MLE and PVI fail to recover the true $\bt$, whereas \FedGVI can easily handle different levels of contamination.%

\subsection{Influence Function}

\begin{figure}[h]
	\centering
	\input{{./figs/if.pgf}}
	\caption{We plot the influence of a single outlier on the server posterior. PVI is not robust to likelihood misspecification through outliers, because it uses the negative log--likelihood (NLL). 
    }
	\label{fig:if_pvi}
\end{figure}
To demonstrate robustness to likelihood misspecification as in \cref{thm:robustness}, we consider the influence of a single outlier at one of seven clients on the server posterior. \cref{fig:if_pvi} demonstrates that the negative log likelihood is not robust in the federated setting, whereas different robust divergence based losses allow only limited influence of outliers on the posterior. We plot this as the divergence between the posterior, had we observed the outlier value at the true mean, against the posteriors that have the outlier be farther from the true mean, using the Fisher--Rao distance \citep{nielsen2023}.

\subsection{2D Misspecified Logistic Regression}
\begin{figure}[th]
	\centering
	\centerline{\includegraphics{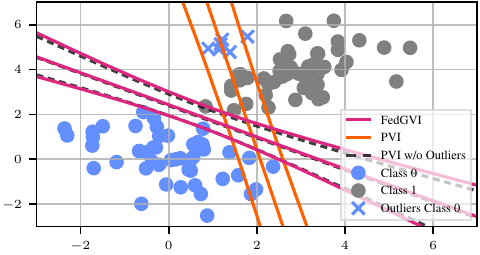}}
	\caption{Logistic Regression decision boundaries (0.2, 0.5, 0.8) for PVI without outliers, \textcolor{ibm1}{PVI with misspecification}, and \textcolor{ibm2}{\FedGVI with misspecification}.
    The synthetic data set is split homogeneously across 5 clients where PVI negatively skews the decision boundary, while \FedGVI does not. }
	\label{fig:synth_log_reg}
\end{figure}
We next consider a 2D logistic regression example where we generate 100 linearly separable  samples from a Gaussian mixture distribution. We inject outliers generated by a third Gaussian distribution and assign them to one of the classes so that the data is no longer linearly separable. 
We compare \FedGVI with $\mathcal{L}_\beta^{(0.7)}$ and $\divergence_{AR}^{(1.5)}$ against PVI, both with 5 clients. Again, the target is given by PVI only trained on the uncontaminated data. As expected PVI is severely impacted by outliers, whereas \FedGVI is robust to them and closely recovers the target posterior.

\subsection{Real-World Cover Type Dataset} 

\begin{figure}[h]
	\centering
	\input{{./figs/log_reg_fedgvi_competing.pgf}}
	\caption{Results on the  \textsc{Covertype} data set.
		We place a Gaussian distribution over the weights and average over 10 different train/test splits; see \cref{apx:additional_exp_details} for details.}
	\label{fig:logreg_1}
\end{figure}

In this experiment we follow the experimental setup of \citet{kassab2022} and average accuracy over 10 random 80/20 train-test splits, where
the training data is split homogeneously across 2 clients. We do not add any label contamination. The results are plotted in \cref{fig:logreg_1}. The non-robust methods all eventually achieve similar accuracy, however \FedGVI is able to outperform all competing methods, which we argue is due to \FedGVI putting less weight on data points that are less likely to belong to the class. 

\subsection{Bayesian Neural Networks on \MNIST and \FashionMNIST}

\begin{table}[h]
\caption{Classification accuracy (highest in bold) on uncontaminated test data after training on 10\% contaminated \MNIST data. We report the best performance across all server iterations.}
\label{tab:bnn}
\begin{center}
\begin{small}
\begin{sc}
\centering
\begin{tabular}{ccc}
\toprule
\multicolumn{1}{c}{\multirow{2}{*}{Model}} & \multicolumn{2}{c}{Accuracy + Std.}  \\
 \cmidrule(lr){2-3}
  & 10 Clients & 3 Clients\\
\midrule
\FedAvg    & 96.64$\pm$ 0.07& 96.34 $\pm$ 0.20\\
\FedPA & 94.25$\pm$ 0.39& 95.31$\pm$ 0.35\\
\betaPredBayes    & 94.90$\pm$ 0.08& 96.73$\pm$ 0.08\\
PVI    & 95.56$\pm$ 0.18& 96.68$\pm$ 0.07\\
\FedGVI $\divergence_{AR}$     & 96.36$\pm$ 0.09& 97.13 $\pm$ 0.13\\
\FedGVI $\trueloss_{GCE}$      & 97.06$\pm$ 0.03& 98.04 $\pm$ 0.07\\
\FedGVI $\divergence_{AR}$+$\trueloss_{GCE}$    & \textbf{97.50$\pm$ 0.07}& \textbf{98.13$\pm$ 0.08}\\
\midrule
VI (1 Client) & \multicolumn{2}{c}{(96.96$\pm$ 0.17)}\\
GVI (1 Client) & \multicolumn{2}{c}{(\textbf{98.13$\pm$ 0.07})}\\
\bottomrule
\end{tabular}
\end{sc}
\end{small}
\end{center}
\vskip -0.1in
\end{table}

\begin{figure}[t]
    \centering
    \graphicspath{{./figs_rebuttal/}}
    \input{{./figs_rebuttal/ablation_study.pgf}}
    \caption{An ablation study on the hyperparameters of \FedGVI with $\mathcal{L}_{GCE}^{(\delta)}$ and $D_{AR}^{(\alpha)}$. We plot the maximum results achieved as percentage errors on uncontaminated test data after training 5 clients on 10\% contaminated data.}
    \label{fig:ablation_study}
\end{figure}

\begin{table}
\caption{Classification accuracy (highest in bold) on uncontaminated test data after training on different amounts of contaminated \FashionMNIST data. For \FedGVI we have fixed $\alpha=2.5$ for the $\alpha-$\renyi divergence. Each Method has data split homogeneously across 3 Clients. We report the best performance during all server iterations.}
\label{tab:contamination}
\begin{center}
\scriptsize
\begin{sc}
\centering
\begin{tabular}{ccccc}%
\toprule
\multicolumn{1}{c}{\multirow{2}{*}{Model}} & \multicolumn{4}{c}{Contamination}  \\
 \cmidrule(lr){2-5}  
  & 0\% & 10\% & 20\% & 40\% \\
\midrule
\FedAvg & 85.7$\pm$0.5 & 79.0$\pm$1.9 & 71.2$\pm$1.5 & 49.0$\pm$6.5\\
\FedPA & 88.1$\pm$0.3 & 87.4$\pm$0.2 & 86.5$\pm$0.2 & 85.4$\pm$0.5\\
\betaPredBayes & 87.6$\pm$0.1 & 87.2$\pm$0.1 & 86.8$\pm$0.1 & 85.8$\pm$0.1\\
PVI    & 86.2$\pm$0.2 & 85.1$\pm$0.1 & 84.4$\pm$0.1 & 82.8$\pm$0.1\\
\FedGVI $\delta=0.0$     & 87.1$\pm$0.1 & 86.2$\pm$0.2 & 85.6$\pm$0.1 & 83.8$\pm$0.1\\
\FedGVI $\delta=0.4$ & {88.7$\pm$0.2} & \textbf{88.6$\pm$0.1} & 87.0$\pm$0.4 & 78.1$\pm$0.4\\
\FedGVI $\delta=0.5$ & \textbf{89.0$\pm$0.2} & {88.6$\pm$0.2} & \textbf{88.4$\pm$0.2} & 85.1$\pm$0.7\\
\FedGVI $\delta=0.8$      & 88.6$\pm$0.0 & 88.4$\pm$0.1 & 88.0$\pm$0.0 & \textbf{87.2$\pm$0.1}\\
\FedGVI $\delta=1.0$    & 88.1$\pm$0.1 & 87.8$\pm$0.1 & 87.5$\pm$0.2 & 86.0$\pm$0.3\\
\bottomrule
\end{tabular}
\end{sc}
\end{center}
\vskip -0.1in
\end{table}

\begin{figure*}[t]
\centering
\subfigure{\input{{./figs/bnn_results_val_acc_10_clients.pgf}}}\hfill
\subfigure{\input{{./figs/bnn_results_val_nll_10_clients.pgf}}}\\
\subfigure{\input{{./figs/bnn_results_val_acc_3_clients.pgf}}}\hfill
\subfigure{\input{{./figs/bnn_results_val_nll_3_clients.pgf}}}\\
\caption{Accuracy (\% Error) and Negative Log Likelihood (NLL) results when running fully connected BNNs, with a Mean--Field Gaussian distribution, on the MNIST data set with \FedGVI. The training data set is contaminated by 10\% random label flipping, fixed across all repetitions. We average over five runs with random, homogeneous client splits.}
\label{fig:bnn_results}
\end{figure*}

We create label contamination by adding noise to the training set while leaving the test set unchanged and evaluate performance in this. For \MNIST, we add 10\% of class dependent label noise, see \cref{fig:bnn_results} and \cref{tab:bnn}. 
We further carry out an ablation study on the hyperparameter selection in \FedGVI with the Alpha--\renyi divergence and the generalised cross entropy loss, see \cref{fig:ablation_study}. This demonstrates that \FedGVI performs well under a variety of different loss and divergence parameters. Note that $\alpha=1$ recovers the KL divergence, $\alpha=0$ the reverse KL divergence, i.e. $\divergence_{AR}^{(0)}(\approxDist:\priorDist)=\divergence_{R\kullbackleibler}(\approxDist:\priorDist)=\divergence_{\kullbackleibler}(\priorDist:\approxDist)$,
and that $\delta=0$ recovers the negative log--likelihood.

For \FashionMNIST, in \cref{tab:contamination}, we vary the amount of random label contamination, showcasing performance drops under different amounts of misspecification.
We use an MLP, for \FedGVI and PVI with 1 hidden layer of 200 neurons; for \FedAvg, \FedPA, and \betaPredBayes, two hidden layers with 100 neurons in each. Data is distributed homogeneously across clients, using 5 different, randomly chosen seeds. 
We demonstrate that under model misspecification, \FedGVI significantly outperforms competing FL methods.
Furthermore, \FedGVI incurs no additional computational complexity when compared to PVI. This is due to the KL and Alpha-\renyi divergences having closed form solutions between Multivariate Gaussians with complexity of $\mathcal{O}(1)$ in each other, and as we require $\mathcal{O}(1)$ additional, constant operations to get the GCE from the NLL.

We provide further experiments in \cref{apx:additional_exp_details} on the runtime of \FedGVI against PVI, learning rate selection, stability of posteriors under small perturbations in the robust loss parameters, and showing that using a single hidden layer NN for the competing methods would either negatively, or not significantly, affect their performance.

\section{Conclusions and Future Work}\label{sec:Conclusions}
We have introduced \FedGVI, a novel probabilistic approach to federated learning that is provably robust to model misspecification, and allows for faster, conjugate client updates. 
The theoretical analysis of \FedGVI demonstrates it's appealing properties; we easily recover existing methods as restricted cases, and characterise the convergence behaviour at fixed points of \FedGVI as solving a global GVI optimisation problem, extending existing theory. 
Our result on provable robustness to outliers through \FedGVI allows for closed form, conjugate posteriors that are computationally efficient, and robust to model misspecification. 
In deriving this, we have also shown that the cavity distribution is necessary as predictions would otherwise be overly confident and biased.
The robustness of \FedGVI was further demonstrated empirically on multiple synthetic and real--world data sets, showing outperformance of existing FL methods across model architectures and misspecification levels.

An interesting future direction is to extend FedGVI within personalised FL settings \citep{kotelevskii2022} and hierarchical Bayesian FL through latent variables \citep{kim2023} as well as through the use of a structured posterior approximation \citep{hassan2024}, in order to incorporate client level variations. Incorporating the hierarchical model structures and additional inductive biases from such settings, while maintaining conjugacy and favourable computational complexity, remain as open challenges.
In future work, we further aim to address the robust Bayesian nonparametric setting of FL through \FedGVI, as well as investigate other types of robustness, 
including to adversarial and Byzantine attacks, by for instance using a robust aggregator in \cref{eqn:server_update}, and addressing the open problem of provable robustness to prior misspecification in GVI.
\section*{Acknowledgements}
OH, TM and TD acknowledge support from a UKRI Turing AI Acceleration Fellowship [EP/V02678X/1] and a Turing Impact Award from the Alan Turing Institute. For the purpose of open access, the authors have applied a Creative Commons Attribution (CC-BY) license to any Author Accepted Manuscript version arising from this submission. The authors acknowledge the University of Warwick Research Technology Platform for assistance in the research described in this paper.

\section*{Impact Statement}

This paper presents work on robust federated learning, a framework that aims to not only advance the field of machine learning, but also to develop methods that ensure the privacy of data sources, whilst aiming to achieve optimal performance even under contamination of the data. This approach, however, may discard low probability, tail events that could represent minority groups. Hence, the trade off between robustness and inclusivity is a fundamental ethical challenge for decision makers.

\bibliographystyle{icml2025}

\newpage
\appendix
\startcontents[sections]
\onecolumn

\begin{center}
\section*{\centering Supplementary Material for: \\Federated Generalised Variational Inference: \\A Robust Probabilistic Federated Learning Framework}
\end{center}
The appendix is structured as follows:
\cref{apx:notation} summarises the notation used throughout the paper and in the proofs. 
In \cref{apx:proofs} we present complete proofs of all theorems, propositions and lemmas given in the main paper. 
\cref{apx:additional_details} clarifies the requirements of \cref{def:robust_loss}, the GBI learning rate, and places \FedGVI in the broader GVI literature. Lastly, \cref{apx:additional_exp_details} gives additional details about the implementation of \FedGVI and additional experiments.

\section{Notation}\label{apx:notation}
In this section, we give definitions of the symbols used throughout the paper and the appendix.

$P_0$\qquad The abstract and unknown probability measure, also called data generating process, acting on some abstract measurable space $(\Omega, \mathcal{F})$ which gives rise to the data

$\{x_i,\outdata_i\}_{\arbitraryindex=1}^n$\qquad Entire data set of all clients, also written as $\{x_{1}^n, {\outdata}_{1}^n\}$, for $x_i\in\dataspace$ and $y_i|x_i\in\outputspace$

$\{\x_\client,\y_\client\}_{\client=1}^\maxclient$\qquad The entire set of data points split across $\maxclient$ clients labelled $\client\in[\maxclient]:=\{1,2,...,\maxclient\}$

$\dataspace$\qquad The data space, which is assumed to have Polish topology

$\outputspace$\qquad The output space, which can be categorical such as in classification where $\outputspace=[C]$, or real valued as in regression $\outputspace=\R^C$, $C\in\mathbb{N}$  

$\parameterspace$\qquad In the parametric setting this is the parameter space $\bt\in\parameterspace$, assumed to admit Polish topology

$\PTheta$\qquad The space of probability measures over the measurable space $(\parameterspace, \mathcal{T})$. We refer to distributions in this space, where we mean distribution functions given rise to by measures in this space. Note that these need not be continuous, and could only be defined almost everywhere in $\bt$.

$\Q$\qquad A variational family of distributions such that $\Q\subset \PTheta$ and, in terms of distributions, $\Q=\{{\paramqapprox}\in\PTheta:\bk\in\K\}$, where $\K$ is a set of variational parameters

$\prior $\qquad The prior distribution, given rise to by the prior measure $\Pi$ on $(\parameterspace, \mathcal{T})$

${\classificationlosstime[\client]{(\ti)}}$\qquad The local loss of client $\client$, at iteration $\ti\in[\timax]$, on the local data set $\{\x_\client, \y_\client\}$, not necessarily the same across clients nor iterations, and associated with the parameters $\bt\in\parameterspace$

$\lossapproximation{\client}{(\ti)}$\qquad Local loss approximation of ${\classificationloss[\client]}$ and the impact of the data of client $\client$ on the posterior at the server

$\clientupdate{\client}{(\ti)}$\qquad Local update, \cref{eqn:local_update}, that represents the change in the approximate posteriors, and the de facto change in the local loss approximation. It has associated damping parameter $\dampingparam$.

${\lossapproximation{\server}{(\ti)}}$\qquad Global loss approximation of all clients aggregated at the server

$\q[\client]{(\ti)}$\qquad Local posterior computed through \cref{eqn:local_optim}

$\q[\server]{(\ti)}$\qquad Global approximate posterior after server--side optimisation step, \cref{eqn:server_optim}

$P(\trueloss,\divergence, \Q)$\qquad The Rule of Three \citep{jeremias2022} that defines a global GVI objective

$\divergence$\qquad Any statistical divergence $\divergence:\PTheta\times\PTheta\rightarrow\R_{\ge 0}$ \citep[for a detailed definition see][]{nielsen2020}; $\divergence_\server$ denotes the divergence at the server.

$\Eq$\qquad The expectation with respect to $\approxDist(\bt)$
\section{Proofs of Theorems, Propositions, and Lemmas}\label{apx:proofs}
Here, we provide the full proofs of the theorems stated in the paper. 
Throughout, we assume that all the losses, distributions and approximate losses, are measurable with respect to some dominating measure $\mu(d\bt)$. 
This can be the Lebesgue measure in finite dimensional spaces, or more generally the Haar measure. %
For infinite dimensional measure spaces, which are of interest in the study of Bayesian inverse problems and nonparametrics, we could assume $\mu(d\bt)$ to be a Gaussian measure as in \citet{pinski2015}.
\subsection{Equivalence Between The KL Divergence and Weighted KL Divergence}\label{apx:weighted}
First, we present a well known auxiliary lemma
that will be used throughout the proofs. 
It states that the weighted KL divergence is equivalent
to using a tempered or weighted likelihood in the optimisation procedure, and hence lead to equivalent inference problems \citep{jeremias2022, bissiri2016}. 
So without loss of generality, we can push the weighting term of the KL divergence inside the loss, by defining the loss to be $\trueloss=\wKLparam\cdot\trueloss$, which does not change the optimisation procedure.
We show this result for $f$--divergences, which we define as in \citet{ali1966} and \citet{amari2016}.
\begin{lemma}\label{lem:weighted}
	For $\wKLparam>0$ the posteriors computed by the weighted $f$--divergence, $\divergence=\frac{1}{\wKLparam}\divergence_f$ and loss $\trueloss$, and the posterior through the $f$--divergence $\divergence=\divergence_f$ and weighted loss $\wKLparam\cdot \trueloss$ are equivalent, i.e.,
	\begin{equation*}
		P(\trueloss, \frac{1}{\wKLparam}\divergence_f,\Q)=P(\wKLparam\cdot\trueloss,\divergence_f,\Q)
	\end{equation*}
\end{lemma}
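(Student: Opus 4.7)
The plan is to unpack both sides of the claimed equality using the ``Rule of Three'' definition of a GVI posterior and then to observe that the two optimisation problems have identical minimisers. Concretely, write
\begin{equation*}
P(\trueloss,\tfrac{1}{\wKLparam}\divergence_f,\Q)=\argmin_{\approxDist\in\Q}\Bigl\{\Ebq[{\trueloss(\bt)}]+\tfrac{1}{\wKLparam}\divergence_f(\approxDist:\priorDist)\Bigr\},
\end{equation*}
and
\begin{equation*}
P(\wKLparam\cdot\trueloss,\divergence_f,\Q)=\argmin_{\approxDist\in\Q}\Bigl\{\Ebq[{\wKLparam\cdot\trueloss(\bt)}]+\divergence_f(\approxDist:\priorDist)\Bigr\}.
\end{equation*}

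The key step is that $\wKLparam>0$ and multiplying an objective by a strictly positive constant does not alter its $\argmin$. Multiplying the first objective by $\wKLparam$ yields $\wKLparam\,\Ebq[{\trueloss(\bt)}]+\divergence_f(\approxDist:\priorDist)$. By linearity of expectation, $\wKLparam\,\Ebq[{\trueloss(\bt)}]=\Ebq[{\wKLparam\cdot\trueloss(\bt)}]$, and hence the rescaled objective coincides pointwise on $\Q$ with the objective defining $P(\wKLparam\cdot\trueloss,\divergence_f,\Q)$. The two argmin sets therefore agree.

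I would close with a brief remark on scope: the argument only uses (i) positivity of $\wKLparam$, (ii) linearity of expectation, and (iii) the fact that the $f$-divergence term $\divergence_f(\approxDist:\priorDist)$ does not depend on $\trueloss$. In particular the identity does not require the $\argmin$ to be unique; if the optimisation has a set of minimisers, both formulations produce the same set. The only mild subtlety worth flagging is measurability: one must have $\trueloss$ integrable against every $\approxDist\in\Q$ so that the expectation is well-defined, but this is already implicit in the Rule of Three being applicable. There is no substantive obstacle in this lemma; it is essentially a restatement of the scale-invariance of $\argmin$, and serves to justify absorbing the weight $\wKLparam$ into the loss throughout the subsequent proofs.
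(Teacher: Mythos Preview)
Your proposal is correct and takes essentially the same approach as the paper: both arguments reduce to the observation that multiplying the objective by the positive constant $\wKLparam$ leaves the $\argmin$ unchanged. The only cosmetic difference is that the paper explicitly writes $\divergence_f(\approxDist:\priorDist)=\Ebq[f(\approxDist(\bt)/\prior)]$ and combines the loss and divergence into a single expectation before factoring out $\tfrac{1}{\wKLparam}$, whereas you treat $\divergence_f$ as a black box and rescale the whole objective; your version is, if anything, slightly cleaner since it does not require the integral form of the $f$-divergence.
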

\begin{proof}
	\begin{equation*}
		\begin{aligned}
			P(\trueloss, \frac{1}{\wKLparam}\divergence_f,\Q) &= \minQ\left\{ \Ebq[\classificationloss]+\frac1\wKLparam\divergence_f({\approxDist}:{\priorDist})\right\}\\
			&=\minQ\left\{ \Ebq[\classificationloss]+\frac1\wKLparam \Ebq[{f\left(\frac{\approxDist(\bt)}{\prior}\right)} ]\right\}\\
			&=\minQ\left\{ \frac1\wKLparam \Ebq[\wKLparam\cdot\classificationloss+f\left(\frac{\approxDist(\bt)}{\prior}\right)]\right\}\\
			&= \minQ\left\{\Ebq[\wKLparam\cdot\classificationloss+f\left(\frac{\approxDist(\bt)}{\prior}\right)]\right\} \\
			&=\minQ\left\{\Ebq[\wKLparam\cdot\classificationloss]+\divergence_f(\approxDist:\priorDist)\right\}:= P(\wKLparam\cdot\trueloss,\divergence_f,\Q)
		\end{aligned}
	\end{equation*}
\end{proof}
Therefore, when referring to the loss in the following we mean it to be the weighted loss so that we can utilise the weighted KL divergence. 
This easily recovers the \kullbackleibler--divergence for $f:u\mapsto-\log u$.

\subsection{\cref{lem:damping}: A Logarithmic Opinion Pool through Damping}\label{apx:damping}

\begin{proof}
Consider the server update at some iteration $\ti$, where we gather the client updates. Under the KL divergence, we then solve the server optimisation procedure as:
\begin{equation*}
    \q[\server]{(\ti)}=\minQ\left\{
    \Ebq[\lossapproximation{\server}{(\ti)}] + \kullbackleibler(\approxDist:\priorDist)
    \right\}=\minQ\left\{
    \Ebq[{\log\frac{\approxDist(\bt)}{\prior\exp\{-\lossapproximation{\server}{(\ti)}\}}}]
    \right\}
\end{equation*}
we know that this is minimised at:
\begin{gather*}
    \q[\server]{(\ti)}\propto \prior\exp\{-\lossapproximation{\server}{(\ti)}\}= \prior\exp\left\{-\lossapproximation{\server}{(\ti-1)}-{\sum_{\client=1}^\maxclient\clientupdate{\client}{(\ti)}}\right\}\\
    \propto\underbrace{\prior\exp\{-\lossapproximation{\server}{(\ti-1)}\}}_{\propto \q[\server]{(\ti-1)}}\exp\left\{-\sum_{\client=1}^\maxclient-\dampingparam_\client\log\frac{\q[\client]{(\ti)}}{\q[\server]{(\ti-1)}}
    \right\}
    \propto \q[\server]{(\ti-1)}\prod_{\client=1}^\maxclient\left(\frac{\q[\client]{(\ti)}}{\q[\server]{(\ti-1)}}\right)^{\dampingparam_\client}\\
    =\frac{\q[\server]{(\ti-1)}\prod_{\client=1}^\maxclient(\q[\client]{(\ti)})^{\dampingparam_\client}}{(\q[\server]{(\ti-1)})^{\sum_{\client=1}^\maxclient\dampingparam_\client}}
\end{gather*}
By assumption we have that $\sum_{\client=1}^\maxclient\dampingparam_\client=1$, therefore $(\q[\server]{(\ti-1)})^{\sum_{\client=1}^\maxclient\dampingparam_\client} = \q[\server]{(\ti-1)}$ and:
\begin{align*}
    \q[\server]{(\ti)}&\propto\prod_{\client=1}^\maxclient(\q[\client]{(\ti)})^{\dampingparam_\client}\\
    \q[\server]{(\ti)}&=\frac{\prod_{\client=1}^\maxclient(\q[\client]{(\ti)})^{\dampingparam_\client}}{\int_\parameterspace\prod_{\client=1}^\maxclient(\q[\client]{(\ti)})^{\dampingparam_\client}\,\mu(d\bt)}, \; \mu-a.e.
\end{align*}
This forms an externally Bayesian logarithmic opinion pool \citep{genest1984,genest1986}.
\end{proof}
\subsection{Proof of \cref{thm:fixed_points}}\label{apx:fixed}
The proof of \cref{thm:fixed_points} is adapted from that for Partitioned Variational Inference in \citet{ashman2022}.
We show the proof of \cref{thm:fixed_points} by comparing the derivatives with respect to the variational parameters of ${\paramqapprox}$ of the sum of local objectives with those of the global objective. 
This is motivated by the equivalence of a sum of local GVI objectives (from each client) with some added constants and the global GVI objective, demonstrated in \cref{apx:objec}. The main proof is in \cref{apx:proof_fixed_pt}.
\subsubsection{Recovering a Global GVI Objective From Local Objectives}\label{apx:objec}
First, we provide an analogue of \citet[Property 2]{ashman2022} which states that the sum of the local (client) \FedGVI objectives and some constant, which we find to be the negative log normalising constants of the cavity and the server distributions, equals the global GVI objective. We define the following: 
\begin{gather*}
	\q[\server]{(\ti)}=\frac{1}{\Z{\approxDist_{\server}^{(\ti)}}}\prior\exp\{-\sum_{\client=1}^\maxclient\lossapproximation{\client}{(\ti)}\}\\
	\cavity{\client}=\frac{1}{\Z{\cavityDist{\client}}}\prior\exp\{-\sum_{{\otherclient}\ne \client}\lossapproximation{{\otherclient}}{(\ti)}\}\propto\frac{\q[\server]{(\ti)}}{\exp\{-\lossapproximation{\client}{(\ti)}\}}\\
	\localobjective{\client}{\approxDist_\server^{(\ti)}}:=\Ebq[{\classificationloss[\client]}]+\wkl{\approxDist}{\cavityDist{\client}} \\
	\globalobjective{\approxDist_\server^{(\ti)}}:=\Ebq[\sum_{\client=1}^\maxclient{\classificationloss[\client]}]+\wkl{\approxDist}{\priorDist}
\end{gather*}
Then we can recover the global objective by summing over the local objectives and subtracting the log normalising constants of the cavity distributions and the current server posterior. 

\begin{align*}
	&\sum_{\client=1}^\maxclient\localobjective{\client}{\approxDist_\server^{(\ti)}}-\frac1\wKLparam(\log \Z{\approxDist_{\server}^{(\ti)}}+\textstyle{\sum_{\client=1}^\maxclient\log \Z{\cavityDist{\client}}}) \\
	&= \sum_{\client=1}^\maxclient \left(\Ebq[{\classificationloss[\client]}]+\frac1\wKLparam\kl{\cavityDist{\client}}\right)-\frac1\wKLparam(\log \Z{\approxDist_{\server}^{(\ti)}}+\textstyle{\sum_{\client=1}^\maxclient\log \Z{\cavityDist{\client}}}) \\
	&= \sum_{\client=1}^\maxclient \Ebq[{\classificationloss[\client]}]+\sum_{\client=1}^\maxclient\frac1\wKLparam\Ebq[\log\frac{\approxDist(\bt)}{\cavity{\client}}]-\frac1\wKLparam(\log \Z{\approxDist_{\server}^{(\ti)}}+\textstyle{\sum_{\client=1}^\maxclient\log \Z{\cavityDist{\client}}})\\
	&= \Ebq[ \sum_{\client=1}^\maxclient {\classificationloss[\client]}]+\frac1\wKLparam\Ebq[ \sum_{\client=1}^\maxclient\log\frac{\approxDist(\bt)}{\cavity{\client}}]-\frac1\wKLparam(\log \Z{\approxDist_{\server}^{(\ti)}}+\textstyle{\sum_{\client=1}^\maxclient\log \Z{\cavityDist{\client}}})\\
	&= \Ebq[ \sum_{\client=1}^\maxclient {\classificationloss[\client]}]+\frac1\wKLparam\Ebq[ \log{\prod_{\client=1}^\maxclient}\frac{\approxDist(\bt)\exp\{-\lossapproximation{\client}{(\ti)}\}}{\q[\server]{(\ti)}}]-\frac1\wKLparam(\log \Z{\approxDist_{\server}^{(\ti)}}+\textstyle{\sum_{\client=1}^\maxclient\log \Z{\cavityDist{\client}}})\\
	&= \Ebq[ \sum_{\client=1}^\maxclient {\classificationloss[\client]}]+\frac1\wKLparam\Ebq[ \log\frac{\approxDist(\bt)\exp\{-\textstyle{\sum_{\client=1}^\maxclient}\lossapproximation{\client}{(\ti)}\}}{\q[\server]{(\ti)}}]-\frac1\wKLparam\log \Z{\approxDist_{\server}^{(\ti)}}\\
	&= \Ebq[ \sum_{\client=1}^\maxclient {\classificationloss[\client]}]+\frac1\wKLparam\Ebq[ \log\frac{\approxDist(\bt)}{\prior/\Z{\approxDist_{\server}^{(\ti)}}}]-\frac1\wKLparam\log \Z{\approxDist_{\server}^{(\ti)}}= \globalobjective{\approxDist_\server^{(\ti)}}
\end{align*}

Hence, by using the weighted KL divergence at the clients optimisation step, can we recover a global GVI objective by summing over the local objectives and adding some constants independent of the variational parameters of interest in the optimisation problem. 
We note that the added logarithms of the normalising constants are independent of $\bk$, since these are fixed through the current posterior and cavity distribution and do not depend on the variational parameters.

\subsubsection{\cref{thm:fixed_points}: Fixed Points Recovers a Global Fixed Point}\label{apx:proof_fixed_pt}%
We denote a fixed point of the algorithm as $\paramq[\server]{*}$ such that for all $\client\in[\maxclient]$ we have $\paramq[\server]{*}\in\minQ\localobjective{\client}{\approxDist_\server^*}$, then we have the property that no update will change the posterior found. 
Recall:

{\textbf{\cref{thm:fixed_points}} \textit{
		Let  $\divergence=\weightedKL$ at the clients, local loss $\trueloss_\client$ and $\Q:=\{{\paramqapprox}:\bk\in\K\}\subset \PTheta$ as a variational family. Assume that \FedGVI finds a fixed point $\paramq[\server]{*}$, such that for all clients we have that $\paramq[\server]{*}\in\minQ\localobjective{\client}{\approxDist_\server^*}$. Then, it holds that  $\paramq[\server]{*}\in\minQ\globalobjective{\approxDist_\server^*}$.}}

\begin{proof}
	First we note that we consider only the KL divergence in this proof, which is equivalent to saying we modify the loss $\trueloss$ to be multiplied by $\wKLparam>0$, which results in the equivalent formulation, as shown in \citet{jeremias2022} where $P(\trueloss,\weightedKL,\Q)=P(\wKLparam\cdot\trueloss,\kullbackleibler,\Q)$, see also \cref{lem:weighted}.
	
	Note that the condition $\forall \client \in [\maxclient]$ we have that $\paramq[\server]{*}\in\minQ\localobjective{\client}{\approxDist_\server^*}$ is equivalent to requiring that $\clientupdate{\client}{*}=0$, since this means that the local loss approximations remain unchanged and hence $\lossapproximation{\server}{*}$ remains unchanged.
	This then implies that the posterior at the server will not change. 
	This is the same as saying that the client optimisation step has found the global solution and hence $\q[\client]{*}$ and $\q[\server]{*}$ will be the same which implies that $\clientupdate{\client}{*}=0$.

    In the following all integrals are assumed to be over the parameter space $\parameterspace$, even when we don't make it explicit.
    
	We can furthermore show that we can express the derivative of the local objective as a single integral under the weighted KL divergence.
	\begin{gather*}
		\jacobian \localobjective{\client}{\approxDist_\server^*} = \jacobian \left\{\Ebq[{\classificationloss[\client]}]+\kllarge{\frac{\paramq[\server]{*}}{\exp\{-\paramlossapproximation[*]{\client}\}\Z[*]{\approxDist_\server}}}\right\}\\
		=\jacobian \int {\paramqapprox}\log\frac{1}{\exp\{ -{\classificationloss[\client]}\}}+{\paramqapprox}\left(\log \frac{{\paramqapprox}\exp\{-\paramlossapproximation[*]{\client}\}}{\paramq[\server]{*}}+\log \Z[*]{\approxDist_\server}\right) \mu(d\bt)\\
		= \jacobian \int {\paramqapprox}\log\frac{{\paramqapprox}\exp\{-\paramlossapproximation[*]{\client}\}}{\paramq[\server]{*}\exp\{-{\classificationloss[\client]}\}}\,\mu(d\bt)+\cancelto{0}{\jacobian\log  \Z[*]{\approxDist_\server}\int {\paramqapprox}\,\mu(d\bt)}
	\end{gather*}
	Now we first show that the fixed point is an extremum of the global objective and then that it is a minimum. 
	We do this by first differentiating the local objective with respect to the variational parameters $\bk$ and then that the sum of the local derivatives evaluated at $\bk=\bk^*$ equal the derivative of the global objective. 
	\begin{equation*}
		\begin{aligned}
			\jacobian \localobjective{\client}{\approxDist_\server^*} &=\jacobian \int {\paramqapprox}\log\frac{{\paramqapprox}\exp\{-\paramlossapproximation[*]{\client}\}}{\paramq[\server]{*}\exp\{-{\classificationloss[\client]}\}}\,\mu(d\bt)\\
			&=\jacobian \int {\paramqapprox}({\classificationloss[\client]}-\paramlossapproximation[*]{\client})\,\mu(d\bt)+\jacobian \int {\paramqapprox}\log\frac{{\paramqapprox}}{\paramq[\server]{*}}\,\mu(d\bt) \\
			&=\jacobian \int {\paramqapprox}({\classificationloss[\client]}-\paramlossapproximation[*]{\client})\,\mu(d\bt) + \int (\jacobian{\paramqapprox})\log\frac{{\paramqapprox}}{\paramq[\server]{*}}\,\mu(d\bt) \\
			&\qquad +\int\cancelto{0}{\jacobian {\paramqapprox}\,\mu(d\bt)}
		\end{aligned}
	\end{equation*}
	where first line follows since we can compose the expectation and (weighted) KL divergence and the normalising constant of the cavity distribution is constant with respect to $\bk$. 
	The last line follows from the fact that $\frac{d}{d\indata}f(\indata)\log f(x)= f'(\indata)\log f(\indata) + f'(\indata)$ and that we can exchange the order of integration and differentiation. 
	We further note that at convergence, where $\bk=\bk^*$, that $\log\frac{{\paramqapprox}}{\paramq[\server]{*}}\big|_{\bk=\bk^*}=0$.
	Evaluating the expression above at $\bk=\bk^*$ then yields:
	\begin{equation*}
		\begin{aligned}
			\jacobian \localobjective{\client}{\approxDist_\server^*}\Big|_{\bk=\bk^*} = \jacobian \int {\paramqapprox}({\classificationloss[\client]}-\paramlossapproximation[*]{\client})\,\mu(d\bt)\Big|_{\bk=\bk^*}
		\end{aligned}
	\end{equation*}
	Summing over all these client objectives then yields the following expression:
	\begin{equation*}
		\begin{aligned}
			\sum_{\client=1}^\maxclient\jacobian\localobjective{\client}{\approxDist_\server^*}\Big|_{\bk=\bk^*} &=\sum_{\client=1}^\maxclient \jacobian \int {\paramqapprox}({\classificationloss[\client]}-\paramlossapproximation[*]{\client})\,\mu(d\bt)\Big|_{\bk=\bk^*} \\ 
			&=\jacobian \int {\paramqapprox}(\sum_{\client=1}^\maxclient {\classificationloss[\client]}-\sum_{\client=1}^\maxclient\paramlossapproximation[*]{\client})\,\mu(d\bt)\Big|_{\bk=\bk^*} \\
			&= \int (\jacobian{\paramqapprox})\log\frac{\paramq[\server]{*}}{\prior\exp\{\sum_{\client=1}^\maxclient {\classificationloss[\client]}\}}\,\mu(d\bt)\Big|_{\bk=\bk^*} \\
			&\qquad +  \cancelto{0}{\jacobian\int {\paramqapprox}\log \Z{\approxDist^*} \,\mu(d\bt)}
		\end{aligned}
	\end{equation*}
	To compare this with a global fixed point we differentiate the global objective at $\approxDist^*$, not yet assumed to be a minimiser of the global objective, with respect to the variational parameters.
	\begin{equation*}
		\begin{aligned}
			\jacobian \globalobjective{\approxDist_\server^*}&=\jacobian\int {\paramqapprox} \log\frac{{\paramqapprox}}{\prior\exp\{\sum_{\client=1}^\maxclient {\classificationloss[\client]}\}}\,\mu(d\bt) \\
			&=\int (\jacobian {\paramqapprox}) \log\frac{{\paramqapprox}}{\prior\exp\{\sum_{\client=1}^\maxclient {\classificationloss[\client]}\}}\,\mu(d\bt) + \cancelto{0}{\int \jacobian {\paramqapprox}\,\mu(d\bt)}
		\end{aligned}
	\end{equation*}
	Then,
	\begin{equation*}
		\jacobian \globalobjective{\approxDist_\server^*}\Big|_{\bk=\bk^*}=\int (\jacobian {\paramqapprox}) \log\frac{{\paramqapprox}}{\prior\exp\{\sum_{\client=1}^\maxclient {\classificationloss[\client]}\}}\,\mu(d\bt)\Big|_{\bk=\bk^*} = \sum_{\client=1}^\maxclient\jacobian\localobjective{\client}{\approxDist_\server^*}\Big|_{\bk=\bk^*}
	\end{equation*}
	And since $\paramq[\server]{*}$ is a fixed point of each client, we have that $\jacobian\localobjective{\client}{\approxDist_\server^*}\big|_{\bk=\bk^*}=0$. 
	Therefore,
	\begin{equation*}
		\sum_{\client=1}^\maxclient\jacobian\localobjective{\client}{\approxDist_\server^*}\Big|_{\bk=\bk^*}=0\qquad\implies\qquad  \jacobian \globalobjective{\approxDist_\server^*}\Big|_{\bk=\bk^*}=0
	\end{equation*}
	This means that $\paramq[\server]{*}$ is an extremum of \FedGVI, and further that it is also an extremum of GVI with $\divergence=\weightedKL$. 
	We now show that it is further a minimum of the global GVI objective. We consider the Hessian $\hessian$ and proceed like before.
	\begin{equation*}
		\begin{aligned}
			\hessian \localobjective{\client}{\approxDist_\server^*} &= \hessian \int {\paramqapprox}\log\frac{{\paramqapprox}\exp\{-\paramlossapproximation[*]{\client}\}}{\paramq[\server]{*}\exp\{-{\classificationloss[\client]}\}}\,\mu(d\bt)\\
			&={\hessian} \int {\paramqapprox}({\classificationloss[\client]}-\paramlossapproximation[*]{\client})\,\mu(d\bt)+{\hessian} \int {\paramqapprox}\log\frac{{\paramqapprox}}{\paramq[\server]{*}}\,\mu(d\bt) \\
			&={\hessian} \int {\paramqapprox}({\classificationloss[\client]}-\paramlossapproximation[*]{\client})\,\mu(d\bt) \\
			&\quad+\jacobian\left( \int (\jacobian {\paramqapprox})\log\frac{{\paramqapprox}}{\paramq[\server]{*}}\,\mu(d\bt) +\cancelto{0}{\int \jacobian \log {\paramqapprox}\,\mu}(d\bt)
			\right) \\
			&= {\hessian} \int {\paramqapprox}({\classificationloss[\client]}-\paramlossapproximation[*]{\client})\,\mu(d\bt) \\
			&\quad + \int (\hessian {\paramqapprox})\log\frac{{\paramqapprox}}{\paramq[\server]{*}}\,\mu(d\bt) + \int (\jacobian {\paramqapprox})(\jacobian\log{{\paramqapprox}})\,\mu(d\bt)
		\end{aligned}
	\end{equation*}
	\citet{ashman2022} point out that this last term can equivalently be expressed through it's transpose.
	\begin{gather*}
		\left(\int (\jacobian {\paramqapprox})(\jacobian\log{{\paramqapprox}})\,\mu(d\bt)\right)^\transpose \\= \jacobian \int {\paramqapprox}(\jacobian\log{{\paramqapprox}})\,\mu(d\bt) + \cancelto{0}{\int\hessian {\paramqapprox}\,\mu(d\bt)}\\
		= \jacobian \int {\paramqapprox}\frac{1}{{\paramqapprox}}\,\mu(d\bt)=\boldsymbol{0}
	\end{gather*}
	Evaluating this Hessian at $\bk=\bk^*$:
	\begin{gather*}
		{\hessian} \localobjective{\client}{\approxDist_\server^*}\Big|_{\bk=\bk^*}=\\
		{\hessian} \int {\paramqapprox}({\classificationloss[\client]}-\paramlossapproximation[*]{\client})\,\mu(d\bt)\Big|_{\bk=\bk^*} + \int \cancelto{0}{(\hessian {\paramqapprox})\log\frac{{\paramqapprox}}{\paramq[\server]{*}}\,\mu(d\bt)}\Big|_{\bk=\bk^*}
	\end{gather*}
	Therefore, when summing over the individual Hessians of the clients, we get:
	\begin{equation*}
		\begin{aligned}
			\sum_{\client=1}^\maxclient {\hessian} \localobjective{\client}{\approxDist_\server^*}\Big|_{\bk=\bk^*} &= \sum_{\client=1}^\maxclient{\hessian} \int {\paramqapprox}({\classificationloss[\client]}-\paramlossapproximation[*]{\client})\,\mu(d\bt)\Big|_{\bk=\bk^*}\\
			&= {\hessian} \int {\paramqapprox}(\sum_{\client=1}^\maxclient {\classificationloss[\client]}-\sum_{\client=1}^\maxclient\paramlossapproximation[*]{\client})\,\mu(d\bt)\Big|_{\bk=\bk^*}\\
			&={\hessian} \int {\paramqapprox}\log\frac{\paramq[\server]{*}}{\prior\exp\{\sum_{\client=1}^\maxclient {\classificationloss[\client]}\}}\,\mu(d\bt)\Big|_{\bk=\bk^*} \\
			&\qquad + \cancelto{0}{\hessian\int {\paramqapprox}\log \Z{\approxDist^*}\,\mu(d\bt)}\\
			&= \int ({\hessian} {\paramqapprox})\log\frac{\paramq[\server]{*}}{\prior\exp\{\sum_{\client=1}^\maxclient {\classificationloss[\client]}\}}\,\mu(d\bt)\Big|_{\bk=\bk^*}
		\end{aligned}
	\end{equation*}
	which is a sum of positive definite matrices, and therefore, the extremum at the fixed point is a minimum. 
	
	We now compare this with the Hessian of the global objective of GVI.
	\begin{equation*}
		\begin{aligned}
			\hessian \globalobjective{\approxDist_\server^*}&=\hessian \int {\paramqapprox}\log \frac{{\paramqapprox}}{\prior\exp\{-\sum_{\client=1}^\maxclient {\classificationloss[\client]}\}}\,\mu(d\bt) \\
			&=\jacobian\left(\int (\jacobian {\paramqapprox})\log \frac{{\paramqapprox}}{\prior\exp\{-\sum_{\client=1}^\maxclient {\classificationloss[\client]}\}}\,\mu(d\bt)\right. \\
			&\qquad+ \left.\cancelto{0}{\int (\jacobian \log {\paramqapprox}){\paramqapprox}\,\mu}(d\bt) \right) \\
			&= \int(\hessian {\paramqapprox})\log \frac{{\paramqapprox}}{\prior\exp\{-\sum_{\client=1}^\maxclient {\classificationloss[\client]}\}}\,\mu(d\bt) \\
			&\qquad +\cancelto{0}{\int (\jacobian {\paramqapprox})(\jacobian \log {\paramqapprox})\,\mu(d\bt)}
		\end{aligned}
	\end{equation*}
	Therefore, we can see that, evaluated at $\bk=\bk^*$, 
	\begin{equation*}
		\begin{aligned}
			\hessian \globalobjective{\approxDist_\server^*}\Big|_{\bk=\bk^*} &= \int(\hessian {\paramqapprox})\log \frac{{\paramqapprox}}{\prior\exp\{-\sum_{\client=1}^\maxclient {\classificationloss[\client]}\}}\,\mu(d\bt)\Big|_{\bk=\bk^*} \\ 
			&= \int(\hessian {\paramqapprox})\log \frac{\paramq[\server]{*}}{\prior\exp\{-\sum_{\client=1}^\maxclient {\classificationloss[\client]}\}}\,\mu(d\bt)\Big|_{\bk=\bk^*} \\&= \sum_{\client=1}^\maxclient {\hessian} \localobjective{\client}{\approxDist_\server^*}\Big|_{\bk=\bk^*}
		\end{aligned}
	\end{equation*}
	Hence, the Hessian of the global GVI objective is positive definite and therefore we have found a local minimum at $\paramq[\server]{*}$ through \FedGVI. %
\end{proof}

\subsection{Proof of \cref{col:GBI}}\label{apx:GBI}
By combining \cref{rem:pvi_rec} and \cref{thm:fixed_points}, we can show that, under infinite computational resources, specifically if we are able to optimise over the entire space of possible distribution parametrised by $\bt\in\parameterspace$, then we are able to recover the Generalised Bayesian Posterior of \citet{bissiri2016} in a distributed fashion by partitioning the input data and solving several smaller optimisation problems in parallel. 
This is achieved by using the weighted Kullback--Leibler divergence at the clients and the regular KL divergence at the server.

Under the assumption that the prior is not misspecified, we can perform distributed Bayesian updating with our framework, similar to the Bayesian Committee Machine \citep{tresp2000} where we combine local posterior distributions. 
We aim to recover the Generalised Bayesian Posterior \citep{bissiri2016}:
\begin{equation*}
	\conditionalq{GBI}{\bk}=\frac{\exp\{-\GBIparam \classificationloss\}\,\prior}{\int_\parameterspace\exp\{-\GBIparam \classificationloss\}\,\prior\mu(d\bt)}
\end{equation*}
where $\GBIparam$ is some parameter that controls the learning rate from the data. 

We will show that using $\wKLparam=\GBIparam$ at the clients will recover this GBI posterior after a single iteration of our algorithm, and further that the algorithm shows convergence for any subsequent iteration. 
We assume that $\Q=\PTheta$ and that $\conditionalq{GBI}{\y,\x}\in\Q$. 
Furthermore, for simplicity we assume that the loss function $\trueloss(\cdot)$ is the additive across clients and that the data set is partitioned such that there are no intersections.
\begin{proof}
	The $\maxclient$ clients have data sets $\{\x_\client,\y_\client\}_{\client=1}^\maxclient$ such that $\x_\otherclient\cap\x_j=\emptyset$ for all ${\otherclient}\ne j$ and we write $\cup_{\client=1}^\maxclient\x_\client=\x_1^\maxclient$ and $\cup_{\client=1}^\maxclient\y_\client=\y_1^\maxclient$ to symbolise the entire data set.
	
	Then we can rewrite the GBI posterior as:
	$$
	\conditionalq{GBI}{\y_1^\maxclient,\x_1^\maxclient}=\frac{\exp\{-\GBIparam\sum_{\client=1}^\maxclient {\sameclassificationloss[\client]}\}\,\prior}{\int_\parameterspace\exp\{-\GBIparam\sum_{\client=1}^\maxclient {\sameclassificationloss[\client]}\}\,\prior\mu(d\bt)}
	$$
	The \FedGVI approximation then takes the following form: $\q[\server]{(0)}=\prod_{\client=1}^{\maxclient}\exp\{-\lossapproximation{\client}{(0)}\}\prior/\Z{\approxDist_\server}$ and as we initiate $\lossapproximation{\client}{(0)}=0$ we have that $\q[\server]{(0)}=\prior$.
	
	Then in parallel, the each client $\client\in[\maxclient]$ carries out their optimisation step:
	
	The cavity distribution can be found through division as:
	$$
	\cavity{\client}\propto\frac{\q[\server]{(0)}}{\exp\{-\lossapproximation{\client}{(0)}\}}=\frac{\prior}{1}=\prior
	$$
	And the Generalised Variational Inference step with the cavity distribution as a local prior solves the following optimisation problem:
	\begin{align*}
		\q[\client]{(1)}&=\minQ\left\{\Ebq[{\sameclassificationloss[\client]}]+\frac1\GBIparam \kl{\priorDist}
		\right\}\\
		&\overset{(1)}{=}\minQ\Ebq[\log\frac{\approxDist(\bt)}{\prior\exp\{-\GBIparam {\sameclassificationloss[\client]}\}}]\\
		&\overset{(2)}{=}\prior\exp\{-\GBIparam {\sameclassificationloss[\client]}\}/\Z{\approxDist_\client}
	\end{align*}
	Where (1) follows through the equivalence between the weighted KL divergence and the tempered loss as discussed in \cref{apx:weighted}, and (2) follows due to the properties of a statistical divergence which is minimised when the inside of the expectation is zero and since $\Q=\PTheta$.
	
	This then implies that the update we send to the server is of the form:
	\begin{align*}
		\clientupdate{\client}{(1)}&=-\log\frac{\q[\client]{(1)}}{\q[\server]{(0)}}=-\log\frac{\prior\exp\{-\GBIparam {\sameclassificationloss[\client]}\}/\Z{\approxDist_\client}}{\prior}\\
		&=\GBIparam {\sameclassificationloss[\client]}+\log \Z{\approxDist_\client}
	\end{align*}
	At the server, we can combine these such that we get:
	\begin{align*}
		{\lossapproximation{\server}{(1)}}=\sum_{\client=1}^{\maxclient}\GBIparam {\sameclassificationloss[\client]}+\sum_{\client=1}^{\maxclient}\Z{\approxDist_\client} +\overbrace{\lossapproximation{\server}{(0)}}^{=0}=\GBIparam {\sameclassificationlossjoint{1}{\maxclient}}+\sum_{\client=1}^{\maxclient}\Z{\approxDist_\client}
	\end{align*}
	As GBI depends on the prior and hence trusts it, we use the KL divergence at the server, which is optimal with respect to the GBI posterior \citep{zellner1988,jeremias2022}. 
	Thus, the GVI objective at the server becomes:
	\begin{align*}
		\qtilde[\server]{(1)}&=\minQ\left\{\Ebq[{\lossapproximation{\server}{(1)}}]+\kl{\priorDist}\right\}\\
		&=\minQ\left\{\Ebq[\GBIparam {\sameclassificationlossjoint{1}{\maxclient}}+\sum_{\client=1}^{\maxclient}\Z{\approxDist_\client}]+\kl{\priorDist}\right\}\\
		&\overset{(3)}{=}\minQ\left\{\Ebq[\GBIparam {\sameclassificationlossjoint{1}{\maxclient}}]+\cancelto{0}{\sum_{\client=1}^{\maxclient}\Z{\approxDist_\client}}+\kl{\priorDist}\right\}\\
		&=\minQ\Ebq[\log\frac{\approxDist(\bt)}{\prior\exp\{-\GBIparam {\sameclassificationlossjoint{1}{\maxclient}}\}}]\\
		&\overset{(4)}{=}\prior\exp\{-\GBIparam {\sameclassificationlossjoint{1}{\maxclient}}\}/\Z{\tempq_\server^{(1)}}
	\end{align*}
	(3) follows since $\Z{\approxDist_\client}$ does not depend on $\bt$, nor the variational parameters, and hence does not affect our optimisation problem. 
	Line (4) is a result of $\Q=\PTheta$ and the assumption that the GBI posterior is contained within this set.
	
	This implies that the posterior that we find at the server is the Generalised Bayesian Inference posterior.
	\begin{equation*}
		\q[\server]{(1)}=\prior\exp\{-\GBIparam {\sameclassificationlossjoint{1}{\maxclient}}\}/\Z{\tempq_\server^{(1)}}
	\end{equation*}
	Thereby, we have shown that \FedGVI recovers the GBI posterior under the assumptions and that this occurs after the first iteration. 
	It remains to be shown that any further iteration steps will not change the posterior, and hence that we have recovered a fixed point as defined in \cref{thm:fixed_points}.
	
	We repeat the client optimisation steps in parallel. 
	We first find the cavity distribution:
	\begin{equation*}
		\cavity{\client}\propto\frac{\q[\server]{(1)}}{\exp\{-\GBIparam {\sameclassificationloss[\client]}\}}\propto 
		\frac{\prior\exp\{-\GBIparam \sum_{{\otherclient}=1}^\maxclient \sameclassificationloss[{\otherclient}]\}}{\exp\{-\GBIparam {\sameclassificationloss[\client]}\}}=\prior\exp\{-\GBIparam\sum_{{\otherclient}\ne \client}\sameclassificationloss[{\otherclient}]\}
	\end{equation*}
	Note that we ignore the normalising constant, since, similar to the server side optimisation step before, it does not depend on the variational parameters nor $\bt$.
	
	The optimisation step is then given through:
	\begin{align*}
		\q[\client]{(2)}&=\minQ\left\{\Ebq[{\sameclassificationloss[\client]}]+\frac1\GBIparam \kl{\cavityDist{\client}}\right\}\\
		&=\minQ \Ebq[\log\frac{\approxDist(\bt)}{\cavity{\client}\exp\{-\GBIparam {\sameclassificationloss[\client]}\}}]
	\end{align*}
	This statistical divergence is minimised at:
	\begin{align*}
		\q[\client]{(2)}&=\cavity{\client}  \exp\{-\GBIparam {\sameclassificationloss[\client]}\}/\tempnormaliser\\
		&=\prior\exp\{-\GBIparam\sum_{{\otherclient}\ne \client}\sameclassificationloss[{\otherclient}]\} \exp\{-\GBIparam {\sameclassificationloss[\client]}\}/\Z{\approxDist_\client^{(2)}}\\
		&=\prior\exp\{-\GBIparam \sum_{\client=1}^\maxclient {\sameclassificationloss[\client]}\}/\Z{\approxDist_\client^{(2)}}
	\end{align*}
	where we note that $\Z{\approxDist_\client^{(2)}}=\Z{\tempq_\server^{(1)}}$ and we have recovered the GBI posterior we currently have as our server distribution. 
	As a result, $\clientupdate{\client}{(2)}=-(\log \q[\client]{(2)}-\log \q[\server]{(1)})=-\log 1 = 0$ for all $\client\in[\maxclient]$.
	
	This satisfies the conditions for \cref{thm:fixed_points} and hence we have achieved a fixed point, which will not change the server distribution, since:
	\begin{equation*}
		{\lossapproximation{\server}{(2)}}=\underbrace{\sum_{\client=1}^\maxclient\overbrace{\clientupdate{\client}{(2)}}^{=0}}_{=0}+{\lossapproximation{\server}{(1)}}={\lossapproximation{\server}{(1)}}=\GBIparam {\sameclassificationlossjoint{1}{\maxclient}}
	\end{equation*}
	which means that the server optimisation routine would not be different from the one during the previous iteration. 
	\begin{equation*}
		\qtilde[\server]{(2)}=\minQ\left\{\Ebq[{\lossapproximation{\server}{(2)}}]+\kl{\priorDist}\right\}=\minQ\left\{\Ebq[{\lossapproximation{\server}{(1)}}]+\kl{\priorDist}\right\}=\qtilde[\server]{(1)}
	\end{equation*}
	And thus $\q[\server]{(2)}=\q[\server]{(1)}=\conditionalq[*]{GBI}{\y_1^\maxclient,\x_1^\maxclient}$.

    For the moreover part, we define the damping parameter $\delta=\frac1\maxclient$, and show that $\q[\server]{(\ti)}\rightarrow \conditionalq{GBI}{\y_1^\maxclient,\x_1^\maxclient}$ as $\ti\rightarrow\infty$.
    As the data here is implicit, we simplify notation by denoting the losses of a client as $\trueloss_{\client}(\bt)$ and the GBI posterior as $\approxDist_{GBI}(\bt)$. 
    Furthermore, we assume that the GBI learning rate parameter $\GBIparam$ is implicitly included in each client's loss. 
    Then by the usual modes of convergence, we show that:
    \begin{equation*}
        \left|\q[\server]{(\ti)} - \approxDist_{GBI}(\bt) \right| \rightarrow 0 
    \end{equation*}
    Note that under KL divergences at the server and client, we will have that $\lossapproximation{\server}{(\ti)}=\sum_{\client=1}^\maxclient\lossapproximation{\client}{(\ti)}$ (see proof of \cref{rem:pvi_rec}).
    \begin{equation*}
        \left|\prior\exp\left\{-\sum_{\client}^\maxclient\lossapproximation{\client}{(\ti)}\right\} - \prior\exp\left\{-\sum_{\client}^\maxclient\trueloss_{\client}(\bt)\right\} \right|= \prior \left|\exp\left\{-\sum_{\client}^\maxclient\lossapproximation{\client}{(\ti)}\right\} - \exp\left\{-\sum_{\client}^\maxclient\trueloss_{\client}(\bt)\right\} \right|
    \end{equation*}
    This converges when the exponents are equal, hence it is sufficient to prove that $\forall\client\in[\maxclient]$ we have $\lossapproximation{\client}{(\ti)}\rightarrow \trueloss_{\client}(\bt)$.

    Since for all $\client\in[\maxclient]$, at each iteration $\ti$ we have that under the KL divergences:
    \begin{align*}
        \cavity{\client}&\propto\frac{\q[\server]{(\ti-1)}}{\exp\{-\lossapproximation{\client}{(\ti-1)}\}}=\frac{\prior\exp\{-\textstyle{\sum_{\client=1}^\maxclient}\lossapproximation{\client}{(\ti-1)}\}}{\exp\{-\lossapproximation{\client}{(\ti-1)}\}} ={\prior\exp\left\{-{\sum_{\otherclient\ne\client}}\lossapproximation{\otherclient}{(\ti-1)}\right\}}\\
        \q[\client]{(\ti)}&\propto\exp\{\trueloss_{\client}(\bt)\}\cavity{\client}\\
        \clientupdate{\client}{(\ti)}&=-\frac1\maxclient\log\frac{\exp\{\trueloss_{\client}(\bt)\}\prior\exp\left\{-{\sum_{\otherclient\ne\client}}\lossapproximation{\otherclient}{(\ti-1)}\right\}}{\prior\exp\{-\textstyle{\sum_{\client=1}^\maxclient}\lossapproximation{\client}{(\ti-1)}\}}=\frac1\maxclient\trueloss_{\client}(\bt)-\frac1\maxclient\lossapproximation{\client}{(\ti-1)}\\
        \lossapproximation{\client}{(\ti)}&=\lossapproximation{\client}{(\ti-1)}+\clientupdate{\client}{(\ti)}=\frac1\maxclient\trueloss_{\client}(\bt)+\frac{\maxclient
        -1}{\maxclient}\lossapproximation{\client}{(\ti-1)}
    \end{align*}
    By expansion of $\lossapproximation{\client}{(\ti-1)}$, by recursively applying the definition above, we get the following closed form expression:
    \begin{equation*}
        \lossapproximation{\client}{(\ti-1)}=\left(\left(\frac{\maxclient-1}{\maxclient}\right)\frac1\maxclient + \left(\frac{\maxclient-1}{\maxclient}\right)\left(\frac{\maxclient-1}{\maxclient}\right)\frac1\maxclient+ ...+ \left(\frac{\maxclient-1}{\maxclient}\right)^\ti\frac1\maxclient\lossapproximation{\client}{(0)}
        \right)\trueloss_\client(\bt)
    \end{equation*}
    written as a summation and recalling that $\lossapproximation{\client}{(0)}=0$ by definition, we can interpret this as the series:
    \begin{equation*}
        \lossapproximation{\client}{(\ti)}=\trueloss_\client(\bt)\sum_{i=0}^{\ti-1} \frac1\maxclient\left(\frac{\maxclient-1}{\maxclient}\right)^\ti
    \end{equation*}
    which is a geometric series. 
    And since $\frac{\maxclient-1}{\maxclient}\in(0,1)$ by elementary analysis this converges, as $\ti\rightarrow\infty$, to the limit
    \begin{equation*}
        \lim_{\ti\rightarrow\infty} \lossapproximation{\client}{(\ti)}=\trueloss_\client(\bt) \frac1\maxclient \maxclient=\trueloss_\client(\bt).
    \end{equation*}
    Therefore, as $\ti\rightarrow\infty$  $\q[\server]{(\ti)}\rightarrow\approxDist_{GBI}(\bt)$ 
 $\bt$ almost everywhere. 
 We can only guarantee almost everywhere pointwise convergence, since integral operators such as the KL divergence only guarantee equivalence up to null sets.
\end{proof}
Notably, the reason for using the cavity distribution instead of some other effective prior for the client optimisation step is that we want to recover the (generalised) Bayesian posterior eventually with our framework assuming that we can optimise over the entire space of probability measures that characterise their respective probability distributions. 
We further assume that we can find a global minimiser of any optimisation problem. 
Then, under these assumptions, we would like to not change the current posterior any further after recovering the GBI posterior. 

We have previously shown that our algorithm achieves just this, and we can furthermore show that the cavity distribution is indeed the only choice in the client update that causes this.
\subsection{Proof of \cref{thm:cavity}}\label{apx:cavity}
We are interested in verifying whether the cavity distribution is necessary in \cref{eqn:local_optim}.
It acts to regularise the optimisation problem at the client, which we restate here, using some arbitrary probability density $\rho\in\PTheta$:
\begin{equation*}
\q[\client]{(\ti)}=\minQ\left\{\Ebq[{\classificationlosstime[\client]{(\ti)}}]+\D{\approxDist}{\rho}\right\}
\end{equation*}
where it is regularised by $\D{\cdot}{\rho}$. 
It is clear that this should not be the prior distribution after the server has additional information about client data available since we would not be doing anything different for subsequent updates and this would result in a Bayesian Committee Machine where each client does not learn from the others.
Therefore it is imperative to ask what this `effective prior' $\rho$ should be?
And in fact it turns out that it needs to be the cavity distribution.

We will approach this problem by considering the case where we know what we would want to target in the optimization problem and hence the sequence $\{\q[\server]{(\ti)}\}_{\ti\in\mathbb{N}_0}$ should converge to. 
We, however, have to restrict ourselves to the Federated Learning scenario and therefore any distribution that we come up with needs to satisfy the \cref{axm:data,axm:update}. For this we require the following assumption so that we are able to target the GBI posterior.
\begin{assumption}\label{asp:minimiser}
We are able to find global minimisers over the entire space of probability distributions parametrised by $\bt$, $\PTheta$.
\end{assumption}
Then it turns out that this regularising distribution is uniquely described by \cref{thm:cavity}, which we restate here.

\textbf{\cref{thm:cavity}}\textit{
		Let the assumptions be as in \cref{col:GBI}, i.e. $\Q=\PTheta$, $\divergence=\frac{1}{\GBIparam}\divergence_{\kullbackleibler}$ for $\GBIparam>0$, $\divergence_\server=\divergence_{\kullbackleibler}$, $\trueloss_\client^{(\ti)}=\trueloss$, and $\dampingparam_\client=1$, and further assume that \cref{axm:data,axm:update} are satisfied, then the following are equivalent:
		\begin{enumerate}
			\item $\exists \ti \in[\timax]$ for which $\q[\server]{(\ti)}=\approxDist_{\mathrm{GBI}}(\bt)$ (a.e.) is invariant under further \FedGVI updates.
			\item The cavity distribution regularises the client optimisation problem.
        \end{enumerate}
}

\begin{proof}
	$(2 \implies 1)$ This is a direct consequence of \cref{col:GBI} and can easily be seen by iterating through the algorithm with the cavity distribution.
	
	$(1 \implies 2)$ Without loss of generality we consider the GBI posterior to be found after the first iteration. 
	We show that the unique way that satisfies the axioms and does not change the GBI posterior at the second iteration (or any further iterations) is uniquely achieved by the cavity distribution.
	By the statement we have
	\begin{equation*}
		\q[\server]{(2)}=\exp\{-\lossapproximation{\server}{(2)}\}\prior/\Z[(2)]{\server}=\exp\{-\lossapproximation{\server}{(1)}\}\prior/\Z[(1)]{\server}=\q[\server]{(1)}.
	\end{equation*}
	We now need to relate this to the client updates and hence the solutions of the client optimization problem.
	\begin{align}
		\q[\server]{(2)}=\q[\server]{(1)}&\iff \exp\{-\lossapproximation{\server}{(2)}\}/\Z[(2)]{\server}=\exp\{-\lossapproximation{\server}{(1)}\}/\Z[(1)]{\server}\nonumber\\
		&\iff \lossapproximation{\server}{(2)} +\log\Z[(2)]{\server}=\lossapproximation{\server}{(1)} +\log\Z[(1)]{\server}\nonumber\\
		&\iff \lossapproximation{\server}{(2)}=\lossapproximation{\server}{(1)}+C,\quad C\in\R\nonumber\\
		&\overset{\mathrm{def}}{\iff}  \sum_{\client=1}^{\maxclient}\clientupdate{\client}{(2)}+\lossapproximation{\server}{(1)}=\lossapproximation{\server}{(1)}+C\nonumber\\
		&\iff \sum_{\client=1}^{\maxclient}\clientupdate{\client}{(2)}=C\nonumber\\
		&\iff \sum_{\client=1}^{\maxclient}\log\frac{\q[\client]{(2)}}{\q[\server]{(1)}}=C\nonumber\\
		&\iff \prod_{\client=1}^\maxclient \q[\client]{(2)}=K\left(\q[\server]{(1)}\right)^\maxclient, \quad K=e^C\label{eqn:cavity_proof}
	\end{align}
	
	Now, for some transformation operator $\operatordist_\client:\PTheta\rightarrow\PTheta$ acting on the information available at the client from the server in the form of the current approximate posterior, which we denote as $\CavityOperator[\client]{\approxDist_\server^{(1)}}$, that satisfies the \cref{axm:data,axm:update}, we get the client optimisation problem $\forall \client$:
	\begin{align*}
		\q[\client]{(2)}&=\minS\left\{\Ebq[{\sameclassificationloss[\client]}]+\wkl[\GBIparam]{\approxDist}{\operatordist_\client[\approxDist_\server^{(1)}]}\right\}\\
		&= \minS \left\{ \frac1\GBIparam\Ebq[-\GBIparam\log\exp\{-{\sameclassificationloss[\client]}\}]+\frac1\GBIparam\Ebq[\log\frac{\approxDist(\bt)}{\CavityOperator[\client]{\approxDist_\server^{(1)}}}]\right\}\\
		&= \minS\left\{\frac1\GBIparam \Ebq[\log\frac{\approxDist(\bt)}{\exp\{-\GBIparam {\sameclassificationloss[\client]}\}\CavityOperator[\client]{\approxDist_\server^{(1)}}}]\right\}\\
		\implies \q[\client]{(2)}&=\exp\{-\GBIparam {\sameclassificationloss[\client]}\}\CavityOperator[\client]{\approxDist_\server^{(1)}}/\Z[(2)]{\client}%
	\end{align*}
	
	Substituting this into \cref{eqn:cavity_proof} and using the definition of $\q[\server]{(1)}$ we can derive a relation between the individual client approximations.
	\begin{gather*}
		\prod_{\client=1}^\maxclient \q[\client]{(2)}=K\left(\q[\server]{(1)}\right)^\maxclient\\
		\prod_{\client=1}^{\maxclient}\frac{\CavityOperator[\client]{\approxDist_\server^{(1)}}\exp\{-\GBIparam {\sameclassificationloss[\client]}\}}{\Z[(2)]{\client}}=K(\prior)^\maxclient\exp\left\{-\maxclient\GBIparam\sum_{\client=1}^\maxclient {\sameclassificationloss[\client]}\right\}/\left(\Z[(1)]{\server}\right)^\maxclient\\
		{\prod_{\client=1}^\maxclient}\CavityOperator[\client]{\approxDist_\server^{(1)}}/\Z[(2)]{\client}=K(\prior)^\maxclient\exp\left\{-(M-1)\GBIparam\sum_{\client=1}^\maxclient {\sameclassificationloss[\client]}\right\}/\left(\Z[(1)]{\server}\right)^\maxclient\\
		\implies \prod_{\client=1}^{\maxclient}\CavityOperator[\client]{\approxDist_\server^{(1)}}\propto \prod_{\client=1}^{\maxclient}\prior\exp\left\{-\GBIparam\sum_{{\otherclient}\ne \client}\sameclassificationloss[{\otherclient}]\right\}\propto\prod_{\client=1}^{\maxclient}\frac{\q[\server]{(1)}}{\exp\{-\GBIparam {\sameclassificationloss[\client]}\}}
	\end{gather*}
	Here, proportional '$\propto$' means equivalent up to some constant independent of $\bt$. 
	To see that the cavity distribution is in fact the only choice that satisfies the above equation, we need to recall the two axioms: 
	(\cref{axm:update}) $\CavityOperator[\client]{\approxDist_\server^{(1)}}$ needs to be generated in the same way across clients, and 
	(\cref{axm:data}) since we are in federated learning, each client will only be able to access it's own data. 
	This implies that we can write $\CavityOperator[\client]{\approxDist_\server^{(1)}}$ as a function of the current approximation and the client data, $\CavityOperator[\client]{\approxDist_\server^{(1)}}=\CavityOperator{\approxDist_\server^{(1)},\y_\client,\x_\client}$.
	\begin{equation*}
		\prod_{\client=1}^{\maxclient}\CavityOperator{\approxDist_\server^{(1)},\y_\client,\x_\client}\propto\prod_{\client=1}^{\maxclient}\frac{\q[\server]{(1)}}{\exp\{-\GBIparam {\sameclassificationloss[\client]}\}}
	\end{equation*}
	The only client that would have access to an explicit expression for the denominator would be client $\client$, to which the data $\{\x_\client,\y_\client\}$ belongs, and hence it must be entirely contained within that client's regularisation term $\operatordist_\client$.
	Therefore, we can conclude that $\q[\client]{(2)}=\q[\server]{(1)}$ and find a closed form for $\CavityOperator[\client]{\approxDist_\server^{(1)}}$. Note that this implies $C=0$ and hence $K=1$.
	\begin{gather*}
		\exp\{-\GBIparam {\sameclassificationloss[\client]}\}\CavityOperator[\client]{\approxDist_\server^{(1)}}/\Z{\approxDist_\client^{(2)}}=\exp\{-\sum_{\client=1}^\maxclient\GBIparam {\sameclassificationloss[\client]}\}\prior/\Z{\approxDist_\server^{(1)}}\\
		\CavityOperator[\client]{\approxDist_\server^{(1)}}\propto\frac{\exp\{-\sum_{\client=1}^\maxclient\GBIparam {\sameclassificationloss[\client]}\}\prior/\Z{\approxDist_\server^{(1)}}}{\exp\{-\GBIparam {\sameclassificationloss[\client]}\}/\Z{\approxDist_\client^{(2)}}}
	\end{gather*}
	This is exactly the cavity distribution as described in \cref{eqn:cavity}.
\end{proof}
This gives a justification for using the cavity distribution in our algorithm, since under the assumption that the prior is well specified, we would like to converge to the generalised Bayesian posterior distribution. 
Furthermore, we can note that this single step of \FedGVI recovers the principle of the Bayesian Committee Machine (BCM) of \citet{tresp2000} where we use generalised loss functions instead of the negative log likelihood in our formulation.
Furthermore, a single pass through \FedGVI---with the divergences as described above--- will recover a generalised version of the BCM irregardless of the space we optimise over.
\begin{remark}
	For the last two proofs we have assumed that we can find the global minimisers of the equations.
	This isn't strictly necessary to have since the use of the (weighted) Kullback--Leibler divergence allows us to formulate a closed form expression for what these will look like.
\end{remark}

\subsection{Proof of \cref{prop:conjugate}}\label{apx:conjugate}
This proposition is a direct result of Proposition 3.1 in \citet{altamirano2023} and the proof is analogous, we merely include it here for completeness. 
And while the stated result is in a regression setting, it can be be extended to the classification setting similar to \citet{altamirano2024} where Gaussian Processes are considered. 

We assume that each client has a data set $\{\x_\arbitraryindex\}_{\arbitraryindex=1}^{\clientdatasize}$ of size $\clientdatasize$. 
The divergence operator $\nabla \cdot f(\x)$ is defined in the usual way as the inner product between the vector of partial derivative operators and the vector of some vector valued function $f(\x)$ as $\nabla \cdot f(\x)=\langle(\partial/\partial x_1,...,(\partial/\partial x_d )^\transpose ,(f_1(\x), ...,f_d(\x))^\transpose\rangle$, and $\jacobian[\x]g(\x)$ is the Jacobian, the vector of partial derivatives of $g(\x)$. 
We further assume that $\dataspace\subseteq \R^\dimension$, and that $\expfamDGP\in\PTheta$.
\begin{proof} 
	The loss of some client $\client\in[\maxclient]$ at some arbitrary iteration $\ti\in[\timax]$ is given by
	\begin{equation*}
		\hat{\divergence}(\bt,\DGP_{n_\client}):=\frac1\clientdatasize\sum_{\arbitraryindex=1}^{\clientdatasize}\underbrace{||\weightfctindex[\client]{(\ti)}^\transpose\jacobian[\x]\log \expfamDGP[\arbitraryindex]||_2^2}_{(1)}+2\underbrace{\nabla\cdot (\weightfctindex[\client]{(\ti)}\weightfctindex[\client]{(\ti)}^\transpose\jacobian[\x]\log \expfamDGP[\arbitraryindex])}_{(2)}
	\end{equation*}
	where $\jacobian[\x]\log \expfamDGP[\arbitraryindex]= \jacobian[\x]\natparam(\bt)^\transpose\suffiecientstas(\x_\arbitraryindex)+\jacobian[\x]\refmeasure(\x_\arbitraryindex)$. We can then expand the terms in the above terms which we then give equal up to an additive constant independent of $\bt$.
	\begin{align*}
		(1) &= (\weightfctindex[\client]{(\ti)}^\transpose (\jacobian[\x]\suffiecientstas(\x_\arbitraryindex)^\transpose\natparam(\bt)+\jacobian[\x]\refmeasure(\x_\arbitraryindex)))^\transpose(\weightfctindex[\client]{(\ti)}^\transpose (\jacobian[\x]\suffiecientstas(\x_\arbitraryindex)^\transpose\natparam(\bt)+\jacobian[\x]\refmeasure(\x_\arbitraryindex)))\\
		&= (\weightfctindex[\client]{(\ti)}^\transpose\jacobian[\x]\suffiecientstas(\x_\arbitraryindex)^\transpose\natparam(\bt))^\transpose(\weightfctindex[\client]{(\ti)}^\transpose\jacobian[\x]\suffiecientstas(\x_\arbitraryindex)^\transpose\natparam(\bt)) + (\weightfctindex[\client]{(\ti)}^\transpose\jacobian[\x]\refmeasure(\x_\arbitraryindex))^\transpose(\weightfctindex[\client]{(\ti)}^\transpose\jacobian[\x]\refmeasure(\x_\arbitraryindex))  \\&+2 (\weightfctindex[\client]{(\ti)}^\transpose\jacobian[\x]\suffiecientstas(\x_\arbitraryindex)^\transpose\natparam(\bt))^\transpose((\weightfctindex[\client]{(\ti)}^\transpose\jacobian[\x]\refmeasure(\x_\arbitraryindex))) \\
		&\overset{+c}{=}\natparam(\bt)^\transpose\jacobian[\x]\suffiecientstas(\x_\arbitraryindex) \weightfctindex[\client]{(\ti)}\weightfctindex[\client]{(\ti)}^\transpose\jacobian[\x]\suffiecientstas(\x_\arbitraryindex)^\transpose\natparam(\bt)+\natparam(\bt)^\transpose\jacobian[\x]\suffiecientstas(\x_\arbitraryindex) \weightfctindex[\client]{(\ti)}\weightfctindex[\client]{(\ti)}^\transpose\jacobian[\x]\refmeasure(\x_\arbitraryindex)
	\end{align*}
	where the last line follows since the middle terms are independent of $\bt$ as long as the weight function is independent of $\bt$.
	\begin{align*}
		(2) &= \nabla\cdot (\weightfctindex[\client]{(\ti)}\weightfctindex[\client]{(\ti)}^\transpose\jacobian[\x]\natparam(\bt)^\transpose\suffiecientstas(\x_\arbitraryindex)) + \nabla\cdot (\weightfctindex[\client]{(\ti)}\weightfctindex[\client]{(\ti)}^\transpose\jacobian[\x]\refmeasure(\x_\arbitraryindex))\\
		&\overset{+c}{=} \natparam(\bt)^\transpose(\nabla\cdot(\weightfctindex[\client]{(\ti)}\weightfctindex[\client]{(\ti)}^\transpose\jacobian[\x]\suffiecientstas(\x_\arbitraryindex)))
	\end{align*}
	Then, this has the form $\hat{\divergence}(\bt,\DGP)\overset{+c}{=}\natparam(\bt)^\transpose\tempcovA{\client}{(\ti)}\natparam(\bt)+\natparam(\bt)^\transpose\tempmeanA{\client}{(\ti)}$, where
	\begin{gather*}
		\tempcovA{\client}{(\ti)}:=\frac1\clientdatasize\sum_{\arbitraryindex=1}^{\clientdatasize}\jacobian[\x]\suffiecientstas(\x_\arbitraryindex) \weightfctindex[\client]{(\ti)}\weightfctindex[\client]{(\ti)}^\transpose\jacobian[\x]\suffiecientstas(\x_\arbitraryindex)^\transpose\qquad\mathrm{and}\qquad \tempmeanA{\client}{(\ti)}:=\frac2\clientdatasize\sum_{\arbitraryindex=1}^{\clientdatasize}\nabla\cdot(\weightfctindex[\client]{(\ti)}\weightfctindex[\client]{(\ti)}^\transpose\jacobian[\x]\suffiecientstas(\x_\arbitraryindex)).
	\end{gather*}
	The first art follows by setting $\q[\client]{(\ti)}\propto\cavityDist{\client}_{(\ti)} (\bt)\exp\{-\GBIparam\clientdatasize(\natparam(\bt)^\transpose\tempcovA{\client}{(\ti)}\natparam(\bt)+\natparam(\bt)^\transpose\tempmeanA{\client}{(\ti)})\}$.
	Then, if $\natparam(\bt)=\bt$, and the local cavity distribution has the form $\cavityDist{\client}_{(\ti)} (\bt)\propto\exp\{-\frac12(\bt-\tempmeanB{\backslash\client}{(\ti)})^\transpose\tempcovB{\backslash\client}{(\ti)}^{-1}(\bt-\tempmeanB{\backslash\client}{(\ti)})\}$, then the local posterior is conjugate and is given by $\q[\client]{(\ti)}\propto\exp\{-\frac12(\bt-\tempmeanB{\client}{(\ti)})^\transpose\tempcovB{\client}{(\ti)}^{-1}(\bt-\tempmeanB{\client}{(\ti)})\}$, where
	\begin{align*}
		\q[\client]{(\ti)}&\propto\cavityDist{\client}_{(\ti)} (\bt)\exp\{-\GBIparam\clientdatasize(\natparam(\bt)^\transpose\tempcovA{\client}{(\ti)}\natparam(\bt)+\natparam(\bt)^\transpose\tempmeanA{\client}{(\ti)})\}\\
		&\propto\exp\{-\frac12(\bt-\tempmeanB{\backslash\client}{(\ti)})^\transpose\tempcovB{\backslash\client}{(\ti)}^{-1}(\bt-\tempmeanB{\backslash\client}{(\ti)})\}\exp\{-\GBIparam\clientdatasize(\bt^\transpose\tempcovA{\client}{(\ti)}\bt+\bt^\transpose\tempmeanA{\client}{(\ti)})\}\\
		&\propto\exp\{-\frac12[\bt^\transpose\tempcovB{\backslash\client}{(\ti)}^{-1}\bt -2\bt^\transpose\tempcovB{\backslash\client}{(\ti)}^{-1}\tempmeanB{\backslash\client}{(\ti)}+ 2\GBIparam\clientdatasize\bt^\transpose\tempcovA{\client}{(\ti)}\bt + 2 \GBIparam\clientdatasize\bt^\transpose\tempmeanA{\client}{(\ti)}]\}\\
		&\propto\exp\{-\frac12(\bt-\tempmeanB{\client}{(\ti)})^\transpose\tempcovB{\client}{(\ti)}^{-1}(\bt-\tempmeanB{\client}{(\ti)})\}
	\end{align*}
	where in the last line, we complete the square and get parameters
	\begin{gather*}
		\tempcovB{\client}{(\ti)}^{-1}:= \tempcovB{\backslash\client}{(\ti)}^{-1}+\GBIparam\clientdatasize\tempcovA{\client}{(\ti)}\qquad\mathrm{and}\qquad \tempmeanB{\client}{(\ti)}:= \tempcovB{\client}{(\ti)}(\tempcovB{\backslash\client}{(\ti)}^{-1}\tempmeanB{\backslash\client}{(\ti)}-\tempcovA{\client}{(\ti)}\tempmeanA{\client}{(\ti)}).
	\end{gather*}

    The moreover part can now easily be seen. 
    The update will be quadratic in $\bt$ and hence summing these results in a quadratic function, and since the posterior will have Gaussian distribution, the expectation with respect to the posterior of this quadratic function will have closed form. 
    Therefore, if the divergence at the server allows for closed form solutions between Multivariate Gaussians, then the entire \cref{eqn:server_optim} will have a closed form optimisation procedure that does not require sampling to approximate integrals.
\end{proof}
Note we have implicitly used the weighted KL divergence with parameter $\GBIparam\clientdatasize$. 
Note also that this does not immediately follow from \cref{col:GBI} since the weighting function is allowed to change depending on the iteration and the client. 
In our experiments, we for instance use the weighting function as measuring some deviation of a data point to the cavity mean. Furthermore, the weighting function does depend on the data point, but we suppress this dependence here to lighten notation.
\subsection{Proof of \cref{thm:robustness}}\label{apx:provablerobustness}
This result is more involved to prove where we show by induction that at each iteration, the posterior generated at the server is robust to outliers through the robustness of each client's loss function to outliers. 
To prove this result, we first consider what we mean by robustness and introduce some terminology. 
We consider the empirical data distribution of all clients $\DGP_\datasize=\frac1\datasize\sum_{\arbitraryindex}^{\datasize}\delta_{\x_\arbitraryindex}$ which is perturbed by some Huber contamination with parameter $\varepsilon$ at some adversarially chosen data point $z\in\dataspace$ as $\DGP_{\datasize,\varepsilon,z}:= (1-\varepsilon)\DGP_\datasize+\varepsilon\delta_z$, where the subscript $\datasize$ indicates how many data points are drawn from the distribution. 
Note that $\DGP_\datasize=\frac1\datasize\sum_{\client=1}^{\maxclient}\sum_{\arbitraryindex=1}^{\clientdatasize}\delta_{{\x_\client}_\arbitraryindex}=\frac1\datasize\sum_{\client=1}^{\maxclient}\clientdatasize\DGP_{\clientdatasize}$.
We then write $\semicolonq{\server}{(\ti)}{\DGP_{\datasize,\varepsilon,z}}$ to indicate a distribution with respect to data generated from the specified DGP. 
We first recall the notion of robustness introduced by \citet{ghosh2016a}. Note that we suppress the measure $\mu(d\bt)$ in the following and simply write $d\bt$. The posterior influence is given by:
\begin{equation*}
	\mathrm{PIF}(z,\bt,\DGP_\datasize):=\lim_{\varepsilon\downarrow0}\frac{\semicolonq{\server}{(\ti)}{\DGP_{\datasize,\varepsilon,z}}- \semicolonq{\server}{(\ti)}{\DGP_{\datasize}}}{\varepsilon}=\frac{d}{d\varepsilon}\semicolonq{\server}{(\ti)}{\DGP_{\datasize,\varepsilon,z}}|_{\varepsilon=0}
\end{equation*}
where the last line follows by L'Hopital's rule. \citet{ghosh2016a} further show, and one can easily check, that for $\semicolonq{\server}{(\ti)}{\DGP_{\datasize,\varepsilon,z}}=\pi(\bt)\exp\{-\semicolonapproxloss{\server}{(\ti)}{\DGP_{\datasize,\varepsilon,z}}\}/\int\pi(\bt)\exp\{-\semicolonapproxloss{\server}{(\ti)}{\DGP_{\datasize,\varepsilon,z}}\}d\bt$ this is equal to:
\begin{equation*}
	\mathrm{PIF}(z,\bt,\DGP_\datasize)=\semicolonq{\server}{(\ti)}{\DGP_{\datasize}}\left(-\frac{d}{d\varepsilon}\semicolonapproxloss{\server}{(\ti)}{\DGP_{\datasize,\varepsilon,z}}\big|_{\varepsilon=0}+\int_\parameterspace\frac{d}{d\varepsilon}\semicolonapproxloss{\server}{(\ti)}{\DGP_{\datasize,\varepsilon,z}}\big|_{\varepsilon=0}\Pi(d\bt)\right)
\end{equation*}
We call loss robust if it has finite posterior influence, i.e. $\sup_{\bt\in\parameterspace}\sup_{z\in\dataspace}|\mathrm{PIF}(z,\bt,\DGP_\datasize)|<\infty$.
To this end, we now state a Lemma due to \citet{matsubara2022}, adapted to our notation for \FedGVI which we have rephrased into \cref{def:robust_loss}.
\begin{lemma}[\citet{matsubara2022}]\label{lem:robustness_condition}
	Let $\semicolonq{{(\cdot)}}{(\ti)}{\DGP_{\datasize}}$ be a posterior computed at the server or the client with fixed $\datasize\in\mathbb{N}$ with loss $\semicolonapproxloss{{(\cdot)}}{(\ti)}{\DGP_\datasize}$ and a prior $\prior$. Suppose that $\semicolonapproxloss{{(\cdot)}}{(\ti)}{\DGP_\datasize}$ is lower bounded and that $\prior$ is upper bounded over $\bt\in\parameterspace$, for any $\DGP_\datasize$. Then if there exists some function $\auxfctname_{(\cdot)}^{(\ti)}:\Theta\rightarrow\R$ such that
	\begin{align*}
		1.&\quad \sup_{z\in\dataspace}\left|\frac{d}{d\varepsilon}\semicolonapproxloss{(\cdot)}{(\ti)}{\DGP_{\datasize,\varepsilon,z}}\big|_{\varepsilon=0}\right|\le\auxfct{{(\cdot)}}{(\ti)},\\
		2.&\quad \sup_{\bt\in\parameterspace}\prior\auxfct{{(\cdot)}}{(\ti)}<\infty, \;\mathrm{and}\\
		3.&\quad\int_\parameterspace\prior\auxfct{{(\cdot)}}{(\ti)}d\bt<\infty
	\end{align*}
	hold, then $\semicolonq{(\cdot)}{(\ti)}{\DGP_{\datasize}}$ is globally bias--robust.
\end{lemma}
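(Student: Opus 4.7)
The goal is to verify $\sup_{z\in\dataspace,\,\bt\in\parameterspace}|\mathrm{PIF}(z,\bt,\DGP_\datasize)|<\infty$. The plan is to begin from the closed-form expression for the posterior influence function stated immediately above the lemma, namely
\begin{equation*}
    \mathrm{PIF}(z,\bt,\DGP_\datasize)=\semicolonq{(\cdot)}{(\ti)}{\DGP_\datasize}\left(-\tfrac{d}{d\varepsilon}\semicolonapproxloss{(\cdot)}{(\ti)}{\DGP_{\datasize,\varepsilon,z}}\big|_{\varepsilon=0}+\int_\parameterspace\tfrac{d}{d\varepsilon}\semicolonapproxloss{(\cdot)}{(\ti)}{\DGP_{\datasize,\varepsilon,z}}\big|_{\varepsilon=0}\Pi(d\bt)\right),
\end{equation*}
apply the triangle inequality, and then bound each of the two resulting pieces uniformly in $(z,\bt)$.

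First I would build a uniform envelope for the posterior density. Let $\underline{L}$ denote the assumed lower bound on the loss and $B$ the assumed upper bound on $\prior$; then $\exp\{-\semicolonapproxloss{(\cdot)}{(\ti)}{\DGP_\datasize}\}\le C:=e^{-\underline{L}}$, and with normaliser $Z:=\int_\parameterspace \prior\exp\{-\semicolonapproxloss{(\cdot)}{(\ti)}{\DGP_\datasize}\}d\bt$ this yields $\semicolonq{(\cdot)}{(\ti)}{\DGP_\datasize}\le C\,\prior/Z\le CB/Z$. The normaliser is strictly positive because its integrand is pointwise positive and $\prior$ is a probability density, so $1/Z$ is a finite, $(z,\bt)$-independent constant.

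For the pointwise term, condition 1 yields $|\tfrac{d}{d\varepsilon}\semicolonapproxloss{(\cdot)}{(\ti)}{\DGP_{\datasize,\varepsilon,z}}|_{\varepsilon=0}|\le\auxfct{{(\cdot)}}{(\ti)}$ uniformly in $z$, and combining with the envelope gives
\begin{equation*}
    \left|\semicolonq{(\cdot)}{(\ti)}{\DGP_\datasize}\cdot\tfrac{d}{d\varepsilon}\semicolonapproxloss{(\cdot)}{(\ti)}{\DGP_{\datasize,\varepsilon,z}}\big|_{\varepsilon=0}\right|\le\tfrac{C}{Z}\,\prior\auxfct{{(\cdot)}}{(\ti)},
\end{equation*}
which is $(z,\bt)$-uniformly bounded by condition 2. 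For the integrated term, apply condition 1 inside the integral to dominate the integrand in $z$ by $\auxfctname_{(\cdot)}^{(\ti)}$; then $|\int_\parameterspace\tfrac{d}{d\varepsilon}\semicolonapproxloss{(\cdot)}{(\ti)}{\DGP_{\datasize,\varepsilon,z}}|_{\varepsilon=0}\Pi(d\bt)|\le\int_\parameterspace\prior\auxfct{{(\cdot)}}{(\ti)}d\bt$, which is finite by condition 3. Multiplying this finite bound by the outer envelope $\semicolonq{(\cdot)}{(\ti)}{\DGP_\datasize}\le CB/Z$ closes the estimate for the integrated piece uniformly in $(z,\bt)$. Assembling both bounds via the triangle inequality gives the claimed global bound on $|\mathrm{PIF}|$, i.e. global bias-robustness.

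The only genuine subtlety is the derivation of the closed-form PIF itself, which swaps differentiation and integration when differentiating the posterior's normaliser in $\varepsilon$; this exchange is legitimised precisely by conditions 1 and 3, which supply an integrable envelope for dominated convergence. Strict positivity of $Z$ and the remaining manipulations are immediate from the stated hypotheses, so I expect no further technical obstacles beyond that one check.
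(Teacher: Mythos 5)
Your proposal is correct, but there is nothing in the paper to compare it against: the paper does not prove this lemma, it imports it verbatim (up to notation) from \citet{matsubara2022}, states the closed-form posterior influence function just above it, and proceeds directly to the proof of \cref{thm:robustness}. Your argument is therefore a reconstruction of the cited result rather than an alternative route, and it is the standard one --- bound the posterior density by a constant multiple of $\prior$ using the lower bound on the loss and the strict positivity and finiteness of the normaliser, control the pointwise term via condition 2 and the integrated term via condition 3, and assemble with the triangle inequality. Two small points are worth flagging. First, the paper's displayed PIF formula integrates the derivative of the loss against $\Pi(d\bt)$, whereas the expression in \citet{ghosh2016a} integrates against the posterior; your estimate survives either reading, because the envelope $\semicolonq{(\cdot)}{(\ti)}{\DGP_{\datasize}}\le C\,\prior/Z$ reduces the posterior integral to condition 3 in any case, but you should say explicitly which measure you are using. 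Second, your observation that the interchange of $\tfrac{d}{d\varepsilon}$ with the normalising integral needs a dominated-convergence justification is the right instinct; strictly one needs a dominating function valid on a neighbourhood of $\varepsilon=0$ rather than only at $\varepsilon=0$, a technicality that the source literature also glosses over, so acknowledging it as you do is adequate here.
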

We provide further clarification on these conditions in \cref{apx:def_notes}, and we are now able to give the proof of \cref{thm:robustness}.
\begin{proof}
	By the \cref{lem:robustness_condition}, we need to show that 
	\begin{equation*}
		\sup_{z\in\dataspace}\left|\frac{d}{d\varepsilon}\semicolonapproxloss{\server}{(\ti)}{\DGP_{\datasize,\varepsilon,z}}\big|_{\varepsilon=0}\right|\le\auxfct{\server}{(\ti)}
	\end{equation*}
	and that this $\auxfct{\server}{(\ti)}$ satisfies conditions (2.) and (3.) of the Lemma.
	Per assumption we know that the clients are robust to likelihood misspecification, so we need to relate the server loss to the client posterior influence functions. 
	To this end, we consider the loss at the server.
	\begin{equation*}
		\semicolonapproxloss{\server}{(\ti)}{\DGP_{\datasize,\varepsilon,z}} = \sum_{\client=1}^{\maxclient}\semicolonupdate{\client}{(\ti)}{\DGP_{\datasize,\varepsilon,z}} + \semicolonapproxloss{\server}{(\ti-1)}{\DGP_{\datasize,\varepsilon,z}}
	\end{equation*}
	where for each client, the update is given through \cref{eqn:local_update}
	\begin{align*}
		\semicolonupdate{\client}{(\ti)}{\DGP_{\datasize,\varepsilon,z}} &=-\log\frac{\semicolonq{\client}{(\ti)}{\DGP_{\datasize,\varepsilon,z}}}{\semicolonq{\server}{(\ti-1)}{\DGP_{\datasize,\varepsilon,z}}}
		=-\log\cfrac{\cfrac{\semicoloncavity{\client}{(\ti)}{\DGP_{\datasize,\varepsilon,z}}\exp\{-\GBIparam\clientdatasize\semicolonloss{\client}{(\ti)}{\DGP_{\clientdatasize,\varepsilon,z}}\}}{\int\semicoloncavity{\client}{(\ti)}{\DGP_{\datasize,\varepsilon,z}}\exp\{-\GBIparam\clientdatasize\semicolonloss{\client}{(\ti)}{\DGP_{\datasize,\varepsilon,z}}\}d\bt}}{\semicolonq{\server}{(\ti-1)}{\DGP_{\datasize,\varepsilon,z}}}\\
		&=-\log\cfrac{\cfrac{\cfrac{\cfrac{\semicolonq{\server}{(\ti-1)}{\DGP_{\datasize,\varepsilon,z}}}{\exp\{-\beta\clientdatasize\semicolonloss{\client}{(\ti-1)}{\DGP_{\datasize,\varepsilon,z}}\}}}{\int \cfrac{\semicolonq{\server}{(\ti-1)}{\DGP_{\datasize,\varepsilon,z}}}{\exp\{-\beta\clientdatasize\semicolonloss{\client}{(\ti-1)}{\DGP_{\datasize,\varepsilon,z}}\}}d\bt}\exp\{-\GBIparam\clientdatasize\semicolonloss{\client}{(\ti)}{\DGP_{\clientdatasize,\varepsilon,z}}\}}{\int\cfrac{\cfrac{\semicolonq{\server}{(\ti-1)}{\DGP_{\datasize,\varepsilon,z}}}{\exp\{-\beta\clientdatasize\semicolonloss{\client}{(\ti-1)}{\DGP_{\datasize,\varepsilon,z}}\}}}{\int \cfrac{\semicolonq{\server}{(\ti-1)}{\DGP_{\datasize,\varepsilon,z}}}{\exp\{-\beta\clientdatasize\semicolonloss{\client}{(\ti-1)}{\DGP_{\datasize,\varepsilon,z}}\}}d\bt}\exp\{-\GBIparam\clientdatasize\semicolonloss{\client}{(\ti)}{\DGP_{\clientdatasize,\varepsilon,z}}\}d\bt}}{\semicolonq{\server}{(\ti-1)}{\DGP_{\datasize,\varepsilon,z}}}\\
		&=-\log \cfrac{\cfrac{\exp\{-\beta\clientdatasize\semicolonloss{\client}{(\ti)}{\DGP_{\clientdatasize,\varepsilon,z}}\}}{\exp\{-\beta\clientdatasize\semicolonloss{\client}{(\ti-1)}{\DGP_{\datasize,\varepsilon,z}}\}}}{\int \cfrac{\semicolonq{\server}{(\ti-1)}{\DGP_{\datasize,\varepsilon,z}}}{\exp\{-\beta\clientdatasize\semicolonloss{\client}{(\ti-1)}{\DGP_{\datasize,\varepsilon,z}}\}}\exp\{-\beta\clientdatasize\semicolonloss{\client}{(\ti)}{\DGP_{\clientdatasize,\varepsilon,z}}\}d\bt}\\
		&= -\log \frac{\exp\{-\beta\clientdatasize\semicolonloss{\client}{(\ti)}{\DGP_{\clientdatasize,\varepsilon,z}}\}}{\exp\{-\beta\clientdatasize\semicolonloss{\client}{(\ti-1)}{\DGP_{\datasize,\varepsilon,z}}\}}+\log \Z[(\ti)]{\client}(\DGP_{\datasize,\varepsilon,z})
	\end{align*}
	Therefore,
	\begin{align*}
		\semicolonapproxloss{\server}{(\ti)}{\DGP_{\datasize,\varepsilon,z}}&=\sum_{\client=1}^{\maxclient}-\log \frac{\exp\{-\beta\clientdatasize\semicolonloss{\client}{(\ti)}{\DGP_{\clientdatasize,\varepsilon,z}}\}}{\exp\{-\beta\clientdatasize\semicolonloss{\client}{(\ti-1)}{\DGP_{\datasize,\varepsilon,z}}\}}+\log \Z[(\ti)]{\client}(\DGP_{\datasize,\varepsilon,z}) + \semicolonapproxloss{\server}{(\ti-1)}{\DGP_{\datasize,\varepsilon,z}}\\
		&=\sum_{\client=1}^{\maxclient}-\log {\exp\{-\beta\clientdatasize\semicolonloss{\client}{(\ti)}{\DGP_{\clientdatasize,\varepsilon,z}}\}}+\sum_{\client=1}^{\maxclient}\sum_{\arbitraryindex=1}^{\ti}\log \Z[(\arbitraryindex)]{\client}(\DGP_{\datasize,\varepsilon,z})\\
		&=\sum_{\client=1}^{\maxclient}\beta\clientdatasize\semicolonloss{\client}{(\ti)}{\DGP_{\clientdatasize,\varepsilon,z}}+\sum_{\client=1}^{\maxclient}\sum_{\arbitraryindex=1}^{\ti}\log \Z[(\arbitraryindex)]{\client}(\DGP_{\datasize,\varepsilon,z})
	\end{align*}
	We will now show by induction on $\ti$ that the posterior at the server is robust. 
	
	Concretely we will show that $\forall \ti\in[\timax]$, $\timax\in\mathbb{N}$, and $\maxclient\in\mathbb{N}$ finite, then
	\begin{equation*}
		\sup_{z\in\dataspace}\left|\frac{d}{d\varepsilon}\semicolonapproxloss{\server}{(\ti)}{\DGP_{\datasize,\varepsilon,z}}\big|_{\varepsilon=0}\right|\le \beta\sum_{\client=1}^{\maxclient}\clientdatasize\sup_{z\in\dataspace}\left|\frac{d}{d\varepsilon}\semicolonloss{\client}{(\ti)}{\DGP_{\clientdatasize,\varepsilon,z}}\big|_{\varepsilon=0}\right|+\sum_{\client=1}^{\maxclient}\sum_{\arbitraryindex=1}^{\ti}\sup_{z\in\dataspace}\left|\frac{d}{d\varepsilon}\log \Z[(\arbitraryindex)]{\client}(\DGP_{\datasize,\varepsilon,z})\big|_{\varepsilon=0}\right|\le\auxfct{\server}{(\ti)}
	\end{equation*}
	such that this function $\auxfct{\server}{(\ti)}$ satisfies the conditions of \cref{lem:robustness_condition}.
	Note that the first inequality follows by Minkowski's inequality.
	
	We begin by considering the case where $\ti=1$, then we have $\semicolonq{\server}{(1-1)}{\DGP_{\datasize,\varepsilon,z}}=\prior$ and $\semicolonloss{\client}{(1-1)}{\DGP_{\datasize,\varepsilon,z}}=0$ as initialised in the algorithm. 
	
	Consider the term $\frac{d}{d\varepsilon}\log \Z[(\arbitraryindex)]{\client}(\DGP_{\datasize,\varepsilon,z})\big|_{\varepsilon=0}$, then we have
	\begin{align*}
		\frac{d}{d\varepsilon}\log \Z[(1)]{\client}(\DGP_{\datasize,\varepsilon,z})\big|_{\varepsilon=0} &=\frac{	\frac{d}{d\varepsilon}\Z[(1)]{\client}(\DGP_{\datasize,\varepsilon,z})|_{\varepsilon=0}}{\Z[(1)]{\client}(\DGP_{\datasize,\varepsilon,z})|_{\varepsilon=0}}\\
		&=\cfrac{\int\cfrac{d}{d\varepsilon}\cfrac{\semicolonq{\server}{(1-1)}{\DGP_{\datasize,\varepsilon,z}}\exp\{-\GBIparam\clientdatasize\semicolonloss{\client}{(1)}{\DGP_{\clientdatasize,\varepsilon,z}}\}}{\exp\{-\GBIparam\clientdatasize\semicolonloss{\client}{(1-1)}{\DGP_{-\clientdatasize,\varepsilon,z}}\}}\Big|_{\varepsilon=0}d\bt}{\Z[(1)]{\client}(\DGP_{\clientdatasize})}\\
		&=\int\frac{\frac{d}{d\varepsilon}{\prior\exp\{-\GBIparam\clientdatasize\semicolonloss{\client}{(1)}{\DGP_{\clientdatasize,\varepsilon,z}}\}}\big|_{\varepsilon=0}d\bt}{\Z[(1)]{\client}(\DGP_{\clientdatasize})}\\
		&=-\int\left(\frac{d}{d\varepsilon}\GBIparam\clientdatasize\semicolonloss{\client}{(1)}{\DGP_{\clientdatasize,\varepsilon,z}}\big|_{\varepsilon=0}\right)\semicolonq{\client}{(1)}{\DGP_{\clientdatasize}}d\bt
	\end{align*}
	where the last equation follows since $\frac{d}{dx}\exp\{f(x)\}=\exp\{f(x)\}\frac{d}{dx}f(x)$.
	
	Consequently, using Jensen's inequality
	\begin{gather*}
		\sup_{z\in\dataspace}\left|\frac{d}{d\varepsilon}\semicolonapproxloss{\server}{(1)}{\DGP_{\datasize,\varepsilon,z}}\big|_{\varepsilon=0}\right|\\\le \beta\sum_{\client=1}^{\maxclient}\clientdatasize\sup_{z\in\dataspace}\left|\frac{d}{d\varepsilon}\semicolonloss{\client}{(1)}{\DGP_{\clientdatasize,\varepsilon,z}}\big|_{\varepsilon=0}\right|+\sum_{\client=1}^{\maxclient}\sup_{z\in\dataspace}\left|\int\left(\frac{d}{d\varepsilon}\GBIparam\clientdatasize\semicolonloss{\client}{(1)}{\DGP_{\clientdatasize,\varepsilon,z}}\big|_{\varepsilon=0}\right)\semicolonq{\client}{(1)}{\DGP_{\clientdatasize}}d\bt\right|\\
		\le \beta\sum_{\client=1}^{\maxclient}\clientdatasize\left(\sup_{z\in\dataspace}\left|\frac{d}{d\varepsilon}\semicolonloss{\client}{(1)}{\DGP_{\clientdatasize,\varepsilon,z}}\big|_{\varepsilon=0}\right|+\int\sup_{z\in\dataspace}\left|\frac{d}{d\varepsilon}\semicolonloss{\client}{(1)}{\DGP_{\clientdatasize,\varepsilon,z}}\big|_{\varepsilon=0}\right|\semicolonq{\client}{(1)}{\DGP_{\clientdatasize}}d\bt\right)
	\end{gather*}
	Then, if $\semicolonloss{\client}{(1)}{\DGP_{\clientdatasize,\varepsilon,z}}$ is robust, then there exists some function $\auxfct{\client}{(1)}:\parameterspace\rightarrow\R$ such that $\sup_{z\in\dataspace}\left|\frac{d}{d\varepsilon}\semicolonloss{\client}{(1)}{\DGP_{\clientdatasize,\varepsilon,z}}\big|_{\varepsilon=0}\right|\le \auxfct{\client}{(1)}$ and which satisfies:
	\begin{equation*}
		\sup_{\bt\in\parameterspace}\prior\auxfct{\client}{(1)}<\infty,\quad\mathrm{and}\quad\int\prior\auxfct{\client}{(1)}d\bt<\infty.
	\end{equation*}
	Substituting this into the above, we have that
	\begin{gather*}
		\sup_{z\in\dataspace}\left|\frac{d}{d\varepsilon}\semicolonapproxloss{\server}{(1)}{\DGP_{\datasize,\varepsilon,z}}\big|_{\varepsilon=0}\right|\le \beta\sum_{\client=1}^{\maxclient}\clientdatasize\left(\auxfct{\client}{(1)}+\int\auxfct{\client}{(1)}\semicolonq{\client}{(1)}{\DGP_{\clientdatasize}}d\bt\right)
	\end{gather*}
	Now recall that $\semicolonq{\client}{(1)}{\DGP_{\clientdatasize}}=\prior\exp\{-\GBIparam\clientdatasize\semicolonloss{\client}{(1)}{\DGP_{\clientdatasize}}\}/\Z[\client]{(1)}(\DGP_{\clientdatasize})$, and per the assumption we have that the loss is lower bounded and that $0<\Z[\client]{(1)}(\DGP_{\clientdatasize})<\infty$, therefore $\semicolonq{\client}{(1)}{\DGP_{\clientdatasize}}\le \prior\exp\{-\GBIparam\clientdatasize\inf_{\bt\in\parameterspace}\semicolonloss{\client}{(1)}{\DGP_{\clientdatasize}}\}/\Z[\client]{(1)}(\DGP_{\clientdatasize})\le C_\client^{(1)}\prior$ so that,
	\begin{equation*}
		\sup_{z\in\dataspace}\left|\frac{d}{d\varepsilon}\semicolonapproxloss{\server}{(1)}{\DGP_{\datasize,\varepsilon,z}}\big|_{\varepsilon=0}\right|\le \beta\sum_{\client=1}^{\maxclient}\clientdatasize\left(\auxfct{\client}{(1)}+C_\client^{(1)}\int\auxfct{\client}{(1)}\prior d\bt\right)=:\auxfct{\server}{(1)}
	\end{equation*}
	We now verify that the conditions hold. For condition 2, we have
	\begin{align*}
		\sup_{\bt\in\parameterspace}\prior\auxfct{\server}{(1)}&\le\beta\sum_{\client=1}^{\maxclient}\clientdatasize\left(\left(\sup_{\bt\in\parameterspace}\prior\auxfct{\client}{(1)}\right)+\left(\sup_{\bt\in\parameterspace}\prior\right)C_\client^{(1)}\int\auxfct{\client}{(1)}\prior d\bt\right)<\infty
	\end{align*}
	which follows by the assumptions on the robustness of the loss and that the prior is upper bounded, as well as the finiteness of $\GBIparam$, $\clientdatasize$, and $C_\client^{(1)}$.
	
	Condition 3 follows similar reasoning.
	\begin{align*}
		\int\auxfct{\server}{(1)}\prior d\bt &= \int \beta\sum_{\client=1}^{\maxclient}\clientdatasize\left(\auxfct{\client}{(1)}+C_\client^{(1)}\int\auxfct{\client}{(1)}\prior d\bt\right)\prior d\bt\\
		&= \beta\sum_{\client=1}^{\maxclient}\clientdatasize\left(\int \prior\auxfct{\client}{(1)}d\bt+\int \prior C_\client^{(1)}\left(\int\auxfct{\client}{(1)}\prior d\bt\right)d\bt\right)\\
		&= \beta\sum_{\client=1}^{\maxclient}\clientdatasize\left(\int \prior\auxfct{\client}{(1)}d\bt+ C_\client^{(1)}\int\auxfct{\client}{(1)}\prior d\bt\right)<\infty
	\end{align*}
	Since the loss is robust, the integrals are finite, and since all other terms are finite, we conclude that condition 3 is also satisfied.
	Therefore, for $\ti=1$ the posterior computed at the server satisfies the conditions of \cref{lem:robustness_condition} and is therefore globally bias--robust. 
	It remains to be shown that this holds for all $\ti\in\mathbb{N}$ such that $\ti\le\timax$, i.e. is finite.
	
	We now show by induction that if the posterior at the server is robust for $\ti=\otherclient$, then it will also be robust for $\ti=\otherclient+1$.
	\begin{align*}
		\frac{d}{d\varepsilon}\semicolonapproxloss{\server}{(\otherclient+1)}{\DGP_{\datasize,\varepsilon,z}}&=\GBIparam\sum_{\client=1}^\maxclient\clientdatasize\frac{d}{d\varepsilon}\semicolonloss{\client}{(\otherclient+1)}{\DGP_{\clientdatasize,\varepsilon,z}}\big|_{\varepsilon=0}+\sum_{\client=1}^\maxclient\sum_{\ti=1}^{\otherclient+1}\frac{d}{d\varepsilon}\log\Z[(\ti)]{\client}(\DGP_{\datasize,\varepsilon,z})\big|_{\varepsilon=0}\\
		&=\GBIparam\sum_{\client=1}^\maxclient\clientdatasize\frac{d}{d\varepsilon}\semicolonloss{\client}{(\otherclient+1)}{\DGP_{\clientdatasize,\varepsilon,z}}\big|_{\varepsilon=0}+\sum_{\client=1}^\maxclient\sum_{\ti=1}^{\otherclient+1}\underbrace{\frac{\frac{d}{d\varepsilon}\Z[(\ti)]{\client}(\DGP_{\datasize,\varepsilon,z})|_{\varepsilon=0}}{\Z[(\ti)]{\client}(\DGP_{\datasize,\varepsilon,z})|_{\varepsilon=0}}}_{(1)}
	\end{align*}
	To show the boundedness of this, we need to consider the expansion of (1) above.
	\begin{align*}
		\frac{d}{d\varepsilon}\log\Z[(\ti)]{\client}(\DGP_{\datasize,\varepsilon,z})\big|_{\varepsilon=0} =\frac{\frac{d}{d\varepsilon}\Z[(\ti)]{\client}(\DGP_{\datasize,\varepsilon,z})|_{\varepsilon=0}}{\Z[(\ti)]{\client}(\DGP_{\datasize,\varepsilon,z})|_{\varepsilon=0}}=\cfrac{\int \cfrac{	d}{d\varepsilon}\cfrac{\semicolonq{\server}{(\ti-1)}{\DGP_{\datasize,\varepsilon,z}}\exp\{-\GBIparam\clientdatasize\semicolonloss{\client}{(\ti)}{\DGP_{\clientdatasize,\varepsilon,z}}\}}{\exp\{-\GBIparam\clientdatasize\semicolonloss{\client}{(\ti-1)}{\DGP_{\clientdatasize,\varepsilon,z}}\}}\big|_{\varepsilon=0} d\bt}{\Z[(\ti)]{\client}(\DGP_{\datasize})}
	\end{align*}
	Now we consider the integral in the numerator. Using the chain rule when differentiating under the integral sign:
	\begin{align*}
		\int \frac{	d}{d\varepsilon}&\frac{\semicolonq{\server}{(\ti-1)}{\DGP_{\datasize,\varepsilon,z}}\exp\{-\GBIparam\clientdatasize\semicolonloss{\client}{(\ti)}{\DGP_{\clientdatasize,\varepsilon,z}}\}}{\exp\{-\GBIparam\clientdatasize\semicolonloss{\client}{(\ti-1)}{\DGP_{\clientdatasize,\varepsilon,z}}\}}\Big|_{\varepsilon=0} d\bt \\ 
		= \int & \left[ \frac{\exp\{-\GBIparam\clientdatasize\semicolonloss{\client}{(\ti)}{\DGP_{\clientdatasize}}\}}{\exp\{-\GBIparam\clientdatasize\semicolonloss{\client}{(\ti-1)}{\DGP_{\clientdatasize}}\}}\frac{d}{d\varepsilon}\semicolonq{\server}{(\ti-1)}{\DGP_{\datasize,\varepsilon,z}}\Big|_{\varepsilon=0}\right. \\&\left.+
		\frac{\semicolonq{\server}{(\ti-1)}{\DGP_{\datasize}}\exp\{-\GBIparam\clientdatasize\semicolonloss{\client}{(\ti)}{\DGP_{\clientdatasize}}\}}{\exp\{-\GBIparam\clientdatasize\semicolonloss{\client}{(\ti-1)}{\DGP_{\clientdatasize}}\}}
		\frac{d}{d\varepsilon}(-\GBIparam\clientdatasize\semicolonloss{\client}{(\ti)}{\DGP_{\clientdatasize,\varepsilon,z}})\Big|_{\varepsilon=0}\right.
		\\ &\left.-
		\frac{\semicolonq{\server}{(\ti-1)}{\DGP_{\datasize}}\exp\{-\GBIparam\clientdatasize\semicolonloss{\client}{(\ti)}{\DGP_{\clientdatasize}}\}}{\exp\{-\GBIparam\clientdatasize\semicolonloss{\client}{(\ti-1)}{\DGP_{\clientdatasize}}\}}
		\frac{d}{d\varepsilon}(-\GBIparam\clientdatasize\semicolonloss{\client}{(\ti-1)}{\DGP_{\clientdatasize,\varepsilon,z}})\Big|_{\varepsilon=0} \right]d\bt
	\end{align*}
	Bringing the denominator back, and recalling the definition of $\semicolonq{\client}{(\ti)}{\DGP_{\clientdatasize}}$, then we can simplify.
	\begin{align*}
		\frac{d}{d\varepsilon}\log\Z[(\ti)]{\client}(\DGP_{\datasize,\varepsilon,z})\big|_{\varepsilon=0} = \int & \left[ \cfrac{\left(\cfrac{\exp\{-\GBIparam\clientdatasize\semicolonloss{\client}{(\ti)}{\DGP_{\clientdatasize}}\}}{\exp\{-\GBIparam\clientdatasize\semicolonloss{\client}{(\ti-1)}{\DGP_{\clientdatasize}}\}}\right)}{\Z[(\ti)]{\client}(\DGP_{\datasize})}\frac{d}{d\varepsilon}\semicolonq{\server}{(\ti-1)}{\DGP_{\datasize,\varepsilon,z}}\Big|_{\varepsilon=0}\right.\\&\left.-
		\underbrace{\cfrac{\cfrac{\semicolonq{\server}{(\ti-1)}{\DGP_{\datasize}}\exp\{-\GBIparam\clientdatasize\semicolonloss{\client}{(\ti)}{\DGP_{\clientdatasize}}\}}{\exp\{-\GBIparam\clientdatasize\semicolonloss{\client}{(\ti-1)}{\DGP_{\clientdatasize}}\}}}{\Z[(\ti)]{\client}(\DGP_{\datasize})}}_{=\semicolonq{\client}{(\ti)}{\DGP_{\clientdatasize}}}
		\frac{d}{d\varepsilon}(\GBIparam\clientdatasize\semicolonloss{\client}{(\ti)}{\DGP_{\clientdatasize,\varepsilon,z}})\Big|_{\varepsilon=0}\right.
		\\ &\left. +
		\underbrace{\cfrac{\cfrac{\semicolonq{\server}{(\ti-1)}{\DGP_{\datasize}}\exp\{-\GBIparam\clientdatasize\semicolonloss{\client}{(\ti)}{\DGP_{\clientdatasize}}\}}{\exp\{-\GBIparam\clientdatasize\semicolonloss{\client}{(\ti-1)}{\DGP_{\clientdatasize}}\}}}{\Z[(\ti)]{\client}(\DGP_{\datasize})}}_{=\semicolonq{\client}{(\ti)}{\DGP_{\clientdatasize}}}
		\frac{d}{d\varepsilon}(\GBIparam\clientdatasize\semicolonloss{\client}{(\ti-1)}{\DGP_{\clientdatasize,\varepsilon,z}})\Big|_{\varepsilon=0}\right] d\bt\\
		= \int & \left[ \biggl({\frac{\exp\{-\GBIparam\clientdatasize\semicolonloss{\client}{(\ti)}{\DGP_{\clientdatasize}}\}}{{\Z[(\ti)]{\client}(\DGP_{\datasize})}\exp\{-\GBIparam\clientdatasize\semicolonloss{\client}{(\ti-1)}{\DGP_{\clientdatasize}}\}}}\frac{d}{d\varepsilon}\semicolonq{\server}{(\ti-1)}{\DGP_{\datasize,\varepsilon,z}}\Big|_{\varepsilon=0}\right.\\&\left.-
		\semicolonq{\client}{(\ti)}{\DGP_{\clientdatasize}}
		\frac{d}{d\varepsilon}(\GBIparam\clientdatasize\semicolonloss{\client}{(\ti)}{\DGP_{\clientdatasize,\varepsilon,z}})\Big|_{\varepsilon=0}\right.
		\\ &\left. +
		\semicolonq{\client}{(\ti)}{\DGP_{\clientdatasize}}
		\frac{d}{d\varepsilon}(\GBIparam\clientdatasize\semicolonloss{\client}{(\ti-1)}{\DGP_{\clientdatasize,\varepsilon,z}})\Big|_{\varepsilon=0} \right]d\bt
	\end{align*}
	Consider now the derivative of the previous server posterior with respect to $\varepsilon$ evaluated at 0, which we can write as:
	\begin{align*}
		\frac{d}{d\varepsilon}\semicolonq{\server}{(\ti-1)}{\DGP_{\datasize,\varepsilon,z}}\Big|_{\varepsilon=0} &= \frac{d}{d\varepsilon}\frac{\prior\exp\{-\semicolonapproxloss{\server}{(\ti-1)}{\DGP_{\datasize,\varepsilon,z}}\}}{\Z[(\ti-1)]{\server}(\DGP_{\datasize,\varepsilon,z})}\big|_{\varepsilon=0}\\ 
		&=\prior\left(\frac{\exp\{-\semicolonapproxloss{\server}{(\ti-1)}{\DGP_{\datasize}}\}}{\Z[(\ti-1)]{\server}(\DGP_{\datasize})}\frac{d}{d\varepsilon}(-\semicolonapproxloss{\server}{(\ti-1)}{\DGP_{\datasize,\varepsilon,z}})\Big|_{\varepsilon=0}\right.\\ &-\left.\frac{\exp\{-\semicolonapproxloss{\server}{(\ti-1)}{\DGP_{\datasize}}\}}{(\Z[(\ti-1)]{\server}(\DGP_{\datasize}))^2}\int\prior\frac{d}{d\varepsilon}\exp\{-\semicolonapproxloss{\server}{(\ti-1)}{\DGP_{\datasize,\varepsilon,z}}\}\Big|_{\varepsilon=0}d\bt\right)\\
		&= -\semicolonq{\server}{(\ti-1)}{\DGP_\datasize}\left(\frac{d}{d\varepsilon}\semicolonapproxloss{\server}{(\ti-1)}{\DGP_{\datasize,\varepsilon,z}}\Big|_{\varepsilon=0}-\int\semicolonq{\server}{(\ti-1)}{\DGP_\datasize}\frac{d}{d\varepsilon}\semicolonapproxloss{\server}{(\ti-1)}{\DGP_{\datasize,\varepsilon,z}}\Big|_{\varepsilon=0}d\bt\right)
	\end{align*}
	where we have used the definition of $\semicolonq{\server}{(\ti-1)}{\DGP_\datasize}$ by distributing the common terms outside the brackets and for the second term, since the normalising constant does not depend on $\bt$, we can take one of them inside the integral.
	Furthermore, using the fact that
	\begin{equation*}
		\cfrac{\cfrac{\semicolonq{\server}{(\ti-1)}{\DGP_{\datasize}}\exp\{-\GBIparam\clientdatasize\semicolonloss{\client}{(\ti)}{\DGP_{\clientdatasize}}\}}{\exp\{-\GBIparam\clientdatasize\semicolonloss{\client}{(\ti-1)}{\DGP_{\clientdatasize}}\}}}{\Z[(\ti)]{\client}(\DGP_{\datasize})}=\semicolonq{\client}{(\ti)}{\DGP_{\clientdatasize}}
	\end{equation*}
	then substituting the result for $\frac{d}{d\varepsilon}\semicolonq{\server}{(\ti-1)}{\DGP_{\datasize,\varepsilon,z}}\big|_{\varepsilon=0}$ into $\frac{d}{d\varepsilon}\log\Z[(\ti)]{\client}(\DGP_{\datasize,\varepsilon,z})\big|_{\varepsilon=0}$, we get that:
	\begin{align*}
		\frac{d}{d\varepsilon}\log\Z[(\ti)]{\client}(\DGP_{\datasize,\varepsilon,z})\big|_{\varepsilon=0} =\int \semicolonq{\client}{(\ti)}{\DGP_{\clientdatasize}}\biggl(
		-&\frac{d}{d\varepsilon}\semicolonapproxloss{\server}{(\ti-1)}{\DGP_{\datasize,\varepsilon,z}}\Big|_{\varepsilon=0}+\int\semicolonq{\server}{(\ti-1)}{\DGP_\datasize}\frac{d}{d\varepsilon}\semicolonapproxloss{\server}{(\ti-1)}{\DGP_{\datasize,\varepsilon,z}}\big|_{\varepsilon=0}d\bt \\
		-&\GBIparam\clientdatasize\frac{d}{d\varepsilon}\semicolonloss{\client}{(\ti)}{\DGP_{\clientdatasize,\varepsilon,z}}\big|_{\varepsilon=0}+\GBIparam\clientdatasize\frac{d}{d\varepsilon}\semicolonloss{\client}{(\ti-1)}{\DGP_{\clientdatasize,\varepsilon,z}}\big|_{\varepsilon=0}\biggr)d\bt
	\end{align*}
	Substituting this expression back into the original equation for $\frac{d}{d\varepsilon}\semicolonapproxloss{\server}{(\ti)}{\DGP_{\datasize,\varepsilon,z}}|_{\varepsilon=0}$, taking the supremum over $z\in\dataspace$ of the absolute value of this, and applying Minkowski's inequality, results in the following upper bound.
	\begin{flalign*}
		\sup_{z\in\dataspace}\Big|\frac{d}{d\varepsilon}&\semicolonapproxloss{\server}{(\otherclient+1)}{\DGP_{\datasize,\varepsilon,z}}\Big|&& \\
		\le& \sum_{\client=1}^\maxclient\GBIparam\clientdatasize\sup_{z\in\dataspace}\Big|\frac{d}{d\varepsilon}\semicolonloss{\client}{(\otherclient+1)}{\DGP_{\clientdatasize,\varepsilon,z}}\big|_{\varepsilon=0}\Big|+\sum_{\client=1}^{\maxclient}\sum_{\ti=1}^{\otherclient+1}\left\{\int \semicolonq{\client}{(\ti)}{\DGP_{\clientdatasize}}\left[\sup_{z\in\dataspace}\Big|\frac{d}{d\varepsilon}\semicolonapproxloss{\server}{(\ti-1)}{\DGP_{\datasize,\varepsilon,z}}\big|_{\varepsilon=0}\Big|\right.\right.&&\\
		&+\left.\left.\left(\int_\parameterspace \sup_{z\in\dataspace}\Big|\frac{d}{d\varepsilon}\semicolonapproxloss{\server}{(\ti-1)}{\DGP_{\datasize,\varepsilon,z}}\big|_{\varepsilon=0}\Big|\semicolonq{\server}{(\ti-1)}{\DGP_{\datasize}}d\bt\right)+\GBIparam\clientdatasize\sup_{z\in\dataspace}\Big|\frac{d}{d\varepsilon}\semicolonloss{\client}{(\ti)}{\DGP_{\clientdatasize,\varepsilon,z}}\big|_{\varepsilon=0}\Big|\right.\right.&& \\
		&+\left.\left.\GBIparam\clientdatasize\sup_{z\in\dataspace}\Big|\frac{d}{d\varepsilon}\semicolonloss{\client}{(\ti-1)}{\DGP_{\clientdatasize,\varepsilon,z}}\big|_{\varepsilon=0}\Big|\right]d\bt\right\}&&
	\end{flalign*}
	By the inductive assumption $\forall \ti\in [\otherclient+1]$, $\exists \auxfct{\server}{(\ti-1)}$ such that $\sup_{z\in\dataspace}\big|\frac{d}{d\varepsilon}\semicolonapproxloss{\server}{(\ti-1)}{\DGP_{\datasize,\varepsilon,z}}|_{\varepsilon=0}\big|\le\auxfct{\server}{(\ti-1)}$. 
	Additionally, as, by assumption, the loss is lower bounded and robust $\exists \auxfct{\client}{(\ti)}$ $\forall\ti\in[\otherclient+1]$ such that $\sup_{z\in\dataspace}\big|\semicolonloss{\client}{(\ti)}{\DGP_{\clientdatasize,\varepsilon,z}}|_{\varepsilon=0}\big|\le \auxfct{\client}{(\ti)}$. 
	Furthermore, these functions satisfy the conditions of \cref{lem:robustness_condition}.
	Note also that $\semicolonq{\server}{(\ti-1)}{\DGP_\datasize}\le C_\server^{(\ti-1)}\prior$, since the normalising constant of this distribution is finite and the loss is lower bounded per the inductive assumption, so we get $\semicolonq{\server}{(\ti-1)}{\DGP_\datasize}\le \prior \exp\{-\inf_{\bt\in\parameterspace}\semicolonapproxloss{\server}{(\ti-1)}{\DGP_{\datasize}}\}/\Z[(\ti-1)]{\server}(\DGP_{\datasize})\le C_\server^{(\ti-1)}\prior$, as seen in similar arguments before.
	Utilising this, we conclude:
	\begin{align*}
		\le \sum_{\client=1}^\maxclient\GBIparam\clientdatasize\auxfct{\client}{(\otherclient+1)} + \sum_{\client=1}^\maxclient\sum_{\ti=1}^{\otherclient+1}\int\semicolonq{\client}{(\ti)}{\DGP_{\clientdatasize}}\bigg\{&\auxfct{\server}{(\ti-1)} +\left(\int \auxfct{\server}{(\ti-1)}C_\server^{(\ti-1)}\prior d\bt\right) &&\\
		&+\GBIparam\clientdatasize\auxfct{\client}{(\ti)}+\GBIparam\clientdatasize\auxfct{\client}{(\ti-1)}\bigg\} d\bt := \auxfct{\server}{(\otherclient+1)}&&
	\end{align*}
	We now need to show that this satisfies conditions (2) and (3) of \cref{lem:robustness_condition}. Let's recall what these conditions state:
	\begin{align}
		(2) &= \sup_{\bt\in\parameterspace}\prior\auxfct{\server}{(\otherclient+1)}<\infty\label{eqn:cndt2}\\
		(3) &= \int \prior\auxfct{\server}{(\otherclient+1)}d\bt<\infty\label{eqn:cndt3}
	\end{align}
	We first verify that condition (2) holds.
	\begin{flalign*}
		\sup_{\bt\in\parameterspace}\prior&\auxfct{\server}{(\otherclient+1)} =  \sup_{\bt\in\parameterspace}\prior\left\{\sum_{\client=1}^\maxclient\GBIparam\clientdatasize\auxfct{\client}{(\otherclient+1)}\right.&&\\
		&+ \left. \sum_{\client=1}^\maxclient\sum_{\ti=1}^{\otherclient+1}\int\semicolonq{\client}{(\ti)}{\DGP_{\clientdatasize}}\bigg\{\auxfct{\server}{(\ti-1)} +C_\server^{(\ti-1)}\int \auxfct{\server}{(\ti-1)}\prior d\bt+\GBIparam\clientdatasize\auxfct{\client}{(\ti)}+\GBIparam\clientdatasize\auxfct{\client}{(\ti-1)}\bigg\} d\bt
		\right\}&&\\
		\le&\GBIparam\clientdatasize\sum_{\client=1}^\maxclient \sup_{\bt\in\parameterspace}\prior \auxfct{\client}{(\otherclient+1)} +\sup_{\bt\in\parameterspace}\prior\left\{\sum_{\client=1}^\maxclient\sum_{\ti=1}^{\otherclient+1}\int\semicolonq{\client}{(\ti)}{\DGP_{\clientdatasize}}\left[\auxfct{\server}{(\ti)}\right.\right.&& \\
		&+\left.\left. C_\server^{(\ti-1)}\int\auxfct{\server}{(\ti-1)}\prior d\bt +\GBIparam\clientdatasize\auxfct{\client}{(\ti)}+\GBIparam\clientdatasize\auxfct{\client}{(\ti-1)}
		\right]d\bt\right\}<\infty
	\end{flalign*}
	Since $\GBIparam,\clientdatasize,$ and $\maxclient$ are finite, and any finite linear combination of finite terms is finite, we can easily see that the first part is finite. This follows since $\auxfct{\client}{(\otherclient+1)}$ satisfies condition (2) of \cref{lem:robustness_condition}. Furthermore, since $\prior$ is upper bounded, we now need to verify whether the  inside of the curly brackets is finite. Since this is a finite sum, we need to verify if $\forall \client\in[\maxclient]$ and $\forall\ti\in[\otherclient+1]$, the following holds:
	\begin{equation*}
		\int\semicolonq{\client}{(\ti)}{\DGP_{\clientdatasize}}\left[\auxfct{\server}{(\ti)}+ C_\server^{(\ti-1)}\int\auxfct{\server}{(\ti-1)}\prior d\bt +\GBIparam\clientdatasize\auxfct{\client}{(\ti)}+\GBIparam\clientdatasize\auxfct{\client}{(\ti-1)}
		\right]d\bt<\infty
	\end{equation*}
	By the inductive step, this is true $\forall\ti\in[\otherclient]$, so we need to show that it also holds for $\ti=\otherclient+1$. So,
	\begin{equation*}
		\int\semicolonq{\client}{(\otherclient+1)}{\DGP_{\clientdatasize}}\left[\auxfct{\server}{(\otherclient+1)}+ C_\server^{(\otherclient)}\int\auxfct{\server}{(\otherclient)}\prior d\bt +\GBIparam\clientdatasize\auxfct{\client}{(\otherclient+1)}+\GBIparam\clientdatasize\auxfct{\client}{(\otherclient)}
		\right]d\bt<\infty
	\end{equation*}
	Note that $\semicolonq{\client}{(\otherclient+1)}{\DGP_{\clientdatasize}}$ is equal to $\prior\exp\{-\GBIparam\clientdatasize\semicolonloss{\client}{(\otherclient+1)}{\DGP_{\clientdatasize}}\}\exp\{-\GBIparam\sum_{\arbitraryindex\ne \client}\datasize_\arbitraryindex\semicolonloss{\arbitraryindex}{(\otherclient)}{\DGP_{\datasize_\arbitraryindex}}\}/\Z[(\otherclient+1)]{\client}\Z[(\otherclient)]{\server}$, and since the normalising constants are finite and positive, and the losses are lower bounded, then we can write
	\begin{align*}
		\semicolonq{\client}{(\otherclient+1)}{\DGP_{\clientdatasize}} &\le \prior\exp\{-\GBIparam\clientdatasize\inf_{\bt\in\parameterspace}\semicolonloss{\client}{(\otherclient+1)}{\DGP_{\clientdatasize}}\}\exp\{-\GBIparam\sum_{\arbitraryindex\ne \client}\datasize_\arbitraryindex\inf_{\bt\in\parameterspace}\semicolonloss{\arbitraryindex}{(\otherclient)}{\DGP_{\datasize_\arbitraryindex}}\}/\Z[(\otherclient+1)]{\client}\Z[(\otherclient)]{\server}\\
		&\le C_\client^{(\otherclient+1)}\prior
	\end{align*}
	where $0< C_\client^{(\otherclient+1)}< \infty$. Thereby, we have
	\begin{gather*}
		\int\semicolonq{\client}{(\otherclient+1)}{\DGP_{\clientdatasize}}\left[\auxfct{\server}{(\otherclient+1)}+ C_\server^{(\otherclient)}\int\auxfct{\server}{(\otherclient)}\prior d\bt +\GBIparam\clientdatasize\auxfct{\client}{(\otherclient+1)}+\GBIparam\clientdatasize\auxfct{\client}{(\otherclient)}
		\right]d\bt \\
		\le C_\client^{(\otherclient+1)}\int\prior\auxfct{\server}{(\otherclient+1)}d\bt+C_\client^{(\otherclient+1)} C_\server^{(\otherclient)}\left(\int\auxfct{\server}{(\otherclient)}\prior d\bt\right)\left(\int\prior d\bt\right)\\ +\GBIparam\clientdatasize C_\client^{(\otherclient+1)}\int\prior\auxfct{\client}{(\otherclient+1)}d\bt+\GBIparam\clientdatasize C_\client^{(\otherclient+1)}\int\prior\auxfct{\client}{(\otherclient)}d\bt<\infty
	\end{gather*}
	This expression is finite since the individual integrals must be finite by the definition of the bounding functions $\auxfctname$, as these need to satisfy condition (3) of \cref{lem:robustness_condition} with the prior $\prior$. 
	Hence, we have shown that condition (2) holds for $\auxfct{\server}{(\otherclient+1)}$ and \cref{eqn:cndt2} is indeed finite.
	
	It remains to be shown that condition (3), \cref{eqn:cndt3}, also holds. Using the same expression for $\auxfct{\server}{(\otherclient+1)}$ as before, we have:
	\begin{align*}
		\int &\prior\auxfct{\server}{(\otherclient+1)}d\bt = \int\prior\left\{\sum_{\client=1}^\maxclient\GBIparam\clientdatasize\auxfct{\client}{(\otherclient+1)}\right.&&\\
		&+ \left. \sum_{\client=1}^\maxclient\sum_{\ti=1}^{\otherclient+1}\int\semicolonq{\client}{(\ti)}{\DGP_{\clientdatasize}}\bigg\{\auxfct{\server}{(\ti-1)} +C_\server^{(\ti-1)}\int \auxfct{\server}{(\ti-1)}\prior d\bt+\GBIparam\clientdatasize\auxfct{\client}{(\ti)}+\GBIparam\clientdatasize\auxfct{\client}{(\ti-1)}\bigg\} d\bt
		\right\}d\bt&&
	\end{align*}
	Since, the summations are finite, we can exchange the integrals and sums to get
	\begin{align*}
		=\Biggl(\sum_{\client=1}^\maxclient\GBIparam\clientdatasize \underbrace{\int\prior\auxfct{\client}{(\otherclient+1)}d\bt}_{<\infty\;\forall\client\in[\maxclient]}\Biggr)
		+\underbrace{\int\prior d\bt}_{=1}\Biggl( \sum_{\client=1}^\maxclient\sum_{\ti=1}^{\otherclient+1}\int\semicolonq{\client}{(\ti)}{\DGP_{\clientdatasize}}\bigg\{&\auxfct{\server}{(\ti-1)} +C_\server^{(\ti-1)}\int \auxfct{\server}{(\ti-1)}\prior d\bt\\
		&+\GBIparam\clientdatasize\auxfct{\client}{(\ti)}+\GBIparam\clientdatasize\auxfct{\client}{(\ti-1)}\bigg\} d\bt
		\Biggr)&&
	\end{align*}
	where the first part is finite since for each $\auxfct{\client}{(\otherclient+1)}$, we have by definition that this expression is finite as it needs to satisfy condition (3). 
	Therefore, we need to show that the summation is finite.
	By the inductive step, this is true $\forall\ti\in[\otherclient]$, and we will now show that $\forall\client\in[\maxclient]$ it also is finite for $\ti=\otherclient+1$.
	\begin{align*}
		\int\semicolonq{\client}{(\otherclient+1)}{\DGP_{\clientdatasize}}\bigg\{\auxfct{\server}{(\otherclient)} +C_\server^{(\otherclient)}\int \auxfct{\server}{(\otherclient)}\prior d\bt+\GBIparam\clientdatasize\auxfct{\client}{(\otherclient+1)}+\GBIparam\clientdatasize\auxfct{\client}{(\otherclient)}\bigg\} d\bt
	\end{align*}
	Recall from before that $\semicolonq{\client}{(\otherclient+1)}{\DGP_{\clientdatasize}}\le C_\client^{(\otherclient+1)}\prior$ and hence, it is now immediate to see that by the same argument as in the proof of condition (2), this integral is finite.
	Therefore, condition (3) of \cref{lem:robustness_condition} also holds and \cref{eqn:cndt3} is true.
	
	We conclude that all conditions of \cref{lem:robustness_condition} are satisfied.
	
	Therefore, by induction, as long as we have a robust loss function \citep[in the sense of][]{ghosh2016a, matsubara2022} at the clients, then irregardless of the current iteration by using the weighted KL divergence at the clients and the KL divergence at the server, \FedGVI achieves global bias robustness to outliers.
\end{proof}
Note that when assuming that $\semicolonq{\client}{(\otherclient+1)}{\DGP_{\clientdatasize}}\le C_\client^{(\otherclient+1)}\prior$, or similarly at the server in the uncontaminated case, we have used that the normalising constants in the well specified case are finite. 
This is necessary to hold, since otherwise we will not have valid distributions, and furthermore we can always choose a prior distribution that is bounded above so this will always be finite.
However, this finiteness is not assumed for the normalising constants that are contaminated by the outliers, so the proof is needed to show boundedness of the posterior influence under contamination.

\section{Additional Details on \FedGVI}\label{apx:additional_details}
We present some additional details on \FedGVI in order to aid clarity and contextualise it in the broader literature. 

\subsection{A Note on the Learning Rate Parameter in GBI}
The $\beta$ parameter comes from the power/cold/tempered posteriors of e.g. \citet{gruenwald2012}, where the likelihood in Bayesian posteriors is raised to some power of $\beta>0$. This was originally done to add some robustness to the posterior, down--weighting observations if $\beta< 1$ and up weighting these for $\beta > 1$. Through a known result \citep{jeremias2022} which we highlight in \cref{lem:weighted} in the Appendix, this is equivalent to having a weighted Kullback--Leibler divergence, $\frac1\beta\mathrm{KL}$. This also allows us to define if we want to trust the prior more $\beta<1$ or less $\beta>1$, since up weighting the data means down weighting the prior and vice versa.
\subsection{A Note on \cref{def:robust_loss}}\label{apx:def_notes}
The three conditions combined allow us to say whether the client posterior (or simply the posterior in a global, 1 Client, GBI setting) derived from such a robust loss is provably robust to Huber contamination.

From Condition 1 we are able to bound an infinitesimal change in the loss with the contaminating data point $z$ by some auxiliary function $\gamma$, possibly infinite for some values of $\bt$.

Condition 2 states that the product function, $\gamma(\bt)\pi(\bt)$ has finite uniform norm. 
This ensures that this product under the worst case contamination and the worst parameter $\bt$, is finite and hence it cannot be made arbitrarily bad, which does not hold for the negative log likelihood in general. Alternatively, the prior decays to zero faster than the auxiliary function can diverge to infinity in $\bt$.%

Condition 3 further says that $\gamma(\bt)\pi(\bt)$
is finitely integrable, i.e. that this is in $L^1(\Theta,\mu)$. This, in effect bounds the normalising constant of the contaminated posterior and will ensure that this is finite.

Taking all these conditions together tells us that the product function $\pi(\bt)\gamma(\bt)$ is in $L^1(\Theta,\mu)\cap L^\infty(\Theta,\mu)$, and that it is in fact finite everywhere. These two conditions that are mutually independent so both Condition 2 and 3 need to hold. Equivalently, we require that $\gamma(\bt)$ is bounded and integrable with respect to the prior probability measure $\prior \mu(d\bt)=:\Pi(d\bt)$.

These conditions characterise the notion of robustness we use for \cref{thm:robustness}, with derivation in \cref{apx:provablerobustness}, by considering the worst choice for the contamination $z$ and the parameter $\bt$ with respect to small perturbations of the resulting posterior through $\varepsilon$. 
The influence of the contamination $z$ and parameter $\bt$ on the posterior is defined as $\frac d{d\varepsilon}q_m^{(t)}(\bt;\mathbb{P}_{n_m,\varepsilon, z})|_{\varepsilon=0}$, which is bounded through the conditions. Our result then implies that the posterior is \textit{`globally bias robust'}, i.e. robust to Huber contamination.

\subsection{\FedGVI in the Context of GVI and FL}
When viewing GVI/GBI as an optimisation problem on the space of probability distributions $\PTheta$, Bayesian inference, VI, hierarchical Bayes/VI, all target a single element of this space. 
These methods either target the standard Bayesian posterior explicitly, or the posterior within some variational family with closest Kullback--Leibler distance to the Bayesian one \citep{blei2017,walker2013}. 
Through GBI and GVI we are able to target different elements of a  subspace of $\PTheta$, then simply a single point; in that regard, these approaches `generalise' Bayes. In this paper, `generalised' is inherited from GVI and GBI. We should note that in the \FedGVI setting, GBI and GVI allow us to generalise PVI or \FedAvg to a broader subspace of possible posteriors.
\cref{fig:fedgvi_relation_large} displays \FedGVI in regards to the related GVI and GBI literature as well as the FL literature as in \cref{fig:fedgvi_relation_small}.

\begin{figure}[h]
\centering
\begin{tikzpicture}

\node[align=center] at (0,0) {\FedGVI\\
$L, D,\Q, \maxclient, D_\server$};
\node[align=center] at (-5.5,4.5) {GVI\\
$L, D, \Q,$\\$ \maxclient=1,D_\server=D_{KL}$};
\node[align=center] at (0,4.5) {GBI\\
$L, D_{KL}, \PTheta,$\\$ \maxclient=1,D_\server=D_{KL}$};

\node[align=center] at (5.5,4.5) {\textsc{Bayes}\\
$-\log\likelihoodDist_\bt, D_{KL}, \PTheta,$\\$ \maxclient=1,D_\server=D_{KL}$};

\node[align=center] at (5.5,0) {VI\\
$-\log\likelihoodDist_\bt, D_{KL}, \Q,$\\$ \maxclient=1,D_\server=D_{KL}$};
\node[align=center] at (5.5,-4.5) {PVI\\
$-\log\likelihoodDist_\bt, D_{KL}, \Q$,\\$M,D_\server=D_{KL}$};

\node[align=center] at (-5.5,0) {ERM\\
$L, D=0, \{\delta_\bt\}$, \\$\maxclient=1, D_\server=0$};
\node[align=center] at (-5.5,-4.5) {\FedAvg\\
$L, D=0, \{\delta_\bt\},$\\ $M,
D_\server=0$};

\draw[line width=0.75pt, ->] (-0.5,-.5) -- (-5,-3.8);
\draw[line width=0.75pt, ->] (0.5,-.5) -- (5,-3.8);
\draw[line width=0.75pt, ->] (-0.5,.5) -- (-5,3.8);
\draw[line width=0.75pt, ->] (0.5,.5) -- (5,3.8);
\draw[line width=0.75pt, ->] (1.1,0) -- (4,0);
\draw[line width=0.75pt, ->] (-1.2,0) -- (-4.3,0);
\draw[line width=0.75pt, ->] (0,0.5) -- (0,3.8);
\draw[line width=0.75pt, ->] (-5.5,-3.8) -- (-5.5,-.8);
\draw[line width=0.75pt, ->] (-5.5,3.8) -- (-5.5,.8);
\draw[line width=0.75pt, ->] (5.5,-3.8) -- (5.5,-.8);
\draw[line width=0.75pt, <-] (5.5,3.8) -- (5.5,.8);
\draw[line width=0.75pt, ->] (1.3,4.5) -- (3.7,4.5);
\draw[line width=0.75pt, ->] (-4.6,4.5) -- (-1.3,4.5);
\draw[line width=0.75pt, ->, dashed] (-4.5,3.8) -- (5,0.6);
\end{tikzpicture}
\caption{We illustrate the relationship of \FedGVI---characterised by the loss $L$, the client divergence $D$, the variational family $\Q$, the number of clients $M$, and the divergence at the server $D_\server$---to Generalised Variational/Bayesian Inference (GVI/GBI), Partitioned Variational Inference (PVI), Variational Inference (VI), Federated Averaging (\FedAvg), Empirical Risk Minimisation (ERM), and Bayes.}
\label{fig:fedgvi_relation_large}
\end{figure}

\section{Additional Details on Experiments}\label{apx:additional_exp_details}
For reproducibility we give additional details on the experiments that we have carried out to empirically support our contributions. 
Code to reproduce these can be found at: 

\begin{center}
\url{https://github.com/Terje-M/FedGVI}.
\end{center}

\subsection{Normal--Location Model}
We assume the following well known model for the Data Generating Process and prior, with some unspecified prior mean $\mu_\pi$, in order to allow for prior misspecification:
\begin{gather*}
\bt \sim \N(\mu_\pi,1^2):=\prior\\
x_{1:N}|\bt \overset{\mathrm{iid}}{\sim} \N(\bt, 1^2):=\likelihoodDist(x_i|\bt).
\end{gather*}
The true Data Generating Process under model misspecification through Huber contamination is given by:
\begin{gather*}
	\bt \sim \N(0,0.5^2)\\
	x_{1:N}|\bt \overset{\mathrm{iid}}{\sim} (1-\varepsilon)\N((\bt - 2), 1^2) + \varepsilon \N((\bt+3), 0.5^2)
\end{gather*}
where the second term represents some $\varepsilon$ noise fraction that is added to the data.
Our aim is to find the location of the first term in the above model, while modelling out the noise from the second term.

We consider PVI where the client optimisation is given by $\RoT[\client]{\nll{\cdot}{\bt}}{\kullbackleibler}{\N}$, and the server optimisation step by $\RoT[\server]{\lossapproximation{\server}{(\cdot)}}{\kullbackleibler}{\N}$.
Under the assumption of likelihood misspecification, we consider the following divergences and losses at the clients, while leaving the server optimisation step unchanged: The weighted Kullback--Leibler divergence $\weightedKL$, the Alpha--\renyi divergence $\divergence_{AR}^{(\alpha)}$, the Fisher--Rao divergence $\divergence_{FR}$, the score matching losses $\calL_{SM}^{(w)}$, the beta--divergence based loss $\calL_{B}^{(\beta)}$, and the gamma--divergence based loss $\calL_{G}^{(\gamma)}$. 
Expect for $\RoT[\client]{\calL_{SM}^{(w)}}{\weightedKL}{\N}$, which allows for conjugate updates by \cref{prop:conjugate}, we have to resort to optimisation.
This however does not require Monte--Carlo sampling since the divergence terms and the losses have closed forms under Gaussian distributions, see \citet{jeremias2022} for the remaining losses, \citet{pardo2006} for the KL and Alpha--\renyi divergences and \citet{nielsen2023} for the Fisher--Rao divergence.
For the optimisation, we use the Adam optimiser with a learning rate of $0.001$, leaving all other parameters at their default values.

\paragraph{Explicit Losses and Divergences used}
As mentioned in \cref{sec:method}, we employ a range of different loss functions and divergences throughout the experiments. The main one being the robust generalised cross entropy used in the real world experiments. 
For the synthetics, for instance in \cref{fig:if_pvi} we compare four different losses with two different implementations for the Score--Matching loss. 

For this example, where we only have one sequence of data points $x_{1:N}$ which are assumed to be independent, the losses are:
\begin{enumerate}
    \item The Negative Log Likelihood:
    \begin{equation*}
        \mathcal{L}_{NLL}(x_i,\likelihoodDist_\bt)=-\log \likelihoodDist_\bt(x_i)
    \end{equation*}
    \item The Density--Power Divergence based loss \citep{ghosh2016a, ghosh2016b}:
    \begin{equation*}
        \mathcal{L}_{B}^{(\beta)}(x_i, \likelihoodDist_\bt)= -\frac{1}{\beta}\likelihoodDist_\bt(x_i)^{\beta} + \frac{1}{1+\beta}\int_\dataspace \likelihoodDist_\bt(x)^{\beta+1}\,\mu(dx)
    \end{equation*}
    \item The Gamma divergence based loss \citep{hung2018}:
    \begin{equation*}
        \mathcal{L}_{G}^{(\gamma)}(x_i, \likelihoodDist_\bt)= -\frac{1}{(\gamma-1)}\likelihoodDist_\bt(x_i)^{\gamma-1} \cdot\frac{\gamma}{\left(\int_\dataspace \likelihoodDist_\bt(x)^{\gamma}\,\mu(dx)\right)^{\frac{\gamma-1}{\gamma}}}
    \end{equation*}
    \item The weighted Score Matching Loss \citep{altamirano2023}:
    \begin{equation*}
        \mathcal{L}_{SM}^{(w_\client^{(\ti)})}(x_i, \likelihoodDist_\bt)=||w_m^{(t)}(x_i)^\transpose\nabla_x \log \likelihoodDist_\bt(x_i)||_2^2+ 2\nabla\cdot (w_\client^{(\ti)}(x_i){w_\client^{(\ti)}(x_i)}^\transpose\nabla_x\log\likelihoodDist_\bt(x_i))
    \end{equation*}
    We use two different weight functions $w_\client^{(\ti)}$, where $\mu_{\backslash\client}^{(\ti)}$ is the mean of the cavity distribution:
    \begin{enumerate}
        \item The Squared Exponential Kernel (SE):
        \begin{equation*}
            w_\client^{(\ti)}(x_i) = \GBIparam \exp\left\{-\frac{(x_i-\mu_{\backslash\client}^{(\ti)})^2}{2c^2}
            \right\}
        \end{equation*}
        \item The Inverse Multi-Quadratic Kernel (IMQ):
        \begin{equation*}
            w_\client^{(\ti)}(x_i) = \GBIparam \left(1 + \frac{(x_i-\mu_{\backslash \client}^{(\ti)})^2}{2a c^2}
            \right)^{-a}
        \end{equation*}
    \end{enumerate}
    See \cref{apx:conjugate} for details on what this posterior looks like using this particular loss.
\end{enumerate}
All the above losses have closed form objectives under the expectation with respect to the approximating distribution $\approxDist(\bt)$ and the assumed Gaussian likelihood. Furthermore, the negative log likelihood and the score matching loss admit conjugate updates under the KL divergence.

We also use different divergences, mainly:
\begin{enumerate}
    \item The Kullback--Leibler divergence \citep{kullback1951}:
    \begin{equation*}
        \kl{\priorDist}=\int_\parameterspace\approxDist(\bt)\log{\frac{\approxDist(\bt)}{\prior}} \,\mu(d\bt)
    \end{equation*}
    \item The Reverse KL divergence:
    \begin{equation*}
        \divergence_{RKL}(\approxDist:\priorDist)=\divergence_{\kullbackleibler}(\priorDist:\approxDist)
    \end{equation*}
    \item The Alpha--\renyi divergence:
    \begin{equation*}
        \divergence_{AR}^{(\alpha)}(\approxDist:\priorDist)=\frac{1}{\alpha(1-\alpha)}\log\int_\parameterspace \approxDist(\bt)^\alpha \prior ^{1-\alpha} \,\mu(d\bt)
    \end{equation*}
    \item Weighted Divergences of the form:
    \begin{equation*}
        \frac1\GBIparam \D{\approxDist}{\priorDist}
    \end{equation*}
\end{enumerate}
For Gaussian distributions, these have closed form solutions.
\subsubsection{Influence Functions}
For the influence functions experiment in \cref{fig:if_pvi}, we still assume the same likelihood function, but we have a different data generating process. We generate 99 data points from the following student-t distribution with 4 degrees of freedom, mean 0 and scale 1,:
\begin{align*}
    x_{1:99} &\sim Student\,T(0,1,4)\\
    x_{100} &\sim \delta_y(x), \, y\in\mathbb{R}
\end{align*}
We place Huber contamination on the hypothesis, where we add an additional observation to one of seven clients that is increasingly farther from the true mean, and calculate the posteriors with this outlier, $y$. We have used the losses described previously for the posteriors and the Kullback--Leibler divergence, running all experiments to convergence.
We compare the resulting distributions using the Fisher--Rao divergence \citep{nielsen2023}, which has closed form between two univariate Gaussians $\approxDist(\bt)\sim\mathcal{N}(\mu_\approxDist, \sigma_\approxDist^2)$ and $\prior\sim\mathcal{N}(\mu_\priorDist, \sigma_\priorDist^2)$
\begin{equation*}
\begin{aligned}
    \divergence_{FR}(\approxDist:\priorDist) &= \sqrt{2}\log \left(\frac{1+\Delta(\mu_\approxDist,\sigma_\approxDist:\mu_\priorDist,\sigma_\priorDist)}{1-\Delta(\mu_\approxDist,\sigma_\approxDist:\mu_\priorDist,\sigma_\priorDist)}
    \right)\\
    \Delta(a,b:c,d) &:= \sqrt{\cfrac{(c-a)^2 + (d-b)^2}{(c-a)^2 - (d +b)^2}}, \quad (a,b,c,d)\in\mathbb{R}^4\backslash\{0\}
\end{aligned}
\end{equation*}
\subsection{Logistic Regression with Gaussian Design}
We place a mean field Gaussian distribution over the parameters of linear model $\bt^\transpose\x + b$ by augmenting the data to $\tilde{\x}=[1,\x^\transpose]$ in order to allow for non--normalised data sets.
We assume that the labels, $\y_i\in\{0,1\}$, follow a Bernoulli distribution with sigmoid probabilities:
\begin{equation*}
	\y_i\sim\mathrm{Ber}(\sigma(\bt^\transpose\tilde{\x}_i))
\end{equation*}
where $\sigma(a)=(1+e^{-a})^{-1}$ is the sigmoid function.
This allows us to define the likelihood as follows:
\begin{equation*}
\likelihood[i]=\exp\{\y_i\tilde{\x}_i^\transpose\bt-\psi(\tilde{\x}_i^\transpose\bt)\}
\end{equation*}
where $\psi(a):=\log(1+e^a)$, which gives rise to the sigmoid through $\sigma(a)=\psi'(a)$ \citep{katsevich2024}.
We use this exponential family form above since taking the logarithm for the negative log--likelihood is easily achieved by removing the exponential and allows for slightly faster calculations during the optimisation.
Further, we assume that the prior $\prior=\N(\boldsymbol{0}, \Sigma)$, where $\Sigma$ is fixed but generated through sampling from a Gamma distribution and averaging over the samples. 
More specifically we sampled 100 samples from a Gamma distribution with $\xi_{1:100}\overset{\mathrm{iid}}{\sim}\mathrm{Gamma}(1,1/0.01)$, and use their mean, $\bar{\xi}$, to define $\Sigma :=\bar{\xi}\inverse\mathbf{I}_d$.
This was done to ensure fairness with the Distributed Stein Variational Gradient Descent approach of \citet{kassab2022}, who use an Gaussian inverse Gamma prior, which we for ease of implementation forgo (the results of the experiments show that we easily match their performance, if not surpass it slightly).
For the prediction, we use an approximation to the expectation with respect to the final distribution found, $\q[\server]{(\timax)}\sim\N(\mu_\server,\Sigma_\server)$ where $\Sigma_\server$ is a diagonal matrix, as in \citet{ashman2022}.
\begin{equation*}
\likelihoodDist(\y_{\mathrm{new}}=1|\tilde{\x}_{\mathrm{new}})=
\E_{\q[\server]{(\timax)}}\left[\likelihoodDist(\y_{\mathrm{new}}=1|\bt,\tilde{\x}_{\mathrm{new}})\right]
\approx \sigma\left(\cfrac{\mu_\server^\transpose\tilde{\x}_{\mathrm{new}}}{\sqrt{1+\pi\tilde{\x}_\mathrm{new}^\transpose\Sigma_\server\tilde{\x}_\mathrm{new}}}\right)
\end{equation*}
This allows us to forgo Monte Carlo sampling to evaluate this expectation.

\begin{remark}
Since neither GVI, nor \FedGVI targets the Bayesian posterior under different divergences or loss functions in comparison to vanilla VI, we cannot truly speak of this expectation approximating the Bayesian posterior predictive distribution, however since our aim is to find a distribution that is more valuable to a decision maker, using a \FedGVI posterior should allow us to make more informed predictions depending on what the DM wants to model.
This can be better uncertainty quantification through changing the divergence, and/or better prediction accuracy through changing the loss. 
\end{remark}
\subsubsection{Further Experiments}
In \cref{fig:logreg_fedgvi_giv_comp} we compare the predictive performance of \FedGVI with two clients against that of GVI with only one client.
\begin{figure}[h]
	\centering
	\input{{./figs/log_reg_fedgvi_vi.pgf}}
	\caption{Comparing Logistic Regression with \FedGVI where the data set is split across clients, to GVI where the entire data set is available.}
	\label{fig:logreg_fedgvi_giv_comp}
\end{figure}

\subsection{Bayesian Neural Networks}
The model architecture is a fully connected multi--layer perceptron with RELU activation. 
\subsubsection{\MNIST \citep{lecun1988} Details and Additional Experiments}
For the hyperparameters of the competing methods in the BNNs, we follow \citet{hasan2024} in using SGD with momentum with a learning rate of 0.1 for \FedAvg, and \betaPredBayes, and 0.01 for \FedPA. The architecture for these is a 2 hidden layer fully connected neural network, where each hidden layer has 100 neurons.

For \FedGVI and PVI, we follow the set up of \cite{ashman2022} in using the ADAM optimiser \citep{kingma2015} with a learning rate of 0.0005, leaving all other parameters the default values in \texttt{PyTorch}. Here we use a fully connected NN with 1 hidden layer of 200 neurons. 

The contamination maps all contaminated data points of one class to a single other class. 
In both cases, we carried out mini--batch optimisation.

Since we use different architectures for the BNN experiments for the MNIST data set in \cref{tab:bnn}, we additionally report results for BNNs when we use the same Neural Network architecture and still retain superior performance of \FedGVI, see \cref{tab:bnn_1h}.
We notice that the choosing the implementation with the two hidden layer NN for the competing methods performs better or on an equivalent level (within one standard deviation) of each other, while \FedGVI performs better on the single layer NN.

When examining their convergence behaviour under the different architectures, we further notice that the competing methods perform worse than \FedGVI, and \FedAvg and \FedPA exhibit no stability in their accuracy in the contaminated setting, see 
\cref{fig:bnn_results_architecture_1h,,fig:bnn_results_architecture_2h}. This phenomenon occurs even in the uncontaminated case as reported in \citet{al-shedivat2021}, where we have chosen the optimiser and learning rates as suggested in their paper, and hence we conjecture that contamination further exacerbates this. 

\begin{table}[h]
\caption{Classification accuracy (highest in bold) on uncontaminated test data after training on 10\% contaminated \MNIST data. Here, we compare the results with a fully connected Neural Network with 1 hidden layer of 200 Neurons to the results of a fully connected Neural Network with 2 hidden layers of 100 Neurons each.
We report the best performance across all server iterations.}
\label{tab:bnn_1h}
\vskip 0.1in
\begin{center}
\begin{small}
\begin{sc}
\centering
\begin{tabular}{ccccc}
\toprule
\multicolumn{1}{c}{\multirow{2}{*}{Model}} & \multicolumn{2}{c}{1 Hidden Layer} & \multicolumn{2}{c}{2 Hidden Layers}  \\
 \cmidrule(lr){2-3}  \cmidrule(lr){4-5}
  & 10 Clients & 3 Clients & 10 Clients & 3 Clients\\
\midrule
\FedAvg   & 94.79$\pm$0.43 & 91.76$\pm$0.08 & 96.64$\pm$ 0.07& 96.34 $\pm$ 0.20\\
\FedPA & 94.53$\pm$0.15 & 95.74$\pm$0.08 & 94.25$\pm$ 0.39& 95.31$\pm$ 0.35\\
\betaPredBayes  & 94.96$\pm$0.06 & 96.67$\pm$0.07 & 94.90$\pm$ 0.08& 96.73$\pm$ 0.08\\
PVI    & 95.56$\pm$ 0.18& 96.68$\pm$ 0.07 & 95.68$\pm$0.10 & 97.31$\pm$0.08 \\
\FedGVI $\divergence_{AR}$     & 96.36$\pm$ 0.09& 97.13 $\pm$ 0.13 & 95.78$\pm$0.17 & 97.24$\pm$0.05 \\
\FedGVI $\trueloss_{GCE}$      & 97.06$\pm$ 0.03& 98.04 $\pm$ 0.07& 96.57$\pm$0.04 & \textbf{97.74$\pm$0.11} \\
\FedGVI $\divergence_{AR}$+$\trueloss_{GCE}$    & \textbf{97.50$\pm$ 0.07}& \textbf{98.13$\pm$ 0.08}& \textbf{96.77$\pm$0.10} & \textbf{97.79$\pm$0.10} \\
\midrule
VI (1 Client) & \multicolumn{2}{c}{(96.96$\pm$ 0.17)} & \multicolumn{2}{c}{(90.87$\pm$0.50)}\\
GVI (1 Client) & \multicolumn{2}{c}{(\textbf{98.13$\pm$ 0.07})}& \multicolumn{2}{c}{(\textbf{97.56$\pm$0.05})}\\
\bottomrule
\end{tabular}
\end{sc}
\end{small}
\end{center}
\end{table}

\begin{table}[h]
\caption{Classification accuracy (highest in bold for each learning rate) on uncontaminated test data after training on 10\% contaminated \MNIST data split across 3 clients. Here, we compare the results of different initialisations of \FedGVI with the Alpha--\renyi divergence and generalised cross entropy loss achieved when optimising the posteriors with different learning rates of ADAM. We report the best performance after all server iterations.}
\label{tab:lr}
\vskip 0.1in
\centering
\begin{sc}
\begin{small}
\begin{tabular}{ccccccc}
\toprule
\multicolumn{1}{c}{\multirow{2}{*}{Model}} & \multicolumn{6}{c}{Learning Rate $\eta$}  \\
 \cmidrule(lr){2-7}  
& $1e-2$& $5e-3$&$1e-3$ &$5e-4$  &$1e-4$ &$5e-3$\\
\midrule
PVI & 96.34$\pm$0.16 & 96.50$\pm$0.18 & 96.72$\pm$0.06 & {96.76$\pm$0.07} & 96.01$\pm$0.05 & 95.39$\pm$0.06  \\
FedGVI $D_{AR}^{(2.5)}$   & 96.84$\pm$0.12 & 96.91$\pm$0.02 & 97.16$\pm$0.04 & {97.18$\pm$0.03} & 96.51$\pm$0.19 & 95.65$\pm$0.03  \\
FedGVI $\mathcal{L}_{GCE}^{(0.8)}$ & 98.22$\pm$0.07 & \textbf{98.30$\pm$0.03} & 98.15$\pm$0.01 & \textbf{98.08$\pm$0.08} & 97.07$\pm$0.06 & 95.84$\pm$0.04  \\
FedGVI  $D_{AR}^{(2.5)}$ + $\mathcal{L}_{GCE}^{(0.8)}$ & \textbf{98.31$\pm$0.10} & 98.24$\pm$0.07 & \textbf{98.23$\pm$0.06} & 98.06$\pm$0.09 & \textbf{97.50$\pm$0.01} & \textbf{96.35$\pm$0.08}  \\
\bottomrule
\end{tabular}
\end{small}
\end{sc}
\end{table}

In \cref{tab:lr} we compare different learning rates of ADAM with different initialisations of \FedGVI showing that we in fact underperform with the learning rate selected for the experiments in \cref{fig:ablation_study,,fig:bnn_results}, and \cref{tab:bnn} for the outperforming methods of \FedGVI.

In \cref{tab:perturb} we further investigate the stability of \FedGVI posteriors when slightly varying the robustness parameter. This shows no significant variations in the accuracy achieved by \FedGVI when slightly perturbing $\delta=0.8$.

\begin{table}[h]
\caption{We fix $\alpha=2.5$ in the Alpha--\renyi divergence, and vary $\delta$, of the generalised cross entropy loss of \citet{zhang2018}, around $0.8$. We  report accuracies on uncontaminated test data after training on 10\% contaminated MNIST data split across 5 clients. These accuracies vary very little demonstrating stability in the \FedGVI posterior at slight perturbations in the loss parameter.}
\label{tab:perturb}
\vskip 0.1in
\centering
\begin{sc}
\scriptsize
\begin{tabular}{cccccccc}
\toprule
\multicolumn{1}{c}{\multirow{2}{*}{Model}} & \multicolumn{7}{c}{$\delta$ of $\mathcal{L}_{GCE}^{(\delta)}$}  \\
 \cmidrule(lr){2-8}  
& 0.75& 0.775&0.79 &0.8 &0.81 &0.825 & 8.85\\
\midrule
FedGVI  $D_{AR}^{(2.5)}$ + $\mathcal{L}_{GCE}^{(\delta)}$ & 98.05$\pm$0.02 & 98.14$\pm$0.09 & 98.04$\pm$0.09 & 98.06$\pm$0.06 & 98.06$\pm$0.05 & 97.99$\pm$0.07 & 97.98$\pm$0.03   \\
\bottomrule
\end{tabular}
\end{sc}
\vskip 0.1in
\end{table}

\begin{figure}[h]
\centering
\subfigure{\input{{./figs_new/bnn_results_val_acc_10_clients_1h.pgf}}}\hfill
\subfigure{\input{{./figs_new/bnn_results_val_acc_3_clients_1h.pgf}}}\\
\caption{Accuracy on Fully Connected Neural Networks with \textbf{1 Hidden Layer}. We demonstrate convergence of the different approaches examined in the first multicolumn of \cref{tab:bnn_1h}. The models are trained on 10\% label--contaminated data, and prediction accuracy is assessed on uncontaminated test data.}
\label{fig:bnn_results_architecture_1h}
\end{figure}

\begin{figure}[h!]
\centering
\subfigure{\input{{./figs_new/bnn_results_val_acc_10_clients_2h.pgf}}}\hfill
\subfigure{\input{{./figs_new/bnn_results_val_acc_3_clients_2h.pgf}}}\\
\caption{Accuracy on Fully Connected Neural Networks with \textbf{2 Hidden Layers}. We demonstrate convergence of the different approaches examined in the second multicolumn of \cref{tab:bnn_1h}. The models are trained on 10\% label--contaminated data, and prediction accuracy is assessed on uncontaminated test data.}
\label{fig:bnn_results_architecture_2h}
\end{figure}

We also want to highlight that by not carefully selecting the hyperparameters of \FedGVI, as well as the learning rate, and keeping these constant across the BNN experiments, we have shown that you do not require extensive knowledge to adapt existing PVI approaches to \FedGVI and outperform. For instance, \FedGVI performs even better for the robust losses at a higher learning rate, but we have shown in \cref{tab:bnn} that it still outperforms even when not carefully selecting a learning rate.
Furthermore, choosing $\delta=0.6$ and $\alpha=2.5$ would have performed better when varying only the robustness parameters of FedGVI, as seen in \cref{fig:ablation_study}.

Lastly, \cref{fig:wall_clock_time_fedgvi_pvi} shows that even when the loss is not available in a conjugate, closed form way, that \FedGVI still incurs no significant computational overhead through choosing the Alpha--\renyi divergence or the generalised cross entropy loss.

\begin{figure}
	\centering
    \input{{./figs_rebuttal/bnn_results_val_acc_time_10_clients.pgf}}
	\caption{Wall--clock times for \FedGVI iterations per client. We plot the classification error against the computation time taken per client during each server iteration, where we train 5 Clients on 10\% contaminated MNIST data. If we do not state the loss or divergence in the legend, it is $\mathcal{L}_{NLL}$ and $D_{KL}$ respectively. Here, \FedGVI outperforms PVI in terms of accuracy while having similar runtimes.}
	\label{fig:wall_clock_time_fedgvi_pvi}
\end{figure}

\newpage
\subsubsection{\FashionMNIST \citep{xiao2017} Details}
We vary the amount of contamination from 0.0, 0.1, 0.2, 0.4, where the contamination is random and assigns each contaminated data point a different class uniformly at random. The model architecture, prior, learning rate, and optimiser remain unchanged and are as before.

\FedGVI uses the Alpha--\renyi divergence with an alpha value of 2.5 for all, and the robust generalised cross entropy loss, where $\delta=0.0$ indicates the negative log likelihood. \cref{tab:bnn_fmnist_max_rebuttal} specifies \cref{tab:contamination} to a higher precision but the results are identical.

\begin{table}[h]
\caption{Classification accuracy (highest in bold) on uncontaminated test data after training on different amounts of contaminated \FashionMNIST data. Each Method has data split homogeneously across 3 Clients. We report the best performance during all server iterations for each method.}
\label{tab:bnn_fmnist_max_rebuttal}
\vskip 0.1in
\begin{center}
\begin{small}
\begin{sc}
\centering
\begin{tabular}{ccccc}
\toprule
\multicolumn{1}{c}{\multirow{2}{*}{Model}} & \multicolumn{4}{c}{Contamination}  \\
 \cmidrule(lr){2-5}  
  & 0\% & 10\% & 20\% & 40\% \\
\midrule
\FedAvg & 85.72$\pm$0.52 & 78.99$\pm$1.90 & 71.16$\pm$1.53 & 48.97$\pm$6.51\\
\FedPA & 88.08$\pm$0.30 & 87.36$\pm$0.15 & 86.54$\pm$0.16 & 85.36$\pm$0.53\\
\betaPredBayes & 87.58$\pm$0.13 & 87.20$\pm$0.12 & 86.82$\pm$0.07 & 85.77$\pm$0.10\\
PVI    & 86.21$\pm$0.21 & 85.14$\pm$0.13 & 84.36$\pm$0.12 & 82.81$\pm$0.05\\
\FedGVI $\delta=0.0$     & 87.12$\pm$0.12 & 86.23$\pm$0.15 & 85.56$\pm$0.11 & 83.78$\pm$0.09\\
\FedGVI $\delta=0.4$ & {88.73$\pm$0.21} & \textbf{88.60$\pm$0.09} & 87.01$\pm$0.37 & 78.14$\pm$0.39\\
\FedGVI $\delta=0.5$ & \textbf{89.02$\pm$0.18} & {88.57$\pm$0.16} & \textbf{88.39$\pm$0.21} & 85.06$\pm$0.67\\
\FedGVI $\delta=0.8$      & 88.59$\pm$0.03 & 88.44$\pm$0.07 & 87.95$\pm$0.04 & \textbf{87.21$\pm$0.10}\\
\FedGVI $\delta=1.0$    & 88.09$\pm$0.08 & 87.83$\pm$0.14 & 87.54$\pm$0.15 & 85.97$\pm$0.27\\
\bottomrule
\end{tabular}
\end{sc}
\end{small}
\end{center}
\vskip -0.1in
\end{table}

\end{document}